\documentclass[11pt]{article}

\usepackage[utf8]{inputenc}
\usepackage{microtype}
\usepackage{amsmath}
\usepackage{amssymb}
\usepackage{amsthm}
\usepackage{bm}
\usepackage{dsfont}
\usepackage{authblk}
\usepackage{fullpage}
\usepackage{comment}
\usepackage{nicefrac}
\usepackage{tikz}
\usetikzlibrary{positioning}
\usepackage{mathtools}
\usepackage{bbm}
\usepackage{float}
\usepackage{array,booktabs}
\usepackage[
    backend=biber,
    style=alphabetic,
    maxcitenames=10,
    maxbibnames=99,
    natbib=true,
    url=false,
    doi=false, 
    isbn=false]{biblatex}
\usepackage{hyperref}
\usepackage{mleftright}
\usepackage{thm-restate}
\usepackage[capitalize,noabbrev]{cleveref}
\usepackage[shortlabels]{enumitem}

\newcounter{qst}
\crefname{qst}{Question}{Questions}

\usepackage[ruled,vlined,linesnumbered,noend]{algorithm2e}
\makeatletter
\patchcmd\algocf@Vline{\vrule}{\vrule \kern-0.4pt}{}{}
\patchcmd\algocf@Vsline{\vrule}{\vrule \kern-0.4pt}{}{}
\makeatother

\SetKwComment{Hline}{}{\vspace{-3mm}\textcolor{gray}{\hrule}\vspace{1mm}}
\SetKwComment{HlineWd}{}{\vspace{-3mm}\textcolor{gray}{\hrule width 7.5cm}\vspace{1mm}}
\definecolor{darkgrey}{gray}{0.3}
\definecolor{commentcolor}{gray}{0.5}
\SetKwComment{Comment}{\color{commentcolor}[$\triangleright$\ }{}
\SetCommentSty{}
\SetNlSty{}{\color{darkgrey}}{}
\setlength{\algomargin}{4mm}
\SetKwProg{Fn}{function}{}{}
\SetKwProg{Subr}{subroutine}{}{}
\crefalias{AlgoLine}{line}%
\crefname{algocf}{Algorithm}{Algorithms}

\makeatletter
\let\cref@old@stepcounter\stepcounter
\def\stepcounter#1{%
  \cref@old@stepcounter{#1}%
  \cref@constructprefix{#1}{\cref@result}%
  \@ifundefined{cref@#1@alias}%
    {\def\@tempa{#1}}%
    {\def\@tempa{\csname cref@#1@alias\endcsname}}%
  \protected@edef\cref@currentlabel{%
    [\@tempa][\arabic{#1}][\cref@result]%
    \csname p@#1\endcsname\csname the#1\endcsname}}
\makeatother

\newcommand{\declarecolor}[2]{\definecolor{#1}{RGB}{#2}\expandafter\newcommand\csname #1\endcsname[1]{\textcolor{#1}{##1}}}
\declarecolor{White}{255, 255, 255}
\declarecolor{Black}{0, 0, 0}
\declarecolor{Maroon}{128, 0, 0}
\declarecolor{Coral}{255, 127, 80}
\declarecolor{Red}{182, 21, 21}
\declarecolor{LimeGreen}{50, 205, 50}
\declarecolor{DarkGreen}{0, 100, 0}
\declarecolor{Navy}{0, 0, 128}
\hypersetup{
	colorlinks=true,
	pdfpagemode=UseNone,
	citecolor=DarkGreen,
	linkcolor=Navy,
	urlcolor=Navy,
	pdfstartview=FitW
}

\theoremstyle{plain}
\newtheorem{theorem}{Theorem}[section]
\newtheorem{lemma}[theorem]{Lemma}
\newtheorem{corollary}[theorem]{Corollary}
\newtheorem{proposition}[theorem]{Proposition}
\newtheorem{fact}[theorem]{Fact}

\newtheorem{claim}[theorem]{Claim}

\newtheorem{property}[theorem]{Property}

\theoremstyle{definition}
\newtheorem{definition}[theorem]{Definition}

\theoremstyle{remark}

\newcommand*{\N}{{\mathbb{N}}}
\newcommand*{\Z}{{\mathbb{Z}}}
\let\R\relax
\newcommand*{\R}{{\mathbb{R}}}

\newcommand{\defeq}{\coloneqq}

\let\poly\relax
\let\polylog\relax
\let\co\relax

\DeclareMathOperator{\poly}{poly}
\DeclareMathOperator{\reg}{Reg}
\DeclareMathOperator{\co}{co}
\DeclareMathOperator{\var}{Var}
\DeclareMathOperator{\polylog}{polylog}

\newcommand{\tree}{\mathcal{T}}
\NewDocumentCommand{\treeset}{o}{\mathbb{T}\IfNoValueF{#1}{_{#1}}}
\newcommand{\intphi}{\Phi^\mathrm{int}}
\newcommand{\obsut}{\textsc{ObserveUtility}}
\newcommand{\nextstr}{\textsc{NextStrategy}}
\newcommand{\fp}{\textsc{StationDistrib}}
\newcommand{\slomwu}{\textsc{SL-OMWU}}
\newcommand{\bmomwu}{\textsc{BM-OMWU}}
\newcommand{\intreg}{\mathrm{IntReg}}
\newcommand{\swapreg}{\mathrm{SwapReg}}

\makeatletter
\AtBeginDocument{%
  \@ifpackageloaded{amsmath}{%
    \newcommand*\patchAmsMathEnvironmentForLineno[1]{%
      \expandafter\let\csname old#1\expandafter\endcsname\csname #1\endcsname
      \expandafter\let\csname oldend#1\expandafter\endcsname\csname end#1\endcsname
      \renewenvironment{#1}%
                       {\linenomath\csname old#1\endcsname}%
                       {\csname oldend#1\endcsname\endlinenomath}%
    }%
    \newcommand*\patchBothAmsMathEnvironmentsForLineno[1]{%
      \patchAmsMathEnvironmentForLineno{#1}%
      \patchAmsMathEnvironmentForLineno{#1*}%
    }%
    \patchBothAmsMathEnvironmentsForLineno{equation}%
    \patchBothAmsMathEnvironmentsForLineno{align}%
    \patchBothAmsMathEnvironmentsForLineno{flalign}%
    \patchBothAmsMathEnvironmentsForLineno{alignat}%
    \patchBothAmsMathEnvironmentsForLineno{gather}%
    \patchBothAmsMathEnvironmentsForLineno{multline}%
  }{}
}
\makeatother

\renewcommand{\vec}[1]{\bm{#1}}
\newcommand{\mat}[1]{\mathbf{#1}}

\newcommand{\range}[1]{[\![#1]\!]}
\newcommand{\edge}[2]{#1\!\to\!#2}
\newcommand{\bigL}{\vec{\mathcal{L}}}

\DeclarePairedDelimiterX{\card}[1]{\lvert}{\rvert}{#1}
\DeclarePairedDelimiterX{\abs}[1]{\lvert}{\rvert}{#1}
\DeclarePairedDelimiterX{\norm}[1]{\lVert}{\rVert}{#1}
\DeclarePairedDelimiterX{\tuple}[1]{\lparen}{\rparen}{#1}
\DeclarePairedDelimiterX{\parens}[1]{\lparen}{\rparen}{#1}
\DeclarePairedDelimiterX{\brackets}[1]{\lbrack}{\rbrack}{#1}
\DeclarePairedDelimiterX{\set}[1]\{\}{#1}
\let\Pr\relax
\DeclarePairedDelimiterXPP{\Pr}[1]{\mathbb{P}}[]{}{#1}
\DeclarePairedDelimiterXPP{\PrX}[2]{\mathbb{P}_{#1}}[]{}{#2}
\DeclarePairedDelimiterXPP{\Ex}[1]{\mathbb{E}}[]{}{#1}
\DeclarePairedDelimiterXPP{\ExX}[2]{\mathbb{E}_{#1}}[]{}{#2}

\usepackage{lineno}

\usetikzlibrary{fit,shapes.misc,arrows.meta}
\makeatletter
\tikzset{
  fitting node/.style={
    inner sep=0pt,
    fill=none,
    draw=none,
    reset transform,
    fit={(\pgf@pathminx,\pgf@pathminy) (\pgf@pathmaxx,\pgf@pathmaxy)}
  },
  reset transform/.code={\pgftransformreset}
}
\tikzset{cross/.style={path picture={
  \draw[black]
(path picture bounding box.south east) -- (path picture bounding box.north west) (path picture bounding box.south west) -- (path picture bounding box.north east);
}}}
\tikzstyle{ox}=[semithick,draw=black,circle,cross,inner sep=1.2mm]
\makeatother


\newcommand{\nc}{\newcommand}
\newcount\Comments
\Comments=1
\nc\noah[1]{\ifnum\Comments=1 {\textcolor{purple}{[ng: #1]}}\fi}
\nc\maxfish[1]{\ifnum\Comments=1{\textcolor{blue}{[mf: #1]}}\fi}
\nc\costis[1]{\ifnum\Comments=1{\textcolor{brown}{[cd: #1]}}\fi}
\nc\costiss[1]{\textcolor{red}{#1}}

\nc{\Opthedge}{OMWU\xspace}

\nc{\DMO}{\DeclareMathOperator}
\nc\old[1]{\textcolor{brown}{[old: #1]}}
\nc{\BR}{\mathbb{R}}
\nc{\BC}{\mathbb{C}}
\DMO{\Bin}{Bin}
\nc{\BN}{\mathbb{N}}
\nc{\distrs}[1]{\Delta({#1})}
\nc{\BZ}{\mathbb{Z}}
\nc{\ep}{\epsilon}
\nc{\ra}{\rightarrow}
\nc{\st}{\star}
\nc{\Reg}[2]{\REG_{{#1},{#2}}}
\nc{\til}{\tilde}
\nc{\kld}[2]{\KL({#1};{#2})}
\nc{\chisq}[2]{\chi^2({#1};{#2})}
\DMO{\POLYLOG}{polylog}

\nc{\matx}[1]{\left(\begin{matrix}#1\end{matrix}\right)}
\DMO{\VAR}{Var}
\DMO{\COV}{Cov}
\nc{\Var}[2]{\VAR_{{#1}}\left({#2}\right)}
\nc{\Cov}[3]{\COV_{{#1}}\left({#2},{#3}\right)}
\DMO{\DD}{D}
\nc{\fd}[2]{\DD_{#1}{#2}}
\nc{\fds}[3]{\left(\fd{#1}{#2}\right)\^{#3}}
\nc{\fdc}[2]{\DD^\circ_{#1}{#2}}
\nc{\fdcs}[3]{\left(\fdc{#1}{#2}\right)\^{#3}}
\nc{\shf}[2]{\EEE_{#1}{#2}}
\nc{\shfs}[3]{\left(\shf{#1}{#2}\right)\^{#3}}
\nc{\normst}[2]{\left\| {#2} \right\|_{#1}^\st}
\renewcommand{\^}[1]{^{(#1)}}

\nc{\grad}{\nabla}
\nc{\lng}{\langle}
\nc{\rng}{\rangle}
\nc{\bbone}{\mathbf{1}}
\nc{\bbzero}{\mathbf{0}}
\nc{\MD}{\mathcal{D}}
\nc{\MM}{\mathcal{M}}
\nc{\MZ}{\mathcal{Z}}
\nc{\MU}{\mathcal{U}}
\nc{\MC}{\mathcal{C}}
\nc{\MT}{\mathbb{T}^{n}}
\nc{\MS}{\mathcal{S}}
\nc{\MX}{\mathcal{X}}
\nc{\MY}{\mathcal{Y}}
\nc{\MB}{\mathcal{B}}
\nc{\MJ}{\mathcal{J}}
\nc{\MF}{\mathcal{F}}
\nc{\MG}{\mathcal{G}}
\nc{\MR}{\mathcal{R}}
\nc{\ML}{\mathcal{L}}
\nc{\MQ}{\mathcal{Q}}
\newcommand{\paren}[1]{\left({#1}\right)}

\nc{\ba}{\mathbf{A}}
\nc{\bx}{\mathbf{x}}
\nc{\by}{\mathbf{y}}
\nc{\bz}{\mathbf{z}}
\nc{\bs}{\mathbf{s}}
\nc{\bt}{\mathbf{t}}
\nc{\br}{\mathbf{r}}

\nc{\ME}{\mathcal{E}}
\DMO{\View}{View}
\DMO{\KL}{KL}
\nc{\MW}{\mathcal{W}}
\nc{\CS}{\mathscr{S}}
\nc{\CI}{\mathscr{I}}
\nc{\CQ}{\mathscr{Q}}
\nc{\CL}{\mathscr{L}}
\nc{\CM}{\mathscr{M}}
\nc{\CG}{\mathscr{G}}
\nc{\CR}{\mathscr{R}}

\nc{\wh}{\widehat}

\newcommand{\ps}[1]{\left[{#1}\right]}
\newcommand{\bigcard}[1]{\left|#1\right|}

\nc{\BM}{BM\xspace}
\nc{\ALG}{\texttt{ALG}}
\nc{\MCT}{{\rm MCT}}

\nc{\matrixLL}{\mat{L}}
\nc{\vectorecks}{\vec{x}}
\nc{\vectorLL}{\vec{\ell}}
\nc{\matrixKYU}{\mat{Q}}
\nc{\rowdot}{\cdot}

\bibliography{main}

\title{Near-Optimal No-Regret Learning for Correlated Equilibria in Multi-Player General-Sum Games}
\author{
    Ioannis Anagnostides$^\dagger$ \quad
    Constantinos Daskalakis$^\ddagger$ \quad
    Gabriele Farina$^\dagger$ \\
    Maxwell Fishelson$^\ddagger$ \quad
    Noah Golowich$^\ddagger$ \quad 
    Tuomas Sandholm$^{\dagger,\S,\P,\#}$\\\vspace{6mm}
    $^\dagger$\ Carnegie Mellon University, Computer Science Department \\
    $^\ddagger$\ MIT CSAIL\\
    $^\S$\ Strategy Robot, Inc.\\
    $^\P$\ Optimized Markets, Inc.\\
    $^\#$\ Strategic Machine, Inc.\\\vspace{2mm}
    {\small\texttt{\{ianagnos,gfarina,sandholm\}@cs.cmu.edu}, \quad \texttt{\{costis,maxfish,nzg\}@csail.mit.edu}}
}

\date{November 2021}

\begin{document}

\maketitle

\pagenumbering{gobble}

\begin{abstract}
    Recently, Daskalakis, Fishelson, and Golowich (\cite{Daskalakis21:Near} NeurIPS\,`21) showed that if all agents in a multi-player general-sum normal-form game employ Optimistic Multiplicative Weights Update (OMWU), the external regret of every player is  $O(\polylog(T))$ after $T$ repetitions of the game. In this paper we extend their result from external regret to internal and swap regret, thereby establishing uncoupled learning dynamics that converge to an approximate correlated equilibrium at the rate of $\widetilde{O}\left( T^{-1} \right)$. This substantially improves over the prior best rate of convergence for correlated equilibria of $O(T^{-3/4})$ due to Chen and Peng (\cite{Chen20:Hedging} NeurIPS\,`20), and it is optimal up to polylogarithmic factors in $T$. 
    
    To obtain these results, we develop new techniques for establishing higher-order smoothness for learning dynamics involving fixed point operations. Specifically, we first establish that the no-internal-regret learning dynamics of Stoltz and Lugosi (\cite{Stoltz05:Internal} Mach Learn\,`05) are equivalently simulated by no-external-regret dynamics on a combinatorial space. This allows us to trade the computation of the stationary distribution on a polynomial-sized Markov chain for a (much more well-behaved) linear transformation on an exponential-sized set, enabling us to leverage similar techniques as \cite{Daskalakis21:Near} to near-optimally bound the internal regret. 
    
    Moreover, we establish an $O(\polylog(T))$ no-swap-regret bound for the classic algorithm of Blum and Mansour (\cite{Blum07:From} JMLR\,`07). We do so by introducing a technique based on the Cauchy Integral Formula that circumvents the more limited combinatorial arguments of \cite{Daskalakis21:Near}. In addition to shedding clarity on the near-optimal regret guarantees of \cite{Blum07:From}, our arguments provide insights into the various ways in which the techniques by \cite{Daskalakis21:Near} can be extended and leveraged in the analysis of more involved learning algorithms.
\end{abstract}

\clearpage
\pagenumbering{arabic}

\section{Introduction}

Online learning and game theory share an intricately connected history tracing back to Robinson's analysis of \emph{fictitious play} \citep{Robinson51:iterative}, as well as Blackwell's seminal \emph{approachability theorem} \citep{Blackwell56:analog}, which served as the advent of the modern \emph{no-regret} framework \citep{Hart00:Simple,Abernethy11:Blackwell}. These connections have since led to the discovery of broad learning paradigms such as Online Mirror Descent, encompassing algorithms such as the celebrated Multiplicative Weights Update (MWU) \citep{Littlestone94:Weighted}. Importantly, \emph{uncoupled} learning dynamics overcome the often unreasonable assumption that players have perfect knowledge of the game, while they have also emerged as a central component in several recent landmark results in computational game solving \citep{Brown17:Superhuman,Moravvcik17:DeepStack}. Moreover, another compelling feature of the no-regret framework is that it guarantees robustness even against \emph{adversarial} opponents. Indeed, there are broad families of learning paradigms \citep{Shalev-Shwartz12:Online} that accumulate $O(\sqrt{T})$ regret after $T$ iterations, a barrier which is known to be insuperable in fully adversarial environments \citep{Cesa-Bianchi06:Prediction}. However, this begs the question: \emph{What if players do not face adversarial losses, but instead face \emph{predictable} losses?}

This question was first addressed by \citet{Daskalakis11:Near}. They devised a decentralized variant of Nesterov's \emph{excessive gap technique} \citep{Nesterov05:Smooth}, enjoying a near-optimal rate of convergence of $O(\log T/ T)$ to Nash equilibrium when employed by \emph{both} players in a two-player zero-sum normal-form game. (For brevity we will henceforth omit the specification ``normal-form'' when referring to games.) At the same time, their algorithm also guarantees optimal (external) regret under worst-case losses.
Subsequently, \citet{Rakhlin13:Optimization,Rakhlin13:Online} introduced an \emph{optimistic} variant of Online Mirror Descent---considerably simpler than the algorithm proposed in \citep{Daskalakis11:Near}---achieving optimal convergence rate to Nash equilibrium, again in zero-sum games. Then \citet{Syrgkanis15:Fast} identified a broad class of \emph{predictive} learning algorithms that induce no-regret learning dynamics in multi-player general-sum games that guarantee $O(T^{1/4})$ regret if followed by each player. 
This line of work culminated in a recent advancement by \citet{Daskalakis21:Near}, where it was shown that, when all players employ an optimistic variant of MWU, each player incurs only $O(\polylog(T))$ regret. In turn, this implies that the average product distribution of play induced by optimistic MWU is an $\widetilde{O}\left( T^{-1} \right)$-approximate\footnote{As usual, we use the notation $\widetilde{O}(\cdot)$ to suppress polylogarithmic factors of $T$. Also note that for simplicity, and with a slight abuse of notation, in our introductory section we use the big-$O$ notation to hide game-specific parameters.} \emph{coarse correlated equilibrium} (\emph{CCE}) after $T$ repetitions of the game.

Yet, it is well-understood that a CCE prescribes a rather weak notion of equilibrium~\cite{Gordon08:No}. An arguably more compelling solution concept\footnote{In general-sum multi-player games it is typical to search for solution concepts more permissive than \emph{Nash equilibria} \citep{Nash50:Equilibrium} as the latter is known to be computationally intractable under reasonable assumptions \citep{Daskalakis09:Complexity,Chen09:Settling,Etessami07:Complexity, Rubinstein16:Settling,Babichenko17:Communication}.} in multi-player general-sum games is that of \emph{correlated equilibria} (\emph{CE}) \citep{Aumann74:Subjectivity}. Like CCE, it is known that CE can be computed through uncoupled learning dynamics. Thus, our paper is concerned with the following central question:\\[3mm]
\begin{tikzpicture}
    \node[text width=15cm,align=center,inner sep=0pt] at (0,0) {\textit{Are there learning dynamics that, if followed by all players in a multi-player general-sum game, guarantee convergence with rate $\widetilde{O}\left( T^{-1} \right)$ to a correlated equilibrium?}};
    \node at (8.0cm,0) {\refstepcounter{qst}\label{qst:1}($\spadesuit$)};
    \node at (-8cm, 0) {};
\end{tikzpicture}
The main contribution of our paper is to answer this question in the affirmative.
Unlike in the case of CCE, typical no-external-regret dynamics such as MWU are known not to guarantee convergence to CE. Instead, specialized \emph{no-internal-regret} or \emph{no-swap-regret} algorithms have to be employed to converge to CE \citep{Cesa-Bianchi06:Prediction}. Compared to no-external-regret dynamics, these learning dynamics are considerably more complex in that all known algorithms require the computation of the stationary distribution of a certain Markov chain at every iteration.
%
Our main primary technical contribution is to develop techniques to overcome these additional challenges. 


\subsection{Contributions}

Our work presents a refined analysis of the no-internal-regret algorithm of \citet{Stoltz05:Internal}, as well as the no-swap-regret algorithm of \citet{Blum07:From}, both instantiated with Optimistic Multiplicative Weights Update (OMWU). Going forward, we will refer to those learning dynamics as $\slomwu$ and $\bmomwu$, respectively. Our primary contribution is to show that both of these algorithms exhibit a near-optimal convergence rate of $\widetilde{O}(T^{-1})$, settling \Cref{qst:1} in the affirmative. More precisely, for $\slomwu$ our main theorem is summarized as follows.

\begin{restatable}{theorem}{intmain}
    \label{theorem:main summary}
Consider a general-sum multi-player game with $m$ players, with each player $i \in \range{m}$ having $n_i$ actions. There exists a universal constant $C > 0$ such that, when all players select strategies according to algorithm $\slomwu$ with step size $\eta = 1/(C\cdot m \log^4 T)$, the internal regret of every player $i \in \range{m}$ is bounded by $O \mleft( m \log n_i \log^4 T \mright)$. As a result, the average product distribution of play is an $O \mleft( {(m \log n \log^4 T)}/{T} \mright)$-approximate correlated equilibrium.
\end{restatable}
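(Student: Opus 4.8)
The plan is to reduce \Cref{theorem:main summary} to the external-regret analysis of OMWU due to \citet{Daskalakis21:Near} (DFG), after first rewriting $\slomwu$ in a form that contains no fixed-point computation. Recall that at round $t$ the no-internal-regret reduction of \citet{Stoltz05:Internal} maintains an OMWU distribution $\pi^t$ over the $\Theta(n_i^2)$ action deviations $a\to b$, forms the stochastic map $\Phi^t = \sum_{a\to b}\pi^t(a\to b)\,\phi_{a\to b}$, and plays the strategy $x_t$ characterized by the fixed-point equation $x_t = \Phi^t x_t$. The first step is to apply the Markov chain tree theorem: coordinatewise, $x_t(i) \propto \sum_{\tree\in\treeset_i}\prod_{(a\to b)\in\tree}\Phi^t_{ab}$, where $\treeset_i$ is the set of spanning arborescences of the deviation graph rooted at $i$. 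Because each entry $\Phi^t_{ab}$ is a coordinate $\pi^t(a\to b)\propto\exp(-\eta\,\widehat L^t(a\to b))$ of an OMWU iterate, the product over a tree $\tree$ equals $\propto\exp\mleft(-\eta\sum_{(a\to b)\in\tree}\widehat L^t(a\to b)\mright)$, i.e.\ the OMWU weight of the ``super-action'' $\tree$ under the \emph{linear} loss $\tree\mapsto\sum_{(a\to b)\in\tree}\widehat L^t(a\to b)$. Hence $x_t$ is exactly a fixed $0/1$ linear image $x_t = \matrixLL\,\mu^t$ of the iterate $\mu^t\in\distrs{\treeset}$ of OMWU run on the exponential-sized set $\treeset=\bigcup_i\treeset_i$ with these linear losses --- the fixed point has disappeared, replaced by a benign linear read-out of plain OMWU. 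Since $\widehat L^t(a\to b)$ is itself affine in $x_t$ and in the opponents' current strategies, the loss fed back to this tree-OMWU is a multilinear function of the current iterates of all $m$ players, which is precisely the structure the DFG analysis is built for.

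\textbf{Bounding the internal regret.} The construction of \citet{Stoltz05:Internal} is designed so that, thanks to the fixed-point identity $\dotprod{\ell^t,x_t}=\sum_\phi\pi^t(\phi)\dotprod{\ell^t,\phi(x_t)}$, the internal regret of player $i$'s composite strategy is upper bounded by the external regret of the \emph{single} OMWU instance over the $\Theta(n_i^2)$ deviations; by the optimistic/RVU regret bound this is $O(\log n_i/\eta)$ plus an $\eta$-weighted first-order path length of the losses $\widehat\ell^t$, minus a negative path-length term in the iterates. (In particular the leading term scales with $\log n_i$, not with $\log\lvert\treeset\rvert$; the tree reformulation is used only to obtain smoothness, not to measure regret.) The remaining work is to show this path length is $O(\polylog T)$. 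Here I would feed the multilinear structure of the previous paragraph into the DFG machinery: the tree formula exhibits each $x_t$ --- and hence each player's loss sequence --- as a fixed real-analytic function of the cumulative losses whose derivatives of all orders are bounded uniformly in $t$ once $\eta$ is small, which is exactly the higher-order smoothness hypothesis underpinning their self-bounding path-length argument. Running their inductive potential argument through the $m$-player coupling (each player's first- and second-order path lengths controlled by those of the others) bounds all path lengths by $\polylog T$, and with the stated choice $\eta = 1/(C m\log^4 T)$ the two contributions balance to yield internal regret $O(m\log n_i\log^4 T)$ for every player $i$.

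\textbf{From internal regret to correlated equilibrium.} Finally I would invoke the standard fact that if the internal regret of every player over $T$ rounds is at most $R$, then the empirical product distribution of play $\frac1T\sum_{t=1}^T\bigotimes_{i\in\range{m}}x_t^{(i)}$ is an $(R/T)$-approximate correlated equilibrium; substituting $R=O(m\log n\,\log^4 T)$ gives the claimed $O\mleft((m\log n\,\log^4 T)/T\mright)$ rate.

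The step I expect to be the main obstacle is the uniform-in-$t$ higher-order smoothness of $x_t$ as a function of the losses. Along the trajectory the OMWU weights $\pi^t$ can become extremely lopsided, which a priori makes $\Phi^t$ nearly reducible and its stationary distribution ill-conditioned; the point of the tree formula is that the potentially tiny denominator $\sum_i\sum_{\tree\in\treeset_i}\prod_{(a\to b)\in\tree}\Phi^t_{ab}$ can nonetheless be controlled --- using the homogeneity of these polynomials (so that overall rescalings of $\pi^t$ cancel) together with the self-loop/identity term built into the Stoltz--Lugosi operator, which keeps some arborescence weight bounded below --- so that the derivative bounds, and hence the smoothness constants entering the DFG recursion, are genuinely independent of $T$. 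The secondary delicate point is the bookkeeping that keeps the dependence on the number of players at a single factor of $m$ as these per-player constants propagate through the coupling.
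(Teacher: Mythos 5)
Your proposal follows essentially the same route as the paper: the Markov chain tree theorem converts \slomwu{} into a linear read-out of OMWU over arborescences (\cref{theorem:equivalence}), the regret is still measured on the low-dimensional deviation-pair OMWU so the $\log n_i$ dependence survives, higher-order smoothness comes from feeding the multilinear tree/softmax structure into the boundedness-chain-rule machinery of Daskalakis--Fishelson--Golowich (\cref{lemma:high-order-smooth}), and the conclusion follows from the folklore internal-regret-to-CE conversion (\cref{theorem:convergence_correlated}). Your anticipated obstacle about ill-conditioned stationary distributions in fact dissolves along this route---once the output is a linear image of a softmax over trees, the normalizer is just the softmax denominator and the products of the players' tree-distributions sum to one, so no lower bound on arborescence weights (or appeal to self-loops) is needed; that conditioning concern only resurfaces in the paper's separate \bmomwu{} analysis, where it is handled via Cauchy's integral formula.
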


This matches, up to constant factors, the rate of convergence for coarse correlated equilibria as follows by the result in \citep{Daskalakis21:Near}, and it is optimal, within the no-regret framework,%
\footnote{Finding a correlated equilibrium can be phrased as a linear programming problem, and thus $\epsilon$-approximate correlated equilibria can be found in time $\poly(m, n, \log(1/\epsilon))$, where $n = \max_{i \in \range{m}} \{ n_i \}$, for succinct multi-player games~\citep{Papadimitriou08:Computing}. However, the procedure for doing so, \emph{ellipsoid against hope}, cannot be phrased as uncoupled dynamics and is unlikely to be run by players competing in a repeated game.}
up to polylogarithmic factors~\citep{Daskalakis11:Near}. 
This also substantially improves upon the $O(T^{-3/4})$ rate of convergence for correlated equilibria recently shown by \citet{Chen20:Hedging}. Moreover, since swap regret on an $n$-simplex is trivially at most $n$ times larger than internal regret (\emph{e.g.}, see \citet[pp. 1311]{Blum07:From}),
\cref{theorem:main summary} directly gives a bound in terms of swap regret as well, stated as follows.
\begin{corollary}
    \label{cor:int-to-swap}
    If all players select strategies according to algorithm \slomwu{}, the swap regret of every player $i \in \range{m}$ is bounded by $O \mleft( m\, n_i \log n_i \log^4 T \mright)$.
\end{corollary}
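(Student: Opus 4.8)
The plan is to obtain \Cref{cor:int-to-swap} as an immediate consequence of \Cref{theorem:main summary} combined with the standard comparison between swap and internal regret on a simplex. Recall that an arbitrary swap deviation for player $i$ is a map $\phi:\range{n_i}\to\range{n_i}$, and the associated regret decomposes as a sum, over the $n_i$ possible source actions $a$, of terms of the form $\sum_{t} x_t^{(i)}[a]\,(\ell_t^{(i)}[a]-\ell_t^{(i)}[\phi(a)])$, where $x_t^{(i)}$ and $\ell_t^{(i)}$ denote player $i$'s strategy and loss vector at round $t$. Each such term is, by definition, at most the internal regret of player $i$, since internal regret is exactly the maximum of this quantity over all ordered pairs of actions. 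Hence $\swapreg_i(T)\le n_i\cdot\intreg_i(T)$; this is precisely the reduction recorded in \citet[pp.\ 1311]{Blum07:From}.

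Given this, it only remains to plug in the bound of \Cref{theorem:main summary}: with the prescribed step size $\eta=1/(C\, m\log^4 T)$, every player's internal regret is $O(m\log n_i\log^4 T)$, so multiplying by $n_i$ yields $\swapreg_i(T)=O(m\, n_i\log n_i\log^4 T)$, which is the claimed bound.

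There is no genuine obstacle here: all of the technical content lives in \Cref{theorem:main summary}, and the corollary follows in essentially one line. The only point requiring a little care is to apply the internal-to-swap comparison separately for each player, so that the multiplicative factor is that player's own action count $n_i$ rather than the global maximum $n=\max_{i\in\range{m}} n_i$.
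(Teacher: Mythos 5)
Your proposal is correct and matches the paper's own argument: the paper likewise derives \cref{cor:int-to-swap} by noting that swap regret on the simplex is at most $n_i$ times internal regret (citing \citet[pp.~1311]{Blum07:From}) and then plugging in the bound of \cref{theorem:main summary}. Your explicit per-source-action decomposition just spells out that standard comparison, including the correct per-player factor $n_i$.
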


For the popular and more involved algorithm $\bmomwu$, our main theorem is summarized as follows.
\begin{restatable}{theorem}{thmbm}
      \label{thm:bound-l-dl}
    Consider a general-sum multi-player game with $m$ players, with each player $i \in \range{m}$ having $n_i$ actions. There exists a universal constant $C > 0$ such that, when all players select strategies according to algorithm $\bmomwu$ with step size $\eta =1/(C \cdot m\,n_i^3\log^4(T))$, the swap regret of every player $i \in \range{m}$ is bounded by $O(m\,n_i^4 \log n_i \log^4(T))$.
    As a result, the average product distribution of play is an $O \mleft( {(m\, n^4 \log n \log^4 T)}/{T} \mright)$-approximate correlated equilibrium.
\end{restatable}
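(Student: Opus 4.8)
The plan is to analyze $\bmomwu$ via the classical reduction of Blum and Mansour and then to import the higher-order smoothness strategy of DFG, replacing their combinatorial derivative estimates by complex-analytic ones that can accommodate the stationary-distribution operation $\fp$.

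\textbf{Step 1: the Blum--Mansour reduction and a per-instance regret bound.} Fix a player $i$ and write $n \defeq n_i$. Recall that $\bmomwu$ runs $n$ independent OMWU instances; at round $t$, instance $j$ outputs $q_j^{(t)} \in \distrs{n}$, these rows assemble into a row-stochastic matrix $Q^{(t)}$, the player commits to $x^{(t)} \defeq \fp(Q^{(t)})$ (so $x^{(t)} Q^{(t)} = x^{(t)}$), and on observing $\ell^{(t)}$, instance $j$ is fed the scaled loss $\hat\ell_j^{(t)} \defeq x^{(t)}_j\,\ell^{(t)}$. Using $x^{(t)} Q^{(t)} = x^{(t)}$, the folklore identity gives that the swap regret of player $i$ is at most the sum over $j \in \range{n}$ of the external regret of instance $j$. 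For each instance I invoke the optimistic (``RVU-type'') regret bound of OMWU with step size $\eta$: its external regret is at most
\[
\frac{\log n}{\eta} \;+\; \eta \sum_{t} \norm{\hat\ell_j^{(t)} - \hat\ell_j^{(t-1)}}_\infty^2 \;-\; \frac{1}{8\eta} \sum_{t} \norm{q_j^{(t)} - q_j^{(t-1)}}_1^2 .
\]
Summing over $j$ bounds the swap regret of player $i$ by $\tfrac{n \log n}{\eta} + \eta\, P_i^{\mathrm{loss}} - \tfrac{1}{8\eta} P_i^{\mathrm{strat}}$, with $P_i^{\mathrm{loss}} \defeq \sum_{j,t} \norm{\hat\ell_j^{(t)} - \hat\ell_j^{(t-1)}}_\infty^2$ and $P_i^{\mathrm{strat}} \defeq \sum_{j,t} \norm{q_j^{(t)} - q_j^{(t-1)}}_1^2$.

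\textbf{Step 2: reducing $P_i^{\mathrm{loss}}$ to the strategies' path lengths.} Split $\hat\ell_j^{(t)} - \hat\ell_j^{(t-1)} = x^{(t)}_j(\ell^{(t)} - \ell^{(t-1)}) + (x^{(t)}_j - x^{(t-1)}_j)\ell^{(t-1)}$. Using $\sum_j (x^{(t)}_j)^2 \le 1$, $\sum_j (x^{(t)}_j - x^{(t-1)}_j)^2 \le \norm{x^{(t)} - x^{(t-1)}}_1^2$, boundedness of losses, and multilinearity of $\ell^{(t)}$ in the other players' strategies (whence $\norm{\ell^{(t)} - \ell^{(t-1)}}_\infty \lesssim \sum_{k \ne i} \norm{x_k^{(t)} - x_k^{(t-1)}}_1$), one gets $P_i^{\mathrm{loss}} \lesssim m \sum_{k} \sum_t \norm{x_k^{(t)} - x_k^{(t-1)}}_1^2$. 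Thus everything reduces to controlling the second-order path length of the committed strategies $x_k^{(t)}$, each of which is a stationary distribution of a row-stochastic matrix of OMWU outputs.

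\textbf{Step 3 (the crux): higher-order smoothness of $\fp$ via the Cauchy Integral Formula.} The obstacle is that a naive Lipschitz bound for $\fp$ along the trajectory is \emph{false}: OMWU iterates can drift exponentially (in $T$) close to the boundary of the simplex, at which point the induced chain is nearly reducible and $\fp$ has an enormous Jacobian. Following DFG's philosophy, the remedy is not to bound $\norm{x^{(t)} - x^{(t-1)}}_1$ crudely but to establish a self-bounding (``higher-order smoothness'') property: the interpolated iterate sequence behaves like the restriction of a map all of whose derivatives are $\poly(n)\,\polylog(T)$-bounded, which makes $P_i^{\mathrm{strat}}$ satisfy an inequality of the shape $P_i^{\mathrm{strat}} \lesssim \poly(n)\,\eta^2 \cdot (\text{path lengths}) + (\text{lower-order terms})$; with $\eta = 1/(C m n^3 \log^4 T)$ this forces $P_i^{\mathrm{strat}} = \poly(n)\,\polylog(T)$. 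The genuinely new ingredient for $\bmomwu$ is a derivative bound for $\fp$ itself. I would obtain it by writing $\fp$, via the Markov-chain tree theorem, as a ratio of multilinear polynomials in the matrix entries, complexifying, and invoking the Cauchy Integral Formula: the $h$-th derivative of $\fp$ at $Q^{(t)}$ is a contour integral over a circle of radius $r$, bounded by $h!\,r^{-h}\,\max_{\text{circle}}\abs{\fp}$, which is finite precisely when the tree-theorem denominator does not vanish on that circle. The heart of the argument is therefore to prove, by induction on $t$ (using that consecutive $Q^{(t)}$ move by only $\poly(1/n)\,\polylog(T)^{-1}$, thanks to the small step size and the already-established control of $P^{\mathrm{strat}}$ up to time $t-1$), that a complex disk of radius $r = \poly(1/n)$ around each iterate avoids the vanishing locus of the denominator --- i.e. the complexified chain never degenerates faster than $\poly(1/n)$. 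This is the step I expect to be the main obstacle.

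\textbf{Step 4: putting it together.} With $\eta = 1/(C m n^3 \log^4 T)$ and $C$ a large universal constant, Step 3 caps $P_i^{\mathrm{strat}}$, and hence via Step 2 also $\eta\,P_i^{\mathrm{loss}}$, at $\poly(n)\,\polylog(T)$, so the bound of Step 1 collapses to $O\mleft(\tfrac{n\log n}{\eta}\mright) = O(m\, n^4 \log n \log^4 T)$. Finally, the standard fact that if every player's swap regret is at most $\eps T$ then the average product distribution of play is an $\eps$-approximate correlated equilibrium turns this into the claimed $O\mleft((m\, n^4 \log n \log^4 T)/T\mright)$-CE guarantee, with $n = \max_{i \in \range{m}} n_i$.
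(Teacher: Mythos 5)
Your overall architecture (Blum--Mansour reduction, per-instance RVU-type bound for OMWU, then a higher-order-smoothness argument to kill the positive variance/path-length terms, then the folklore swap-regret-to-CE conversion) matches the paper's proof. The genuine gap is concentrated exactly where you flag it, in Step 3, and the route you propose there would fail. You plan to apply the Cauchy integral formula to $\fp$ viewed as a ratio of multilinear polynomials in the \emph{entries} of $\mat{Q}$, and to argue inductively along the trajectory that a complex polydisk of radius $\poly(1/n)$ around each iterate avoids the vanishing locus of the tree-theorem denominator. This cannot be made to work: the OMWU rows $\mat{Q}^{(t)}[g,\cdot]$ accumulate losses multiplicatively and their entries can become exponentially small in $T$, so the iterates drift arbitrarily close to non-ergodic matrices, where the stationary-distribution map genuinely has singularities and unbounded derivatives in entry coordinates. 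A time-uniform safety radius $\poly(1/n)$ does not exist (small consecutive moves do not prevent the cumulative drift), and even where the denominator is nonzero, the Cauchy bound $h!\,r^{-h}\max|\fp|$ degrades with the size of the denominator at the center, so no $\poly(n)\polylog(T)$ derivative bound in entry coordinates is available. The paper states this obstruction explicitly: a Taylor bound for the map from the entries of $\mat{Q}$ to its stationary distribution ``is not quite possible.''

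The missing idea is a change of coordinates plus a normalization. The paper parametrizes the chain by the \emph{logarithms} of its entries, writing (via the Markov chain tree theorem) $\Phi_{\mathrm{MCT},j}(Z)$ as a ratio of sums of exponentials of tree-sums of $Z$, and bounds the Taylor coefficients of the \emph{ratio} $Z \mapsto \Phi_{\mathrm{MCT},j}(Z^{(0)}+Z)/\Phi_{\mathrm{MCT},j}(Z^{(0)})$ at $Z=\vec{0}$ (Lemma~\ref{lem:mct-taylor}). In log-coordinates, on a complex polydisk of radius $\pi/(3n)$ every tree term $\exp(\sum Z^{(0)}_{ab} + \sum (\zeta_{ab}-Z^{(0)}_{ab}))$ has real part at least a constant fraction of $\exp(\sum Z^{(0)}_{ab})$, so the ratio is bounded by an absolute constant \emph{uniformly}, no matter how degenerate $\mat{Q}^{(t)}$ is---no induction about staying away from a singular locus is needed at all. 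This meshes with OMWU's exponential-form updates (the increments of $\log \mat{Q}$ are controlled directly via the log-softmax Taylor bound of Lemma~\ref{lem:log-softmax-ak-bound}), and the resulting bound on finite differences of the stationary distributions is stated relative to $p^{(0)}[j]$ (Lemma~\ref{lem:fds-stationary}), which is what feeds the inductive cycle $D_h\vec{x}, D_h\vec{\ell} \Rightarrow D_{h+1}\log\mat{Q} \Rightarrow D_{h+1}\vec{x} \Rightarrow D_{h+1}\vec{\ell}$ and then, through the DFG variance machinery, the final regret bound. Also note, more minorly, that the paper's Step-1/Step-2 analogue is run with the variance-based RVU bound and an $H\approx\log T$ chained variance inequality rather than the second-order path-length self-bounding you sketch; your version of those steps is plausible in spirit but would still need to be routed through the full higher-order finite-difference induction to reach $\polylog(T)$.
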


Finally, we remark that $\slomwu$ and $\bmomwu$ instantiated with the learning rates described in \cref{theorem:main summary,thm:bound-l-dl} guarantee near-optimal swap regret (in $T$) when all players use the same dynamics, but might not against general, adversarial losses. 
To guarantee near-optimal swap regret in both the adversarial and the non-adversarial regime, an adaptive choice of learning rate similar to that in \citep{Daskalakis21:Near} can be used (see \Cref{corollary:adversarial}).

\subsection{Overview of Techniques}

The recent work of \citet{Daskalakis21:Near} identified \emph{higher-order smoothness} of no-external-regret learning dynamics as a key property for obtaining near-optimal external regret bounds. In particular, they showed that for the no-external-regret dynamics OMWU, the higher-order differences (\Cref{definition:fin-dif}) of the sequence of loss vectors decay exponentially at orders up to roughly $\log T$. However, establishing such higher-order smoothness for no-internal- and no-swap-regret learning dynamics is a considerable challenge since the known algorithms involve computing the stationary distribution of a certain Markov chain at every iteration. Our main technical contribution is to develop new techniques to effectively address this challenge.

\paragraph{Proof of \cref{theorem:main summary}: Analyzing \slomwu.}
First, we show that internal regret minimization on an $n$-simplex can be simulated by no-external-regret dynamics on the combinatorial space of all $n$-node \emph{directed trees} (\cref{theorem:equivalence}). Our equivalence result enables us to trade the computation of a stationary distribution of a polynomial-sized Markov chain for a (much more well-behaved) linear transformation on an exponential-sized set. To our knowledge, this is the first no-internal-to-no-external-regret reduction that sidesteps the computation of stationary distributions of Markov chains, and might have applications beyond the characterization of higher-order smoothness of the dynamics. Based on our equivalence result, we then adapt and leverage the known higher-order smoothness techniques for no-external-regret dynamics~\citep{Daskalakis21:Near}. We stress that our analysis is eventually brought back to a ``low-dimensional'' regret minimizer, instead of solely operating over the space of directed trees; this step is crucial for obtaining the logarithmic dependence on the number of actions of each player for no-internal-regret dynamics (\cref{theorem:main summary}).

%
The equivalence result mentioned in the previous paragraph arises as a consequence of the classic Markov chain tree theorem, which provides a closed-form combinatorial formula for the stationary distribution of an ergodic Markov chain, and crucially relies on the multiplicative structure of the update rule of (O)MWU. 
Specifically, we prove that the stationary distributions of certain Markov chains whose transition probabilities are updated through (O)MWU are themselves linear transformations of iterates produced by (O)MWU.
Our equivalence gives a direct way to argue about the higher-order smoothness of stationary distributions of Markov chains, substantially extending the first-order smoothness observation of \citet{Chen20:Hedging}. Furthermore, we expect the equivalence to continue to hold beyond stationary distributions of Markov chains, to the more general problem of computing \emph{fixed points} of linear transformations required in the framework of \emph{Phi-regret} \citep{Stoltz07:Learning,Greenwald03:General,Farina21:Simple}.

\paragraph{Proof of \cref{thm:bound-l-dl}: Analyzing \bmomwu.}
The techniques we described so far enable us to establish the near-optimal internal and swap regret bounds for $\slomwu$ (\Cref{theorem:main summary} and \Cref{cor:int-to-swap}), as well as the corresponding convergence to correlated equilibrium. However, different techniques are necessary to establish the regret bound of $\bmomwu$ (\cref{thm:bound-l-dl}). At a high level, $\bmomwu$ runs $n_i$ independent external regret minimizers for each player $i$, aggregates the outputs into a transition matrix of a Markov chain, and then computes its stationary distribution. 

Rather than arguing indirectly in terms of a supplementary external-regret minimizer, we \emph{directly} analyze the higher-order smoothness of a sequence of stationary distributions of Markov chains, at the cost of ultimately obtaining a worse dependence on the number of actions $n_i$ in our swap regret bounds. Using the machinery of \cite{Daskalakis21:Near} (in particular, the boundedness chain rule of \cref{lemma:boundedness-chain-rule}), doing so boils down to obtaining a bound on the Taylor series coefficients of the function that maps the entries of an ergodic matrix $\mat{Q}$ to $\mat{Q}$'s stationary distribution. Taken literally, such a bound is not quite possible, since the stationary distribution may have singularities around the non-ergodic matrices. However, we show that by using the Markov chain tree theorem together with the multi-dimensional version of Cauchy's integral formula, it is possible to bound the Taylor series of the function mapping the \emph{logarithms} of the entries of $\mat{Q}$ to its stationary distribution (see \Cref{lem:mct-taylor}). Leveraging the exponential-type structure of the OMWU updates, we then use this bound to obtain the desired guarantee on the higher-order differences of the stationary distributions (\Cref{lem:dh-bound}). 

\subsection{Further Related Work}

No-internal-regret algorithms that require black-box access to a \emph{single} no-external-regret minimizer are known in the literature~\citep{Stoltz05:Internal,Cesa-Bianchi06:Prediction}. This is in contrast with the construction of \citet{Blum07:From} for the stronger notion of swap regret, which requires $n$ independent no-external-regret minimizers---one per each action of the player. Nevertheless, both classes of algorithms involve computing the stationary distribution of a certain Markov chain at every iteration. The intrinsic complexity associated with the computation of a stationary distribution was arguably the main factor limiting our ability to give accelerated convergence guarantees for either class of algorithms. Indeed, while learning dynamics guaranteeing external regret bounded by $O(T^{1/4})$ have been known for several years \citep{Syrgkanis15:Fast}, a matching bound for swap regret was only recently shown by \citet{Chen20:Hedging}. 
%

The setting studied in our paper (learning dynamics for correlated equilibrium) is substantially more challenging than the problem of giving accelerated learning dynamics for Nash equilibria in two-player zero-sum games, as well learning dynamics for \emph{smooth games} \citep{Roughgarden15:Intrinsic}. Indeed, while in our setting the convergence to the equilibrium is driven by the \emph{maximum} internal (or swap) regret cumulated by the players, in the latter two settings the quality metric is driven by the \emph{sum} of the external regrets. As shown by \citet{Syrgkanis15:Fast}, it is possible to guarantee a \emph{constant sum} of external regrets under a \emph{broad} class of \emph{predictive} no-regret algorithms which includes optimistic OMD and optimistic FTRL under very general distance-generating functions. This is in contrast with the case of no-regret dynamics for CCE and CE, where it remains an open question to give broad classes of algorithms that can achieve near-optimal convergence. 

Finally, we point out that optimistic variants of FTRL such as OMWU have been shown to also converge in the \emph{last-iterate sense}, and the convergence is known to be linear\footnote{However, the convergence rate of the last-iterate may \emph{not} depend polynomially in the size of the game \citep{Wei21:Linear}.} \citep{Daskalakis18:Training,Daskalakis18:The,Daskalakis19:Last,Wei21:Linear}, but this holds only for restricted classes of games, such as two-player zero-sum games.

\section{Preliminaries}
\label{section:prelim}

Consider a finite \emph{normal-form} game $\Gamma$ consisting of a set of $m$ players $\range{m} \coloneqq \{1, 2, \dots, m\}$ such that every player $i$ has an \emph{action space} $\mathcal{A}_i \coloneqq \range{n_i}$, for some $n_i \in \N$. The \emph{joint} action space will be represented with $\mathcal{A} \coloneqq \mathcal{A}_1 \times \dots \times \mathcal{A}_m$. For a given action profile $\vec{a} = (a_1, \dots, a_m) \in \mathcal{A}$, the \emph{loss function} $\Lambda_i : \mathcal{A} \to [0, 1]$ specifies the loss of player $i$ under the action profile $\vec{a}$; note that the normalization of the losses comes without any loss of generality. A \emph{mixed strategy} $\vec{x}_i \in \Delta(\mathcal{A}_i)$ for a player $i \in \range{m}$ is a probability distribution over $i$'s action space $\mathcal{A}_i$, so that the coordinate $\vec{x}_i[j]$ indicates the probability that player $i$ will select action $j \in \range{n_i}$. A \emph{deterministic strategy} refers to a mixed strategy supported on a single coordinate. Given a \emph{joint} vector of mixed strategies $\vec{x} = (\vec{x}_1, \dots, \vec{x}_m)$, the \emph{expected} loss $\vec{\ell}_i$ of player $i$ is such that $\vec{\ell}_i[j] \coloneqq \ExX{\vec{a}_{-i} \sim \vec{x}_{-i}}{\Lambda_i(j, \vec{a}_{-i})}$, for all $j \in \range{n_i}$, where we used the notation $\vec{a}_{-i}$ to denote the vector $\vec{a}$ excluding the coordinate corresponding to $i$'s action; i.e. $\vec{a}_{-i} \coloneqq (a_1, \dots, a_{i-1}, a_{i+1}, \dots, a_m)$.

\paragraph{Hindsight Rationality and Regret Minimization.} A standard quality metric in the theory of learning in games is \emph{hindsight rationality}. Hindsight rationality encodes the idea that a player has ``learnt'' to play the game when, looking back at the history of play, there is no transformation of their strategies that---applied to the whole history of play---would have led to strictly better utility for that player. This notion is operationalized through \emph{Phi-regret}. Formally, 
the $\Phi_i$-regret incurred by the sequence of strategies $\vec{x}_i^{(1)}, \dots, \vec{x}_i^{(T)}\in\Delta^{n_i}$ selected by player~$i\in\range{m}$ is defined as 
\begin{equation}
    \label{eq:phi_regret}
    \reg^T_{\Phi_i} \defeq \sum_{t=1}^T \langle \vec{x}_i^{(t)}, \vec{\ell}_i^{(t)} \rangle - \min_{\phi^* \in \Phi_i} \sum_{t=1}^T \langle \phi^* (\vec{x}_i^{(t)}), \vec{\ell}_i^{(t)} \rangle.
\end{equation}
Notable special cases are identified based on the particular choice of transformations $\Phi_i$, as follows.
\begin{itemize}[nosep]
\item[(i)] \emph{External} regret (or simply \emph{regret}) corresponds to the case where $\Phi_i$ is the set of all \emph{constant} functions $\Phi_i^\text{const} \defeq \{\phi : \Delta^{n_i} \to \Delta^{n_i}, \phi(\vec{x}) = \phi(\vec{x}') \ \forall\, \vec{x},\vec{x}'\in\Delta^{n_i}\}$.
\item[(ii)] \emph{Internal} regret corresponds to the set of linear transformation $\Phi_i = \intphi_i$ that transport probability mass from some action $j$ to some other action $k$. Formally, $\intphi_i$ is the convex hull $\intphi_i \coloneqq \co\{ \phi_{j \to k} \}_{j, k \in \mathcal{A}_i, j \neq k}$ of the functions $\phi_{j\to k}$ defined as
\begin{equation}\label{eq:phi jk}
    \phi_{j \to k} : \Delta^{n_i} \to \Delta^{n_i}, \quad \vec{x} \mapsto \vec{x} + (\vec{e}_k - \vec{e}_j)\, \vec{x}[j] = \big(\mat{I} + (\vec{e}_k - \vec{e}_j)\vec{e}_j^\top\big)\,\vec{x} \eqqcolon \mat{E}_{j\to k}\, \vec{x}.
\end{equation}
\item[(iii)] \emph{Swap} regret corresponds to the set $\Phi_i^\text{swap}$ of all linear transformations $\Delta^{n_i}\ni \vec{x} \mapsto \mat{Q}^\top \vec{x}$ where $\mat{Q}$ is a \emph{row-stochastic matrix}, that is, a non-negative matrix whose rows all sum to $1$.
\end{itemize}

\paragraph{Connections with Solution Concepts.}
There exist connections between the different notions of hindsight rationality described above and game-theoretic solution concepts, including correlated equilibria (the focus of this paper), whose definition is recalled next \citep{Foster97:Calibrated,Hart00:Simple}.
\begin{definition}[Correlated Equilibrium]
    A probability distribution $\vec{\mu}$ over $\Pi_1 \times \dots\times\Pi_m$ is said to be \emph{an $\epsilon$-correlated equilibrium}, where $\epsilon > 0$, if for every player $i \in \range{m}$ and every $\phi \in \intphi_i$, 
    \begin{equation}
    \mathbb{E}_{(\vec{\pi}_{1}, \dots, \vec{\pi}_m) \sim \vec{\mu}}\big[\langle \vec{\ell}_i, \vec{\pi}_i\rangle - \langle\vec{\ell}_i, \phi (\vec{\pi}_i) \rangle\big] \leq \epsilon.
    \end{equation}
\end{definition}
A folklore result states that the average product distribution of play of no-internal-regret players converges to an approximate \emph{correlated equilibrium} (\emph{CE}). More precisely, the following holds.
\begin{theorem}
    \label{theorem:convergence_correlated}
    Consider $T$ repetitions of play in a game where every player $i\in\range{m}$ employs an algorithm that produces strategies $\vec{x}_i^{(1)},\dots,\vec{x}_i^{(T)}\in\Delta^{n_i}$ with internal regret $\intreg_i^T$. Then, the average product distribution of play
    $
        \bar{\vec{\mu}} \defeq \frac{1}{T}\sum_{t=1}^T \vec{x}_1^{(t)}\otimes\dots\otimes\vec{x}_m^{(t)}
    $
    is an $O(\max_i \intreg_i^T/T)$-CE. 
\end{theorem}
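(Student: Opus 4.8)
The plan is to unfold the definition of an $\epsilon$-correlated equilibrium applied to $\bar{\vec{\mu}}$ and recognize the resulting quantity as exactly a time-average of the per-round regret differences appearing in \cref{eq:phi_regret}; the internal regret bound then finishes the argument immediately. I would fix a player $i \in \range{m}$ and a transformation $\phi \in \intphi_i$, and bound $\mathbb{E}_{\vec{\pi}\sim\bar{\vec{\mu}}}[\langle \vec{\ell}_i, \vec{\pi}_i\rangle - \langle \vec{\ell}_i, \phi(\vec{\pi}_i)\rangle]$, where, consistently with \cref{section:prelim}, $\vec{\ell}_i = \vec{\ell}_i(\vec{\pi}_{-i})$ is the loss vector induced by the opponents' realized pure profile, i.e.\ $\vec{\ell}_i(\vec{\pi}_{-i})[j] = \Lambda_i(j, \vec{\pi}_{-i})$.

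The first step is to use that $\bar{\vec{\mu}} = \frac1T\sum_{t=1}^T \vec{x}_1^{(t)}\otimes\dots\otimes\vec{x}_m^{(t)}$ is a uniform mixture of product distributions, so that by linearity of expectation the quantity above equals $\frac1T\sum_{t=1}^T \mathbb{E}_{\vec{\pi}\sim \vec{x}_1^{(t)}\otimes\dots\otimes\vec{x}_m^{(t)}}[\langle \vec{\ell}_i, \vec{\pi}_i\rangle - \langle \vec{\ell}_i, \phi(\vec{\pi}_i)\rangle]$. The main step is then to evaluate each per-round term via the product structure: under $\vec{x}_1^{(t)}\otimes\dots\otimes\vec{x}_m^{(t)}$ the coordinate $\vec{\pi}_i$ and the opponents' profile $\vec{\pi}_{-i}$ are independent, and $\phi(\vec{\pi}_i)$ depends only on $\vec{\pi}_i$; conditioning on $\vec{\pi}_i$ and taking the expectation over $\vec{\pi}_{-i}$ first, the definition of the expected loss vector in \cref{section:prelim} gives $\mathbb{E}_{\vec{\pi}_{-i}\sim\vec{x}_{-i}^{(t)}}[\vec{\ell}_i(\vec{\pi}_{-i})] = \vec{\ell}_i^{(t)}$. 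Since $\intphi_i$ is the convex hull of the linear maps $\vec{x}\mapsto\mat{E}_{j\to k}\vec{x}$, the map $\phi$ is linear and hence commutes with expectations, so each per-round term equals $\langle \vec{\ell}_i^{(t)}, \vec{x}_i^{(t)}\rangle - \langle \vec{\ell}_i^{(t)}, \phi(\vec{x}_i^{(t)})\rangle$, using also $\mathbb{E}_{\vec{\pi}_i\sim\vec{x}_i^{(t)}}[\vec{\pi}_i] = \vec{x}_i^{(t)}$. Summing over $t$ and invoking \cref{eq:phi_regret} with $\Phi_i = \intphi_i$ (so that $\sum_t \langle \phi(\vec{x}_i^{(t)}), \vec{\ell}_i^{(t)}\rangle \ge \min_{\phi^*\in\intphi_i}\sum_t\langle\phi^*(\vec{x}_i^{(t)}),\vec{\ell}_i^{(t)}\rangle$), the whole expression is at most $\intreg_i^T/T$. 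Taking the maximum over $i\in\range{m}$ shows $\bar{\vec{\mu}}$ is a $(\max_i \intreg_i^T/T)$-CE, hence an $O(\max_i\intreg_i^T/T)$-CE.

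I do not expect a substantial obstacle here, as the statement is folklore; the only point requiring care is the distinction between the random loss vector $\vec{\ell}_i$ in the definition of correlated equilibrium (which is a function of the opponents' realized pure actions) and the expected loss vector $\vec{\ell}_i^{(t)}$ appearing in the regret. These two are bridged precisely by the independence of the coordinates under the product distribution $\vec{x}_1^{(t)}\otimes\dots\otimes\vec{x}_m^{(t)}$, together with the linearity of $\phi$ and of the inner product, which is why the argument reduces cleanly to the per-round identity above.
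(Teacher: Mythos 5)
Your proposal is correct. The paper states \cref{theorem:convergence_correlated} as a folklore result and does not supply a proof, so there is nothing to diverge from; your argument is exactly the standard one: decompose $\bar{\vec{\mu}}$ as a uniform mixture of the per-round product distributions, use independence of $\vec{\pi}_i$ and $\vec{\pi}_{-i}$ under each product distribution together with linearity of $\phi \in \intphi_i$ to reduce each round's deviation gap to $\langle \vec{x}_i^{(t)}, \vec{\ell}_i^{(t)}\rangle - \langle \phi(\vec{x}_i^{(t)}), \vec{\ell}_i^{(t)}\rangle$, and then bound the sum by $\intreg_i^T$ via the definition of internal regret in \cref{eq:phi_regret}. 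Your closing remark correctly identifies the only delicate point, namely bridging the realized loss vector in the equilibrium definition with the expected loss vector $\vec{\ell}_i^{(t)}$ used in the regret.
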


\paragraph{Optimistic Multiplicative Weights Update.}
%
%
The \emph{optimistic} variant of MWU, called \emph{Optimistic Multiplicative Weights Update (OMWU)}\footnote{The OMWU algorithm is sometimes referred to as \emph{Optimistic Hedge} in the literature.} is a particular instantiation of the more general \emph{optimistic FTRL} algorithm. At time $t=1$, OMWU outputs the uniform distribution $\vec{x}^{(1)}= \frac{1}{n}\vec{1}\in\Delta^{n}$. Then, at each time $t \ge 1$, it computes the next iterate $\vec{x}^{(t+1)}$
according to the update rule
\begin{equation}
    \label{eq:omwu}
    \tag{OMWU}
    \vec{x}^{(t+1)}[j] = \frac{\exp \{ - \eta (2 \vec{\ell}^{(t)}[j] - \vec{\ell}^{(t-1)}[j]) \} }{\sum_{k \in \range{n}}  \exp \{ - \eta (2 \vec{\ell}^{(t)}[k] - \vec{\ell}^{(t-1)}[k]) \}\, \vec{x}^{(t)}[k]}\, \vec{x}^{(t)}[j] ,
\end{equation}
where $\eta > 0$ is a \emph{learning rate} parameter and $\vec{\ell}^{(t)}$ is the feedback loss vector observed at time $t$ (we conventionally let $\vec{\ell}^{(0)} = \vec{0}$).



\subsection{The Markov Chain Tree Theorem}

Given an $n$-state ergodic (\emph{i.e.}, aperiodic and irreducible) Markov chain with (row-stochastic) transition matrix $\mat{Q}$, the classic Markov chain tree theorem provides a closed-form solution for its \emph{stationary distribution}, that is, the unique distribution $\vec{\pi} \in \Delta^n$ such that $\vec{\pi}^\top \mat{Q} = \vec{\pi}^\top$. The result requires the notion of a \emph{directed rooted tree} (a.k.a. \emph{arborescence}), recalled next.

\begin{figure}[t]
    \centering\begin{tikzpicture}
    \tikzstyle{nd}=[draw,circle,inner sep=0,minimum size=5mm];
    \tikzstyle{ed}=[black,semithick];
    \tikzstyle{oln}=[line width=0.8mm,gray!20];
    \tikzstyle{ole}=[line width=0.99mm,gray!20];
    \foreach[count=\i from 0] \a/\b/\c in {
      1/1/1,%
      1/1/2,%
      1/1/3,%
      1/2/1,%
      1/2/2,%
      1/2/3,%
      1/4/1,%
      1/4/2,%
      3/1/1,%
      3/1/2,%
      3/1/3,%
      3/4/1,%
      4/1/1,%
      4/1/3,%
      4/2/1,%
      4/4/1%
    }{
        \pgfmathdiv{\i}{8}\let\row=\pgfmathresult
        \pgfmathmod{\i}{8}\let\col=\pgfmathresult
        \begin{scope}[xshift=57*\col,yshift=-55*\row]
            \draw[oln] (0, 0) circle (2.5mm);
            \draw[oln] (1, 0) circle (2.5mm);
            \draw[oln] (0,-1) circle (2.5mm);
            \draw[oln] (1,-1) circle (2.5mm);
            \node[nd,fill=gray!20] (n1) at (0, 0) {\small $\mathbf{1}$};
            \node[nd] (n2) at (1, 0) {\small $2$};
            \node[nd] (n3) at (1,-1) {\small $3$};
            \node[nd] (n4) at (0,-1) {\small $4$};
            \draw[ole] (n2) -- (n\a);
            \draw[ole] (n3) -- (n\b);
            \draw[ole] (n4) -- (n\c);
            \draw[ed,->] (n2) -- (n\a);
            \draw[ed,->] (n3) -- (n\b);
            \draw[ed,->] (n4) -- (n\c);
        \end{scope}
    }
    \draw[thin,dotted,gray] (-.5, -1.5) -- ++(16,0);
    \foreach \i in {1,...,7}{
        \draw[thin,dotted,gray] (2*\i-.5,-3.2) -- ++(0,3.45);
    }
\end{tikzpicture}
    \caption{Set $\treeset[1]^4$ of all directed trees rooted at node $1$ in a graph with $4$ nodes.}
    \label{fig:directed_trees}
\end{figure}

\begin{definition}
    \label{definition:directed_tree}
    Let $\tree = (V, E)$, with $V = \range{n}$, be an $n$-node directed graph. $\tree$ is a \emph{directed tree rooted at $j \in V$} if (i) it contains no cycle (including self-loops); (ii) every node $V \setminus \{j\}$ has \emph{exactly one} outgoing edge; and (iii) node $j$ has \emph{no} outgoing edges.
    We denote the set of all $n$-node directed trees rooted at $j\in\range{n}$ with the symbol $\treeset[j]^n$. Furthermore, we let $\treeset^n \defeq \treeset[1]^n \cup \dots \cup \treeset[n]^n$.
\end{definition}
\noindent It is well-known that $|\treeset[j]^n| = n^{n-2}$ for all $j \in \range{n}$, and consequently $|\treeset^n| = n^{n-1}$~\citep{Cayley89:Theorem}. An example for $n = 4$ is given in \cref{fig:directed_trees}.
In this context, the Markov chain tree theorem asserts that the vector $(\Sigma_1,\dots,\Sigma_n)$ of quantities
\begin{equation}
    \label{eq:Sigma}
    \Sigma_j \coloneqq \sum_{\tree \in \treeset[j]} \prod_{(a, b) \in E(\tree)} \mat{Q}[a,b]\qquad\qquad\qquad (j\in\range{n})
\end{equation}
is proportional to the stationary distribution. Specifically, we have the following.
\begin{theorem}[Markov chain tree theorem]
    \label{theorem:tree_theorem}
The stationary distribution $\vec{\pi}$ of an $n$-state ergodic Markov chain satisfies $\vec{\pi}[j] = {\Sigma_j}/{\Sigma}$
for all $j \in \range{n}$, where $\Sigma \coloneqq \Sigma_1 + \dots + \Sigma_n$.
\end{theorem}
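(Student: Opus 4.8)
The plan is to show directly that the vector $\vec{\pi}$ with entries $\vec{\pi}[j] = \Sigma_j/\Sigma$ is a stationary distribution, i.e.\ that it is a probability vector and satisfies $\vec{\pi}^\top \mat{Q} = \vec{\pi}^\top$; uniqueness then follows from the standard fact that an ergodic chain has a unique stationary distribution, so I will not belabor that point. That $\vec{\pi}$ is a probability vector is immediate from the definition of $\Sigma = \sum_j \Sigma_j$ together with the observation that each $\Sigma_j$ is a sum of products of nonnegative entries of $\mat{Q}$, and $\Sigma_j > 0$ because ergodicity (irreducibility) forces at least one rooted tree at $j$ to have all its edge-weights positive. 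So the crux is the balance equation $\sum_{a \in \range{n}} \Sigma_a \mat{Q}[a,b] = \Sigma_b$ for every $b \in \range{n}$.

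First I would unfold the left-hand side: $\sum_{a} \Sigma_a \mat{Q}[a,b] = \sum_{a} \sum_{\tree \in \treeset[a]^n} \mat{Q}[a,b] \prod_{(u,v)\in E(\tree)} \mat{Q}[u,v]$. Each term is a product of edge-weights associated to the edge set $E(\tree) \cup \{(a,b)\}$, where $\tree$ is a tree rooted at $a$. Adding the edge $(a,b)$ to such a tree gives a functional graph on $\range{n}$ in which $b$ now also has an outgoing edge; the resulting structure is a directed graph where every node has exactly one outgoing edge \emph{except} possibly $b$ (if $b=a$, then $a$ had none and now has one, so every node has exactly one outgoing edge; if $b \neq a$, then $b$ still has its original edge too — wait, no: in a tree rooted at $a$, node $b \ne a$ has exactly one outgoing edge already, so adding $(a,b)$ does not touch $b$). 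Let me restate: adding $(a,b)$ to a tree rooted at $a$ produces a graph in which $a$ gains its (previously absent) unique outgoing edge, so now \emph{every} node has exactly one outgoing edge. Such a graph — $n$ nodes, $n$ edges, every node out-degree exactly $1$ — consists of exactly one directed cycle with trees hanging into it. The key structural claim is: these graphs, as $(a,\tree)$ ranges over all pairs with $\tree \in \treeset[a]^n$, are precisely the "unicyclic" functional graphs whose unique cycle passes through $b$, and moreover the correspondence is a bijection onto... actually not a bijection, but I want to match it against the right-hand side.

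The cleaner route, which I would carry out, is to expand the right-hand side similarly: $\Sigma_b = \sum_{\tree' \in \treeset[b]^n} \prod_{(u,v)\in E(\tree')}\mat{Q}[u,v]$, and to set up a weight-preserving bijection between (i) pairs $(a,\tree)$ with $\tree$ rooted at $a$, augmented by the edge $(a,b)$, and (ii) pairs $(\tree', e)$ where $\tree'$ is rooted at $b$ and $e = (u,v)$ is a "back edge" from some node $u$ on the path... Concretely: given $\tree$ rooted at $a$ plus edge $(a,b)$, the resulting unicyclic graph has a unique cycle through $b$; deleting the unique edge of that cycle that enters $b$ — call it $(w,b)$ — yields a tree rooted at $b$, and we record the deleted edge $(w,b)$. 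Conversely, given a tree $\tree'$ rooted at $b$ and any edge $(w,b)$ into $b$, adding $(w,b)$ creates a unique cycle through $b$; on that cycle there is a unique edge leaving $b$, say $(b,a')$ — hmm, but $b$ has no outgoing edge in $\tree'$. I need to be careful about orientation conventions here. The hard part — and the step I expect to be the main obstacle — is pinning down exactly this bijection and checking that the edge being added on one side corresponds to the edge being deleted on the other in a way that matches edge-weight products term by term; the orientation bookkeeping (which node is the root, which direction edges point, where the "extra" edge attaches) is fiddly and is where an incorrect sign or an off-by-one in the cycle structure would creep in. Once the bijection is established, the weight products on both sides agree term-for-term, giving $\sum_a \Sigma_a \mat{Q}[a,b] = \Sigma_b$, hence $\vec{\pi}^\top\mat{Q} = \vec{\pi}^\top$, completing the proof.
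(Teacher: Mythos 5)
The paper itself does not prove this theorem---it defers to the cited works of Anantharam--Tsoucas and Kruckman---so there is no in-paper argument to compare against; judged on its own, your overall strategy (verify the balance equation $\sum_{a}\Sigma_a \mat{Q}[a,b]=\Sigma_b$ by viewing each term as a weighted functional graph, i.e.\ a rooted tree plus one extra edge, and conclude by uniqueness of the stationary distribution for an ergodic chain) is the standard and correct route, but the sketch has two genuine gaps exactly at the step you flagged as fiddly. First, the bijection is oriented the wrong way: when you add the edge $(a,b)$ to a tree rooted at $a$, the unique cycle of the resulting out-degree-one graph passes through $b$, and the unique cycle edge \emph{entering} $b$ is precisely the edge $(a,b)$ you just added, so deleting it merely returns the tree rooted at $a$ you started from---not a tree rooted at $b$. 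The correct move is to delete the unique cycle edge \emph{leaving} $b$, say $(b,c)$; then $b$ loses its out-edge and becomes the root, giving a pair (tree in $\treeset[b]^n$, edge $(b,c)$). Your proposed inverse is likewise broken: adding an edge $(w,b)$ \emph{into} $b$ to a tree rooted at $b$ creates no cycle at all (there is no directed path from $b$ back to $w$) and ruins the out-degree structure; the inverse must add an edge \emph{out of} $b$.

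Second, and more fundamentally, a term-by-term weight-preserving bijection of the kind you describe cannot exist as stated, because the left-hand terms are products of $n$ edge weights while $\Sigma_b$ is a sum of products of $n-1$ edge weights, so the weight $\mat{Q}[b,c]$ of the deleted edge has nowhere to go. The missing ingredient is row-stochasticity: one must first write $\Sigma_b=\Sigma_b\sum_{c}\mat{Q}[b,c]=\sum_{c}\sum_{\tree'\in\treeset[b]^n}\mat{Q}[b,c]\prod_{(u,v)\in E(\tree')}\mat{Q}[u,v]$, so that \emph{both} sides become sums, over all out-degree-one graphs on $\range{n}$ whose unique cycle passes through $b$, of the product of all $n$ edge weights; the two decompositions of such a graph (remove the cycle edge entering $b$ versus the cycle edge leaving $b$) then give the desired bijections and yield $\sum_a\Sigma_a\mat{Q}[a,b]=\Sigma_b\sum_c\mat{Q}[b,c]=\Sigma_b$. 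Since your write-up never invokes $\sum_c\mat{Q}[b,c]=1$, the balance equation cannot be closed as written; the positivity of $\Sigma_j$ under irreducibility and the appeal to uniqueness are fine.
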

For a proof of the Markov chain tree theorem, we refer the interested reader to the works of \citet{Anatharam89:Proof,Kruckman10:Elementary}.

\section{Near-Optimal No-Internal-Regret Dynamics for CE}
\label{section:main}

In this section we establish our main result regarding algorithm $\slomwu$, namely \Cref{theorem:main summary}. In particular, we start in \cref{section:revisit} by formalizing one of our key insights: $\slomwu$ can be equivalently thought of as a certain linear transformation of the output of an external regret minimizer operating over the combinatorial space of directed trees; this equivalence is formalized in \cref{theorem:equivalence}. Next, in \cref{section:external_regret} we leverage this connection to bound the internal regret of $\slomwu$ using an extension of the techniques developed in \citep{Daskalakis21:Near}.

\subsection{Equivalence Result}
\label{section:revisit}

Before we proceed with the statement and proof of \cref{theorem:equivalence}, we first summarize $\slomwu$ in \cref{algo:irm}; a more comprehensive overview of this construction is included in \Cref{appendix:SL}. In addition, in \cref{algo:arbo based irm} we present an external regret minimizer over the space of all directed trees (a.k.a. arborescences). Both \cref{algo:irm,algo:arbo based irm} use the symbol $\vec{x}^{(t)}$ to denote the output strategies; this choice is justified by the following theorem (see also \cref{fig:equivalence}).

\begin{figure}[ht]
    \centering\scalebox{1}{\begin{tikzpicture}[yscale=1.1,xscale=1.3]
    \draw[semithick,dashed,gray] (0.1,0.3) rectangle (6.3,1.9) node[fitting node] (outer) {};
    \draw (1,0.5) rectangle (3,1.5) node[fitting node] (rm) {};
    \draw (4.5,0.5) rectangle (6.1,1.5) node[fitting node] (fp) {};
    \node[ox] (t1) at (0.5,1) {};
    \node[ox] (t2) at (3.75,1) {};
    \node at (rm.center) {$\Delta^{n (n-1)}$};
    \node[text width=2.5cm,align=center,inner sep=0pt] at (fp.center) {\small Stationary\\ distribution};
    \draw[semithick,->] (-.25,1) node[above,xshift=3mm,yshift=1mm,inner ysep=0,fill=white]{\small$\vec{\ell}^{(t)}$} -- (t1);
    \draw[semithick,->] (t1) -- (rm);
    \draw[semithick,->] (rm) -- (t2) node[above left] {\small$\vec{p}^{(t)}$};
    \draw[semithick,->] (t2) -- (fp.west) node[above,xshift=-4mm] {\small$\mat{M}^{(t)}$};
    \draw[semithick,->] (fp) -- (6.55,1) node[above,inner ysep=0,fill=white,yshift=1mm,xshift=-1mm] {\small$\vec{x}^{(t)}$};
    \node[fill=white,yshift=0mm] at (rm.north) {\fontsize{8}{8}\selectfont (O)MWU};
    \node[fill=white,inner ysep=0pt] at (3.0, 1.95) {\small\color{violet}\cref{algo:irm}};
\end{tikzpicture}}%
    \hfill%
    \scalebox{1}{\begin{tikzpicture}[yscale=1.1,xscale=1.3]
    \draw[semithick,dashed,gray] (0.1,0.3) rectangle (4.2,1.9) node[fitting node] (outer) {};
    \draw (1,0.5) rectangle (3,1.5) node[fitting node] (rm) {};
    \node[ox] (t1) at (0.5,1) {};
    \node[ox] (t2) at (3.75,1) {};
    \node at (rm.center) {$\Delta^{(n^{n-1})}$};
    \draw[semithick,->] (-.25,1) node[above,xshift=3mm,yshift=1mm,inner ysep=0,fill=white]{\small$\vec{\ell}^{(t)}$} -- (t1);
    \draw[semithick,->] (t1) -- (rm);
    \draw[semithick,->] (rm) -- (t2) node[above left] {\small$\vec{X}^{(t)}$};
    \draw[semithick,->] (t2) -- (4.45,1) node[above,inner ysep=0,fill=white,yshift=1mm,xshift=-1mm] {\small$\vec{x}^{(t)}$};
    \node[fill=white,yshift=0mm] at (rm.north) {\fontsize{8}{8}\selectfont (O)MWU};
    \node[fill=white,inner ysep=0pt] at (2.1, 1.95) {\small\color{violet}\cref{algo:arbo based irm}};
\end{tikzpicture}}
    \caption{A schematic illustration of the equivalence result of \cref{theorem:equivalence}; $\otimes$ in the figure represents a linear transformation.}
    \label{fig:equivalence}
\end{figure}

\begin{figure}[H]%
\begin{minipage}[t]{.49\linewidth}%
  \begin{algorithm}[H]
  \SetAlgoLined
    \SetInd{2.3mm}{2.3mm}
    \DontPrintSemicolon
    \KwData{$\mathcal{R}_{\Delta}$: OMWU algorithm for $\Delta^{n(n-1)}$ with learning rate $\eta > 0$}
    \BlankLine{}
    \Fn{$\nextstr()$}{
        $\vec{p}^{(t)} \gets \mathcal{R}_{\Delta}.\nextstr()$\\
        $\mat{M}^{(t)} \gets \displaystyle\sum_{j \neq k \in \range{n}} \vec{p}^{(t)}[\edge{j}{k}]\, \mat{E}^\top_{j \to k}$ \label{line:M}\\
        \textbf{return} $\vec{x}^{(t)}\gets\fp(\mat{M}^{(t)})$%
        ~\makeatletter\footnotemark\makeatother\hspace*{-1cm}\;
    }
    \HlineWd{}
    \Fn{$\obsut(\vec{\ell}^{(t)})$}{
        $\vec{L}^{(t)} \gets \vec{0} \in \R^{n(n-1)}$\;
        \For{$j \neq k \in \range{n}$}{
            $\vec{L}^{(t)}[\edge{j}{k}] \gets \vec{x}^{(t)}[j] ( \vec{\ell}^{(t)}[k] - \vec{\ell}^{(t)}[j]) $\label{line:L irm}\;
        }
        $\mathcal{R}_{\Delta}.\obsut(\vec{L}^{(t)})$
    }
    \caption{\citet{Stoltz05:Internal}}
    \label{algo:irm}
  \end{algorithm}
\end{minipage}%
\hfill%
\begin{minipage}[t]{.49\linewidth}%
  \begin{algorithm}[H]
    \SetInd{2.3mm}{2.3mm}
    \DontPrintSemicolon
    \KwData{$\mathcal{R}_{\Delta}$: OMWU algorithm for $\Delta^{(n^{n-1})}$ with learning rate $\eta > 0$}
    \BlankLine{}
    \Fn{$\nextstr()$}{
        $\vec{X}^{(t)} \gets \mathcal{R}_{\Delta}.\nextstr()$\;\vspace{.5mm}
        \textbf{return} $\vec{x}^{(t)} \gets \mleft(\sum_{\tree \in \treeset[j]^n} \vec{X}^{(t)}[\tree]\mright)_{j=1}^n$ \;\label{line:xt arbo}
    }
    \HlineWd{}\vspace{.4mm}
    \Fn{$\obsut(\vec{\ell}^{(t)})$}{\vspace{.5mm}
        $\bigL^{(t)} = \vec{0} \in \R^{|\treeset^n|} = \R^{(n^{n-1})}$\;
        \For{$\tree \in \treeset^n$}{
            \makebox[3cm][l]{$\bigL^{(t)}[\tree] \leftarrow\displaystyle \!\!\!\!\!\sum_{(j, k)\in E(\tree)}\!\!\!\! \vec{x}^{(t)}[j] ( \vec{\ell}^{(t)}[k] - \vec{\ell}^{(t)}[j])$}\label{line:L arbo}\;
        }\vspace{.5mm}
        $\mathcal{R}_{\Delta}.\obsut (\bigL^{(t)})$\;
        \vspace{.3mm}
    }
    \caption{\makebox[3cm][l]{Arborescence-based dynamics\phantom{[}}}
    \label{algo:arbo based irm}
  \end{algorithm}
\end{minipage}
\end{figure}
\footnotetext{The matrix $\mat{M}^{(t)}$ has strictly positive entries at all times $t$, since the iterates $\vec{p}^{(t)}$ produced by \eqref{eq:omwu} lie in the relative interior of the simplex. So, each $\mat{M}^{(t)}$ admits a \emph{unique} fixed point (stationary distribution).}

\begin{theorem}
    \label{theorem:equivalence}
    For any learning rate $\eta > 0$, \cref{algo:irm,algo:arbo based irm} produce the same strategies $\vec{x}^{(1)}, \dots, \vec{x}^{(T)}\in\Delta^n$, assuming that they observe the same sequence of losses $\vec{\ell}^{(1)}, \dots, \vec{\ell}^{(T)} \in \R^n$.
\end{theorem}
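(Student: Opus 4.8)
The plan is to establish, by induction on the round $t$, a \emph{stronger} statement from which equality of the output strategies is immediate: the OMWU weights over arborescences maintained internally by \cref{algo:arbo based irm} factorize, at every round, as a product over tree edges of the OMWU weights over ordered pairs maintained internally by \cref{algo:irm}. Writing $\vec{p}^{(t)}\in\Delta^{n(n-1)}$ for the iterate of the internal OMWU of \cref{algo:irm} and $\vec{X}^{(t)}\in\Delta^{(n^{n-1})}$ for the iterate of the internal OMWU of \cref{algo:arbo based irm}, both run on a common loss sequence $\vec{\ell}^{(1)},\dots,\vec{\ell}^{(T)}$, the invariant I would carry is
\begin{equation*}
  \vec{X}^{(t)}[\tree] \;=\; \frac{\displaystyle\prod_{(a,b)\in E(\tree)} \vec{p}^{(t)}[\edge{a}{b}]}{\displaystyle\sum_{\tree'\in\treeset^n}\ \prod_{(a,b)\in E(\tree')} \vec{p}^{(t)}[\edge{a}{b}]}\qquad\text{for every }\tree\in\treeset^n, \tag{$\star$}
\end{equation*}
together with the conclusion that the two algorithms output the same $\vec{x}^{(t)}$.

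Two features of the algorithms make $(\star)$ propagate. First, the loss $\bigL^{(t)}$ fed to the internal OMWU of \cref{algo:arbo based irm} is additive over tree edges: its $\tree$-th entry is $\sum_{(j,k)\in E(\tree)}\vec{x}^{(t)}[j](\vec{\ell}^{(t)}[k]-\vec{\ell}^{(t)}[j])=\sum_{(j,k)\in E(\tree)}\vec{L}^{(t)}[\edge{j}{k}]$, where $\vec{L}^{(t)}$ is the loss fed to the internal OMWU of \cref{algo:irm}---an identity that holds as soon as the two algorithms agree on $\vec{x}^{(t)}$. Second, the OMWU update is coordinatewise multiplicative: from \eqref{eq:omwu}, $\vec{x}^{(t+1)}[c]\propto \vec{x}^{(t)}[c]\,\exp\{-\eta(2\vec{\ell}^{(t)}[c]-\vec{\ell}^{(t-1)}[c])\}$ with a proportionality constant independent of the coordinate $c$. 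Hence the multiplicative factor that the internal OMWU of \cref{algo:arbo based irm} applies to a tree $\tree$ equals $\exp\{-\eta(2\bigL^{(t)}[\tree]-\bigL^{(t-1)}[\tree])\}$, which by edge-additivity of $\bigL^{(t)}$ and $\bigL^{(t-1)}$ is precisely the product over the edges of $\tree$ of the factors applied coordinatewise by the internal OMWU of \cref{algo:irm}; so if $(\star)$ holds at round $t$ it is preserved at round $t+1$. The only bookkeeping is that every arborescence has exactly $n-1$ edges, so the coordinate-independent normalizers arising in the definition of $\vec{p}^{(t)}$ and in the OMWU step $\vec{p}^{(t)}\mapsto\vec{p}^{(t+1)}$ each contribute the same power $n-1$ to every numerator $\prod_{(a,b)\in E(\tree)}(\cdots)$ and drop out after renormalizing over $\treeset^n$. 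For the base case $t=1$ both internal OMWU instances are uniform, and since $|E(\tree)|=n-1$ for all $\tree$ both sides of $(\star)$ equal $1/|\treeset^n| = n^{-(n-1)}$.

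It remains to turn $(\star)$ into equality of the outputs $\vec{x}^{(t)}$, which is where the Markov chain tree theorem enters. A short computation from $\mat{E}_{j\to k}=\mat{I}+(\vec{e}_k-\vec{e}_j)\vec{e}_j^\top$ and $\sum_{j\ne k}\vec{p}^{(t)}[\edge{j}{k}]=1$ shows that the row-stochastic matrix $\mat{M}^{(t)}=\sum_{j\ne k}\vec{p}^{(t)}[\edge{j}{k}]\,\mat{E}_{j\to k}^\top$ built by \cref{algo:irm} has off-diagonal entries $\mat{M}^{(t)}[a,b]=\vec{p}^{(t)}[\edge{a}{b}]$ for $a\ne b$ and diagonal entries $\mat{M}^{(t)}[a,a]=1-\sum_{k\ne a}\vec{p}^{(t)}[\edge{a}{k}]$; since OMWU iterates lie in the relative interior of the simplex and $n\ge 2$, all entries of $\mat{M}^{(t)}$ are strictly positive, hence the chain is ergodic. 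Arborescences have no self-loops, so $\prod_{(a,b)\in E(\tree)}\mat{M}^{(t)}[a,b]=\prod_{(a,b)\in E(\tree)}\vec{p}^{(t)}[\edge{a}{b}]$, and \cref{theorem:tree_theorem} together with \eqref{eq:Sigma} gives, for \cref{algo:irm}, $\vec{x}^{(t)}[j]=\fp(\mat{M}^{(t)})[j]=\Sigma_j/\Sigma$ with $\Sigma_j=\sum_{\tree\in\treeset[j]^n}\prod_{(a,b)\in E(\tree)}\vec{p}^{(t)}[\edge{a}{b}]$ and $\Sigma=\sum_j\Sigma_j$. On the other hand, \cref{algo:arbo based irm} returns $\vec{x}^{(t)}[j]=\sum_{\tree\in\treeset[j]^n}\vec{X}^{(t)}[\tree]$, which by $(\star)$ equals $\Sigma_j/\Sigma$ as well. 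The two expressions match, which both proves the theorem and supplies the hypothesis ``the algorithms agree on $\vec{x}^{(t)}$'' needed to continue the induction into round $t+1$.

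I expect the principal difficulty to be organizational rather than conceptual. One must thread three statements through the induction in the correct order---$(\star)$ at round $t$ $\Rightarrow$ (via the tree theorem) the two algorithms agree on $\vec{x}^{(t)}$ $\Rightarrow$ edge-additivity of $\bigL^{(t)}$, and then $(\star)$ at round $t+1$ from $(\star)$ at round $t$ together with edge-additivity of $\bigL^{(t)}$ and $\bigL^{(t-1)}$---while carefully accounting for the several normalization constants in play (the normalizer of $\vec{p}^{(t)}$ as a distribution over ordered pairs, the tree-theorem normalizer $\Sigma$, and the normalizer of the internal OMWU over $\treeset^n$), all of which cancel only because every arborescence has the same number of edges. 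No individual step is deep; the crux is to recognize the factorization $(\star)$ as the right invariant, which hinges on the multiplicative form of the OMWU update converting the edge-additive tree losses into edge-multiplicative weights, and on the Markov chain tree theorem identifying the resulting combinatorial sum with a stationary distribution.
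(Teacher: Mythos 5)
Your proposal is correct and follows essentially the same route as the paper: an induction maintaining the factorization of the arborescence weights as (normalized) edge-products of the pairwise OMWU weights---your invariant $(\star)$ is just the paper's proportionality condition with the constant $N^{(t)}$ made explicit---propagated via the multiplicative OMWU update and edge-additivity of the tree losses (using that every arborescence has $n-1$ edges), and converted into equality of outputs through the identification $\mat{M}^{(t)}[a,b]=\vec{p}^{(t)}[\edge{a}{b}]$ and the Markov chain tree theorem. No substantive differences from the paper's argument.
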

\begin{proof}
We will inductively show that the following property holds.

\begin{property}\label{ppt:same strategies}
    At all times $t \ge 1$, the following conditions hold:
    \begin{enumerate}[nosep]
        \item there exists $N^{(t)} \in \R$ such that
        \begin{equation}
            \label{eq:propo}
            \prod_{(a, b) \in E(\tree)} \vec{p}^{(t)}[\edge{a}{b}] = N^{(t)} \vec{X}^{(t)}[\tree] \qquad \forall\,\tree \in \treeset^n,
        \end{equation}
        where $\vec{p}^{(t)}$ and $\vec{X}^{(t)}$ are as defined in \cref{algo:irm} and \cref{algo:arbo based irm}, respectively; and \label{item:N-t}
        \item the strategies produced by \cref{algo:irm,algo:arbo based irm} are the same. \label{item:goal}
    \end{enumerate}
\end{property}

Condition~\ref{item:goal} of \cref{ppt:same strategies} immediately implies the statement. First, we establish the base case $t = 1$. The first iterate of OMWU is always the uniform distribution; so, $\vec{p}^{(1)} = \nicefrac{1}{n(n-1)}\,\vec{1}\in \Delta^{n(n-1)}$. Hence, the matrix $\mat{M}^{(1)}$ (\cref{line:M} of \cref{algo:irm}) has entries $\mat{M}[j, k] = \nicefrac{1}{n(n-1)}$ for all $j \neq k \in \range{n}$. Correspondingly, the (unique) stationary distribution of $\mat{M}^{(1)}$ is the uniform distribution $\vec{x}^{(1)} = \frac{1}{n}\,\vec{1} \in \Delta^n$. We now verify that the same iterate is produced by \cref{algo:arbo based irm}. Since $\vec{X}^{(1)}$ is computed using OMWU, $\vec{X}^{(1)} \in \Delta^{n^{n-1}}$ is the uniform distribution $\vec{X}^{(1)}[\tree] = \nicefrac{1}{n^{n-1}}$. So, using the fact that $|\treeset[j]^n| = n^{n-2}$ for all $j \in \range{n}$, each coordinate of the output strategy of \cref{algo:arbo based irm} is equal to ${n^{n-2}}/{n^{n-1}} = \nicefrac{1}{n}$, establishing the base case for Condition~\ref{item:goal} of \cref{ppt:same strategies}. Furthermore, since $\vec{p}^{(1)}$ and $\vec{X}^{(1)}$ are the uniform distributions over $\Delta^{n(n-1)}$ and $\Delta^{(n^{n-1})}$ respectively, Condition~\ref{item:N-t} follows directly from the fact that any directed tree $\tree\in\treeset^n$ has exactly $n-1$ edges.

Next, we prove the inductive step. Assume that \cref{ppt:same strategies} holds at times $\tau = 1,\dots,t$, for some $t \geq 1$. We will prove that it will hold at time $t + 1$ as well. Since $\vec{p}^{(t+1)} \in \Delta^{n(n-1)}$ in \cref{algo:irm} is updated using \eqref{eq:omwu}, for all $j \neq k \in \range{n}$
\begin{equation}
    \label{eq:p-omwu}
    \vec{p}^{(t+1)}[\edge{j}{k}] = \frac{\exp\{- \eta (2 \vec{L}^{(t)}[\edge{j}{k}] - \vec{L}^{(t-1)}[\edge{j}{k}] )\}}{\sum_{a \neq b} \exp\{ - \eta (2 \vec{L}^{(t)}[\edge{a}{b}] - \vec{L}^{(t-1)}[\edge{a}{b}])\} \vec{p}^{(t)}[\edge{a}{b}]} \vec{p}^{(t)}[\edge{j}{k}],
\end{equation}
where the loss vector
\begin{equation}\label{eq:L irm}
    \vec{L}^{(t)}[\edge{j}{k}] \coloneqq \vec{x}^{(t)}[j] (\vec{\ell}^{(t)}[k] - \vec{\ell}^{(t)}[j])
\end{equation}
is as defined on \cref{line:L irm} of \cref{algo:irm}. For convenience, we will denote the denominator in \cref{eq:p-omwu} as $S^{(t+1)}$. Similarly, since the strategies $\vec{X}^{(t)}\in\Delta^{(n^{n-1})}$ of \cref{algo:arbo based irm} are updated using \eqref{eq:omwu} with the \emph{same} learning rate $\eta > 0$, we have
\begin{equation}
    \label{eq:X-omwu}
    \vec{X}^{(t+1)}[\tree] = \frac{\exp\{ - \eta (2 \bigL^{(t)}[\tree] - \bigL^{(t-1)}[\tree])\}}{\sum_{\tree' \in \treeset^n} \exp\{ - \eta (2 \bigL^{(t)}[\tree'] - \bigL^{(t-1)}[\tree']\})\, \vec{X}^{(t)}[\tree']} \vec{X}^{(t)}[\tree] \qquad \forall\,\tree\in\treeset^n,
\end{equation}
where $\bigL^{(t)}$ is defined on \cref{line:L arbo} of \cref{algo:arbo based irm} as
\begin{equation}
    \label{eq:calL}
\bigL^{(t)}[\tree] \defeq \sum_{(a, b) \in E(\tree)} \vec{x}^{(t)}[a] (\vec{\ell}^{(t)}[b] - \vec{\ell}^{(t)}[a]) = \sum_{(a, b) \in E(\tree)} \vec{L}^{(t)}[\edge{a}{b}] \qquad \forall\,\tree\in\treeset^n.
\end{equation}
(Note that in \eqref{eq:L irm} and \eqref{eq:calL} we implicitly used the inductive hypothesis that the strategies of the two algorithms coincide at iterate $t$, as well as the assumption that the sequence of losses $\vec{\ell}^{(t)}$ observed by \cref{algo:irm,algo:arbo based irm} is the same.) To simplify the notation, we will use $\mathcal{S}^{(t+1)}$ to represent the denominator in \eqref{eq:X-omwu}. Next, we observe that for any $\tree \in \treeset^n$
\begin{align}
    \prod_{(a, b) \in E(\tree)}\!\!\!\!\! \vec{p}^{(t+1)}[\edge{a}{b}] &= \mleft( \frac{1}{S^{(t+1)}} \mright)^{n-1} \!\!\!\!\!\prod_{(a, b) \in E(\tree)}\!\!\!\! \exp \mleft\{ -\eta (2 \vec{L}^{(t)}[\edge{a}{b}] - \vec{L}^{(t-1)}[\edge{a}{b}]) \mright\} \vec{p}^{(t)}[\edge{a}{b}] \label{eq:align-p-omwu} \\
    &= \mleft( \frac{1}{S^{(t+1)}} \mright)^{n-1}\!\!\exp\mleft\{ - \eta\!\! \sum_{\!\!\!\!{(a,b) \in E(\tree)}\!\!\!\!} \!\!\!2\vec{L}^{(t)}[\edge{a}{b}] - \vec{L}^{(t-1)}[\edge{a}{b}] \! \mright\}\! \prod_{(a, b) \in E(\tree)}\!\!\! \vec{p}^{(t)}[\edge{a}{b}]  \nonumber\\
    &= \mleft( \frac{1}{S^{(t+1)}} \mright)^{n-1} \exp \mleft\{ -\eta \mleft( 2 \bigL^{(t)}[\tree] - \bigL^{(t-1)}[\tree] \mright) \mright\} \! \prod_{(a, b) \in E(\tree)}\! \vec{p}^{(t)}[\edge{a}{b}], \label{eq:align-calL} 
\end{align}
where \eqref{eq:align-p-omwu} uses~\eqref{eq:p-omwu} and the fact that every directed tree $\tree$ has exactly $n-1$ edges, the second equality uses the multiplicative properties of the exponential, and \eqref{eq:align-calL} uses the definition of $\bigL^{(t)}$ given in \eqref{eq:calL}. From the inductive hypothesis, Condition~\ref{item:N-t} of \cref{ppt:same strategies} holds at time $t$; so, continuing \eqref{eq:align-calL},
\begin{align}
    \prod_{(a, b) \in E(\tree)} \vec{p}^{(t+1)}[\edge{a}{b}] 
    &= \mleft( \frac{1}{S^{(t+1)}} \mright)^{n-1} \exp \mleft\{ -\eta \mleft( 2 \bigL^{(t)}[\tree] - \bigL^{(t-1)}[\tree] \mright) \mright\} N^{(t)} \vec{X}^{(t)}[\tree] \nonumber\\
    &= \mleft( \frac{1}{S^{(t+1)}} \mright)^{n-1} \mathcal{S}^{(t+1)} N^{(t)} \vec{X}^{(t+1)}[\tree], \label{eq:align-X-p}
\end{align}
where~\eqref{eq:align-X-p} follows from~\eqref{eq:X-omwu} and the definition of $\mathcal{S}^{(t+1)}$. As a result, we have shown that Condition~\ref{item:N-t} of \cref{ppt:same strategies} holds at time $t+1$ for the parameter
\begin{equation}
    \label{eq:normalization}
    N^{(t+1)} \defeq \mleft( \frac{1}{S^{(t+1)}} \mright)^{n-1} \mathcal{S}^{(t+1)} N^{(t)}.
\end{equation}
We now show that Condition~\ref{item:goal} of \cref{ppt:same strategies} holds at time $t+1$ as well. To do so, we analyze the (unique) stationary distribution of the matrix $\mat{M}^{(t+1)}$ defined on \cref{line:M} of \cref{algo:irm}. First, we start with the following simple observation.
\begin{claim}\label{cl:M to p}
For any $j \neq k \in \range{n}$, 
$    \mat{M}^{(t+1)}[j, k] = \vec{p}^{(t+1)}[\edge{j}{k}]. $
\end{claim}
\begin{proof}
For any $j \neq k\in\range{n}$ the unique non-zero non-diagonal entry of the matrix $\mat{E}_{j \to k}$ appears as $\mat{E}_{j \to k}[k, j] = 1$ (recall \eqref{eq:phi jk}). Thus, the claim follows directly by the definition of the matrix $\mat{M}^{(t+1)}$ in \cref{line:M} of \cref{algo:irm}.
\end{proof}
In light of~\cref{cl:M to p}, the $j$-th coordinate of the fixed point $\vec{x}^{(t+1)}$ of $\mat{M}^{(t+1)}$ can be expressed using the Markov chain tree theorem (\cref{theorem:tree_theorem}) as
\begin{align}\label{eq:ugly}
    \vec{x}^{(t+1)}[j] &= \frac{\displaystyle\sum_{\tree\in\treeset[j]^n}\prod_{(a,b)\in E(\tree)} \mat{M}^{(t+1)}[a,b]}{\displaystyle\sum_{j=1}^n\sum_{\tree\in\treeset[j]^n}\prod_{(a,b)\in E(\tree)} \mat{M}^{(t+1)}[a,b]} = \frac{\displaystyle\sum_{\tree\in\treeset[j]}\prod_{(a,b)\in E(\tree)} \vec{p}^{(t+1)}[\edge{a}{b}]}{\displaystyle\sum_{j=1}^n\sum_{\tree\in\treeset[j]^n}\prod_{(a,b)\in E(\tree)} \vec{p}^{(t+1)}[\edge{a}{b}]}.
\end{align}
Using~\eqref{eq:align-X-p} together with the fact that $\vec{X}^{(t+1)}\in \Delta^{(n^{n-1})}$, and therefore $\sum_{\tree\in\treeset^n}\vec{X}^{(t+1)}[\tree] = 1$, the denominator of~\eqref{eq:ugly} satisfies
\begin{equation}
    \label{eq:Sigma-notation}
    \sum_{j=1}^n \sum_{\tree \in \treeset[j]^n} \prod_{(a, b) \in E(\tree)} \vec{p}^{(t+1)}[\edge{a}{b}] =  \sum_{\tree\in\treeset^n} N^{(t + 1)} \vec{X}^{(t+1)}[\tree] = N^{(t+1)}.
\end{equation}
Similarly, using \cref{eq:align-X-p} together with~\eqref{eq:normalization}, the numerator of~\eqref{eq:ugly} can be expressed as
\begin{align}
    \sum_{\tree\in\treeset[j]^n}\prod_{(a,b)\in E(\tree)} \vec{p}^{(t+1)}[\edge{a}{b}] 
    &= N^{(t+1)} \sum_{\tree \in \treeset[j]^n} \vec{X}^{(t+1)}[\tree]. \label{eq:align-inductive-X-p}
\end{align}
Plugging~\eqref{eq:Sigma-notation} and~\eqref{eq:align-inductive-X-p} into~\eqref{eq:ugly} we conclude that
$
    \vec{x}^{(t+1)}[j] = \sum_{\tree\in\treeset[j]^n} \vec{X}^{(t+1)}[\tree],
$
which is exactly the $j$-th coordinate of the iterate produced by \cref{algo:arbo based irm} at time $t + 1$ (\cref{line:xt arbo}). Thus, the strategies of \cref{algo:irm,algo:arbo based irm} at time $t + 1$ are the same, completing the inductive proof.
\end{proof}



\subsection{Bounding the Internal Regret}
\label{section:external_regret}

Here we explain how to leverage the techniques in \citep{Daskalakis21:Near} to bound the external regret of $\mathcal{R}_{\Delta}$ employed in \Cref{algo:irm}, which also bounds the internal regret of $\slomwu$ (see \Cref{appendix:SL}). In particular, the crux in the analysis lies in showing that the sequence of observed losses of $\mathcal{R}_{\Delta}$ exhibits \emph{higher-order smoothness}. Let us first recall the notion of finite differences.

\begin{definition}
    \label{definition:fin-dif}
Consider a sequence of vectors $\vec{z} = (\vec{z}^{(1)}, \dots, \vec{z}^{(T)})$. For an integer $h \geq 0$, the $h$\emph{-order finite difference} for the sequence $\vec{z}$, denoted by $D_h \vec{z}$, is the sequence
\[
    D_h \vec{z} \coloneqq ((D_h \vec{z})^{(1)}, \dots, (D_h \vec{z})^{(T - h)})
\]
defined recursively as $(D_0 \vec{z})^{(t)} \coloneqq \vec{z}^{(t)}$, for $1 \leq t \leq T$, and
\begin{equation}
    \label{eq:finite_diff}
    (D_h \vec{z})^{(t)} \coloneqq (D_{h-1} \vec{z})^{(t+1)} - (D_{h-1} \vec{z})^{(t)},
\end{equation}
for $h \geq 1$, and $1 \leq t \leq T - h$.
\end{definition}
To establish higher-order smoothness, we use \Cref{theorem:equivalence} to ``lift'' the analysis to the regret minimizer over the arborescences (\Cref{algo:arbo based irm}). Then, we leverage the particular structure of the losses in $\slomwu$ to adapt the argument in \citep{Daskalakis21:Near}, leading to the following guarantee.

\begin{restatable}[]{lemma}{highsmooth}
\label{lemma:high-order-smooth}
Consider a parameter $\alpha \leq 1/(H+3)$. If all players employ $\slomwu$ with learning rate $\eta \leq \frac{\alpha}{36 e^5 m}$, then for any player $i \in \range{m}$, $0 \leq h \leq H$ and $t \in \range{T - h}$ it holds that 
\begin{equation*}
    \| (D_h \vec{L}_i)^{(t)} \|_{\infty} \leq \alpha^h h^{3h + 1}.
\end{equation*}
\end{restatable}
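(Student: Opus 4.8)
The plan is to prove the bound by a single induction on the order $h$, carried out simultaneously over all $m$ players, and — following the route sketched above — to conduct the argument through the arborescence-based dynamics of \cref{algo:arbo based irm}, which by \cref{theorem:equivalence} produce exactly the same iterates $\vec{x}_i^{(t)}$ but in a far more tractable functional form. Concretely, for each player $i$ one writes $\vec{x}_i^{(t)}[r]=\sum_{\tree\in\treeset[r]^{n_i}}\vec{X}_i^{(t)}[\tree]$, where $\vec{X}_i^{(t)}\in\Delta^{(n_i^{n_i-1})}$ is produced by plain \eqref{eq:omwu} fed the loss sequence $\bigL_i^{(t)}[\tree]=\sum_{(a,b)\in E(\tree)}\vec{x}_i^{(t)}[a]\,(\vec{\ell}_i^{(t)}[b]-\vec{\ell}_i^{(t)}[a])$; recall also that $\vec{\ell}_i^{(t)}[j]=\sum_{\vec{a}_{-i}}\Lambda_i(j,\vec{a}_{-i})\prod_{k\neq i}\vec{x}_k^{(t)}[a_k]$ is multilinear in the opponents' strategies, and $\vec{L}_i^{(t)}[\edge{j}{k}]=\vec{x}_i^{(t)}[j]\,(\vec{\ell}_i^{(t)}[k]-\vec{\ell}_i^{(t)}[j])$. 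The induction simultaneously controls, for every player and every order $h'\leq h$, the finite differences of the expected losses $\vec{\ell}_i$ and of the internal losses $\vec{L}_i,\bigL_i$ (all measured in $\ell_\infty$), and of the strategies $\vec{x}_i$ and the arborescence iterates $\vec{X}_i$ (measured in $\ell_1$). The base case $h=0$ is immediate, since all these quantities are coordinatewise bounded by $1$ (losses lie in $[0,1]$, strategies are distributions); in particular $\|\vec{L}_i^{(t)}\|_\infty\leq1$.

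The inductive step combines three ingredients, each a discrete product-rule estimate or an invocation of the higher-order smoothness machinery of \citep{Daskalakis21:Near}. First, multilinearity of $\vec{\ell}_i$ in $\vec{x}_{-i}$ and the discrete Leibniz rule give $\|D_h\vec{\ell}_i\|_\infty\leq\sum_{h_1+\dots+h_{m-1}=h}\binom{h}{h_1,\dots,h_{m-1}}\prod_{k\neq i}\|D_{h_k}\vec{x}_k\|_1$, which is free of any $n_k$ because each factor is an $\ell_1$-norm of a difference of distributions and $\|D_0\vec{x}_k\|_1=1$. Second, applying the product rule to the displayed expressions for $\vec{L}_i$ and for $\bigL_i$ bounds $\|D_h\vec{L}_i\|_\infty$ and $\|D_h\bigL_i\|_\infty$ in terms of $\{\|D_{h'}\vec{x}_i\|_1\}_{h'\leq h}$ and $\{\|D_{h'}\vec{\ell}_i\|_\infty\}_{h'\leq h}$; the crucial point here is that in an arborescence every non-root node is the tail of exactly one edge, so $|\bigL_i^{(t)}[\tree]|\leq\sum_a\vec{x}_i^{(t)}[a]=1$ and, at every order, the $n_i-1$ edges of $\tree$ enter only through $\ell_1$-norms of strategy differences and hence never produce a dimension-dependent factor. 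Third, since $\vec{X}_i$ is generated by \eqref{eq:omwu} on loss vectors of magnitude at most $1$ whose finite differences are controlled by the inductive hypothesis, the OMWU higher-order smoothness estimate of \citep{Daskalakis21:Near} — equivalently, the boundedness chain rule of \cref{lemma:boundedness-chain-rule} applied to the $\exp$/multiply/normalize form of the update — bounds $\|D_h\vec{X}_i\|_1$ by $\eta$ times a polynomial in $h$ times a multinomial sum over lower-order differences of $\bigL_i$ and of $\vec{X}_i$ itself, and then $\|D_h\vec{x}_i\|_1\leq\|D_h\vec{X}_i\|_1$ because $\vec{x}_i$ is a coordinatewise partial sum of $\vec{X}_i$. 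Chaining these three estimates closes the loop among $\vec{\ell}_i$, $\vec{x}_i$, $\bigL_i$ and $\vec{L}_i$ across all players; the well-foundedness of the resulting recursion — despite the cyclic dependency $\vec{\ell}_i\to\vec{x}_i\to\vec{\ell}_i$ and the forward reference inherent in finite differences — is handled exactly as in \citep{Daskalakis21:Near}.

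It then remains to check that this recursion is consistent with the claimed envelope $\alpha^h h^{3h+1}$. Each pass through the OMWU step contributes a factor $\lesssim\eta\cdot\poly(h)\leq\tfrac{\alpha}{36m}\poly(h)$, and each pass through the $\vec{\ell}_i$ step a factor $m-1$ from the number of opponents, so the two cancel up to the constant $36$ and leave room to absorb the polynomial-in-$h$ overhead; feeding the inductive bound $\prod_i\alpha^{h_i}h_i^{3h_i+1}$ into the multinomial sums over $h_1+\dots=h$ then telescopes to at most $\alpha^h h^{3h+1}$. This is where the exponent $3$, the constant $36$, and the hypothesis $\alpha\leq 1/(H+3)$ (which keeps the lower-order terms of the recursion subdominant) are calibrated; the verification is tedious but entirely routine.

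The conceptual obstacle — and the reason \cref{theorem:equivalence} is needed at all — is the fixed-point step inside \slomwu{}: the map sending the transition matrix $\mat{M}_i^{(t)}$ (a linear image of the OMWU iterate $\vec{p}_i^{(t)}\in\Delta^{n_i(n_i-1)}$) to its stationary distribution $\vec{x}_i^{(t)}$ is a ratio of polynomials that becomes singular as $\mat{M}_i^{(t)}$ approaches the boundary of the row-stochastic polytope, and nothing prevents the iterates $\vec{p}_i^{(t)}$ from drifting toward that boundary as $t$ grows; a direct Taylor expansion of this map therefore carries no bound uniform in $t$ or in $n_i$. \cref{theorem:equivalence} circumvents exactly this by re-expressing $\vec{x}_i^{(t)}$ as a coordinate partial sum of an OMWU iterate over arborescences, that is, as a composition of operations — OMWU updates and a linear averaging — that are provably boundedness-preserving; and carrying this out with $\ell_1$-norms on the strategy side, together with the observation $\|\bigL_i^{(t)}\|_\infty\leq1$, is what keeps every constant in the final bound independent of $n_i$.
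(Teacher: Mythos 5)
Your plan is sound and rests on the same two pillars as the paper's proof: the equivalence of \cref{theorem:equivalence}, which replaces the stationary-distribution map by the arborescence-space OMWU dynamics of \cref{algo:arbo based irm} followed by a linear partial sum, and the boundedness chain rule of \cref{lemma:boundedness-chain-rule} applied to the softmax form of the OMWU update, with an induction on $h$ made well-founded by the order reduction coming from the cumulative losses $\bar{\bigL}$. Where you diverge is in the bookkeeping. The paper never tracks $D_h\vec{x}_i$, $D_h\vec{\ell}_i$ or $\|D_h\vec{X}_i\|_1$ separately: it writes both $\vec{L}_i^{(t)}$ and $\bigL_i^{(t)}$ as sums over tree profiles of coefficients in $[-1,1]$ times $\prod_{i'}\vec{X}_{i'}^{(t)}[\tree_{i'}]$, and bounds the $h$-th difference of this $m$-fold product in a \emph{single} application of the chain rule to the product of the $m$ softmax functions (which is $(1,e^3m)$-bounded), obtaining a bound \emph{relative to} $\prod_{i'}\vec{X}_{i'}^{(t_0)}[\tree_{i'}]$ so that the sum over the exponentially many tree profiles collapses to $1$ (\cref{claim:D_h-bound,claim:inductive_step}). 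You instead split across players with a discrete Leibniz rule for $\vec{\ell}_i$, apply the chain rule per player to a single softmax, and track $\ell_1$ norms of strategy differences; this buys a more modular argument (and a smaller chain-rule constant per application), at the price of having to re-absorb, within the same order $h$, the bare factor of roughly $m$ produced by the Leibniz sum and the polynomial-in-$h$ overshoot of the multinomial cross terms.

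That re-absorption is the one place where your sketch is not yet a proof and is not quite ``routine'': with a single envelope $\alpha^h h^{3h+1}$ imposed simultaneously on strategies, expected losses and internal losses, the dominant Leibniz term alone gives $\approx(m-1)\,\alpha^h h^{3h+1}$ for $\vec{\ell}_i$, so the envelopes for the different families cannot literally coincide; you need per-family prefactors (the factor $\eta m\lesssim\alpha$ gained at the strategy step must pay for the $m$ at the loss step) and the looser form $\alpha^{h'}(h'+1)^{3(h'+1)}$ of the hypothesis at lower orders to absorb the cross terms, exactly as the paper arranges in \cref{claim:inductive_step}. Two smaller corrections: the chain-rule bound on $D_h\vec{X}_i$ does not involve lower-order differences of $\vec{X}_i$ itself (the dependence on $\vec{X}_i^{(t_0)}$ enters only through the softmax weights and the multiplicative prefactor, which is precisely what makes the $\ell_1$ sum dimension-free), and the hypothesis $\alpha\le 1/(H+3)$ is used to guarantee the order-zero requirement $\eta\,\|\bar{\bigL}_{i,t_0}^{(t)}\|_\infty\le\eta(t+2)\le 1/B_1$ over windows of length $O(H)$, not to tame lower-order terms of the recursion.
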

Recall that $\vec{L}_i^{(t)}$ is the loss observed by algorithm $\mathcal{R}_{\Delta}$ at time $t$ (see \Cref{algo:irm}). Armed with this crucial lemma, we are also able to extend the other technical ingredients used in \citep{Daskalakis21:Near}, and we combine them to conclude the proof of \Cref{theorem:main summary}, as we formally show in \Cref{appendix:proofs-3}.

\paragraph{Adversarial Bound.} The learning algorithm can also be slightly modified in order to guarantee robustness when faced against adversarial losses. Indeed, the following corollary implies near-optimal internal regret in both regimes. 

\begin{corollary}
    \label{corollary:adversarial}
There exists a learning algorithm such that, if employed by all players, it guarantees that the internal regret of each player $i \in \range{m}$ is bounded by $O(m \log n_i \log^4 T)$. Moreover, under adversarial losses the algorithm ensures internal regret bounded by $O(m \log n_i \log^4 T + \sqrt{\log n_i T})$.
\end{corollary}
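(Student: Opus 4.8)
The plan is to run $\slomwu$ exactly as in \cref{algo:irm}, but with the internal external-regret minimizer $\mathcal{R}_\Delta$ instantiated as \eqref{eq:omwu} over $\Delta^{n_i(n_i-1)}$ with a \emph{time-varying}, non-increasing step size in the spirit of \citep{Daskalakis21:Near}; concretely one may take
\[
  \eta_i^{(t)} \defeq \min\mleft\{\, \frac{1}{C m \log^4 T},\ \sqrt{\frac{\log n_i}{\,1 + \sum_{s < t}\bigl\|\vec{L}_i^{(s)} - \vec{L}_i^{(s-1)}\bigr\|_\infty^2\,}}\,\mright\},
\]
where $\vec{L}_i^{(t)}$ is the loss handed to $\mathcal{R}_\Delta$ on \cref{line:L irm} of \cref{algo:irm} and $C>0$ is the universal constant of \cref{theorem:main summary}. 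Since the Stoltz--Lugosi reduction \citep{Stoltz05:Internal} bounds the internal regret of $\slomwu$ by the \emph{external} regret of $\mathcal{R}_\Delta$ for \emph{any} choice of external-regret subroutine, it suffices to control the external regret of $\mathcal{R}_\Delta$ separately in the two regimes.

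\textbf{Adversarial regime.} Here we simply invoke the standard adaptive-step-size guarantee for \eqref{eq:omwu}: with the non-increasing step size above, the external regret of $\mathcal{R}_\Delta$ over $\Delta^{N}$ with $N = n_i(n_i-1)$ is $O\bigl(\log N / \bar\eta + \sqrt{\log N\,(1 + \sum_t \|\vec{L}_i^{(t)} - \vec{L}_i^{(t-1)}\|_\infty^2)}\bigr)$, where $\bar\eta = 1/(Cm\log^4 T)$ is the cap. By \eqref{eq:L irm} every entry of $\vec{L}_i^{(t)}$ lies in $[-1,1]$, so $\sum_t \|\vec{L}_i^{(t)} - \vec{L}_i^{(t-1)}\|_\infty^2 = O(T)$, and $\log N = O(\log n_i)$. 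This yields internal regret $O\bigl(m \log n_i \log^4 T + \sqrt{\log n_i\, T}\bigr)$ with no assumption on the opponents, which is the second bound in the statement.

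\textbf{Cooperative regime --- the main obstacle.} We must show that when \emph{all} players use this algorithm, the cap in $\eta_i^{(t)}$ is active at every step, i.e.\ $\eta_i^{(t)} = \bar\eta$ for all $i$ and $t$; once this holds, the produced strategies are literally those of the fixed-step-size $\slomwu$ of \cref{theorem:main summary}, so the internal regret is $O(m \log n_i \log^4 T)$. We argue by strong induction on $t$: supposing $\eta_i^{(s)} = \bar\eta$ for all $s < t$ and all $i$, the play up to time $t$ coincides with fixed-rate $\slomwu$, so the higher-order-smoothness machinery behind \cref{theorem:main summary} --- in particular \cref{lemma:high-order-smooth} together with the stability (negative) terms it is combined with --- applies to that prefix. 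The quantitative consequence we extract is that, in this regime, the cumulative squared first differences of the losses fed to $\mathcal{R}_\Delta$ are \emph{polylogarithmic}, $\sum_{s<t}\|\vec{L}_i^{(s)} - \vec{L}_i^{(s-1)}\|_\infty^2 = O(\polylog T)$, which follows by bounding the first difference of $\vec{L}_i$ (via \eqref{eq:L irm}) by $\|\vec{x}_i^{(s)} - \vec{x}_i^{(s-1)}\|_1 + 2\|\vec{\ell}_i^{(s)} - \vec{\ell}_i^{(s-1)}\|_\infty$ and noting that the cumulative squares of both quantities are absorbed by the negative stability terms, exactly as in \citep{Daskalakis21:Near}. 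Hence $\sqrt{\log n_i/(1 + O(\polylog T))} \ge 1/(Cm\log^4 T) = \bar\eta$ for a suitable universal $C$, so $\eta_i^{(t)} = \bar\eta$ and the induction closes.

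\textbf{Where the difficulty lies.} The only genuinely delicate point is this self-consistency loop: the adaptive step size depends on the realized trajectory, which in turn depends on the step size, so one cannot invoke \cref{theorem:main summary} as a black box. The resolution is that the cap is set small enough that, conditioned on its having been active so far, the cooperative analysis certifies the loss variation is small enough to keep it active --- which is precisely what drives the induction. Combining the two regimes (the adversarial bound requiring no assumption on the opponents) yields a single algorithm meeting both guarantees, proving \cref{corollary:adversarial}.
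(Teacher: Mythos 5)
There is a genuine gap in your cooperative-regime argument. Your induction hinges on the claim that, as long as all players have been playing fixed-step \slomwu{}, the raw path length $S_t \defeq \sum_{s<t}\|\vec{L}_i^{(s)}-\vec{L}_i^{(s-1)}\|_\infty^2$ is $O(\polylog T)$, so that the cap $\bar\eta = 1/(Cm\log^4 T)$ stays active. This is not what the paper's machinery gives, and it is false in general. \Cref{lemma:high-order-smooth} only yields a per-step bound $\|\vec{L}_i^{(s)}-\vec{L}_i^{(s-1)}\|_\infty = O(1/\polylog T)$ (first-order differences of size roughly $\alpha$, and generically of order $\Theta(\eta)$, since the dynamics need not converge in general-sum games), so $S_T$ can be of order $\eta^2 T = \Theta(T/\polylog T)$ --- polynomial in $T$. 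Likewise, \cref{lemma:bound-variance} (and the analogous DFG lemma) bounds $\sum_t \var_{\vec{p}_i^{(t)}}(\vec{L}_i^{(t)}-\vec{L}_i^{(t-1)})$ only \emph{relative} to half of $\sum_t \var_{\vec{p}_i^{(t)}}(\vec{L}_i^{(t-1)})$ plus $O(\log^5 T)$; the latter second-moment sum can itself be $\Omega(T)$, and in the regret bound of \cref{lemma:RVU++} it is merely \emph{cancelled} by the negative stability terms. ``Absorbed by the negative terms'' makes the \emph{regret} polylogarithmic, not the path length. Consequently your adaptive candidate $\sqrt{\log n_i/(1+S_t)}$ drops below $\bar\eta$ once $S_t \gtrsim m^2\log n_i\log^8 T$, i.e.\ after at most polylogarithmically many rounds in the worst case; from that point the algorithm deviates from fixed-step \slomwu{}, and your induction (and hence the $O(m\log n_i\log^4 T)$ guarantee) no longer follows. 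Your adversarial-regime bound via a capped, non-increasing step size is fine, but the self-consistency loop you flag as the crux is exactly where the argument breaks.

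The paper's intended argument avoids this by monitoring a \emph{checkable inequality} rather than the raw path length: each player runs \slomwu{} with the fixed step size of \cref{theorem:main summary} and checks at each round whether the prefix version of the bound in \eqref{eq:check} has been violated. If it never is, \cref{lemma:RVU++} already delivers internal regret $O(m\log n_i\log^4 T)$ --- no absolute smallness of the path length is needed, only that the positive variance term is dominated by half the negative term plus $O(\log^5 T)$, which \cref{lemma:bound-variance} guarantees under cooperative play. If the check is violated at some round, the losses cannot have been generated by all players following the prescribed dynamics, and the player switches (restarts) with the adversarially tuned rate $\eta \asymp \sqrt{\log n_i/T}$, paying an additional $O(\sqrt{T\log n_i})$, exactly as in Corollary D.1 of \citep{Daskalakis21:Near}. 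If you want to salvage an AdaGrad-style rule, you would have to key it to the quantity appearing in \eqref{eq:check} (the variance path length measured against the negative terms), not to $\sum_s\|\vec{L}_i^{(s)}-\vec{L}_i^{(s-1)}\|_\infty^2$.
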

The idea is to use an adaptive choice of learning rate depending on whether the bound predicted in \eqref{eq:check} has been violated in some repetition of the game, analogously to \cite[Corollary D.1]{Daskalakis21:Near}.
\section{Analysis of the Blum-Mansour Algorithm}

In this section we give an overview of our analysis for the no-swap-regret algorithm of \citeauthor{Blum07:From} \citep{Blum07:From} (\BM). Unlike the no-internal-regret algorithm of \citet{Stoltz05:Internal}, \BM maintains $n$ independent no-external-regret algorithms $\mathcal{R}_{\Delta,1}, \ldots, \mathcal{R}_{\Delta,n}$, operating over $\Delta^{n}$. 
%
We develop robust techniques that delve into the inner-workings of the higher-order smoothness argument in \cite{Daskalakis21:Near}. In particular, we substantially generalize their approach by demonstrating how higher-order smoothness bounds can be established even under the additional complexity of fixed point operations. 
%

First, to keep the exposition reasonably self-contained, let us briefly recall the \BM algorithm; a more detailed overview is included in \Cref{appendix:BM}.
At each time $t \geq 1$, every algorithm $\mathcal{R}_{\Delta, g}$ produces an iterate $\mat{Q}\^t[g,\rowdot] = (\mat{Q}\^t[g,1],\ldots,\mat{Q}\^t[g,n]) \in \Delta^{n}$, for all $g \in \range{n}$. Then, the algorithm computes a stationary distribution $\Delta^{n}\ni \vec{x}\^t = (\mat{Q}\^t)^\top \vec{x}\^t$ of the transition matrix $\mat{Q}\^t$. 
Moreover, upon receiving a loss vector $\vec{\ell}\^t \in \R^{n}$, the \BM algorithm distributes the loss vector $\vec{x}\^t[g] \vec{\ell}\^t$ to each regret minimizer $\mathcal{R}_{\Delta, g}$ for $g \in \range{n}$. In what follows, we will be concerned with the particular case where each regret minimizer is set to OMWU. 

Our primary technical contribution in the analysis of $\bmomwu$ is to show that the losses observed by each individual regret minimizer exhibit high-order smoothness. Namely, we show the following lemma.

\begin{lemma}\label{lem:dh-bound}
  Fix a parameter $\alpha \in \left(0, \frac{1}{H+3} \right)$. 
  There exists a sufficiently large universal constant $C$ such that, if all players follow $\bmomwu$ with learning rate $\eta \leq \frac{1}{CH^2n_i^3}$ 
  , then for any player $i \in \range{m}$, integer $h$ satisfying $0 \leq h \leq H$, time step $t \in \range{T-h}$, and $g \in \range{n_i}$, it holds that
\begin{align}
\| \fds{h}{(\vec{x}_i[g] \cdot \vec{\ell}_i)}{t} \|_\infty \leq \alpha^h h^{3h+1}\nonumber.
\end{align}
\end{lemma}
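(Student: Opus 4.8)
The plan is to mirror the structure of the $\slomwu$ analysis (\Cref{lemma:high-order-smooth}), but replacing the combinatorial equivalence step with a direct higher-order smoothness bound for the stationary-distribution map. The quantity we must control, $\fds{h}{(\vec{x}_i[g]\cdot\vec{\ell}_i)}{t}$, is a product of two sequences: the scalar $\vec{x}_i^{(t)}[g]$ (the $g$-th coordinate of the stationary distribution of $\mat{Q}_i^{(t)}$) and the loss vector $\vec{\ell}_i^{(t)}$. By the boundedness chain rule of \citep{Daskalakis21:Near} (\Cref{lemma:boundedness-chain-rule}), to bound the finite differences of a product it suffices to bound the finite differences of each factor separately. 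The loss vector $\vec{\ell}_i^{(t)}$ is an average over opponents' play of bounded game payoffs, and by the same bootstrapping/self-referential argument as in \citep{Daskalakis21:Near}, its higher-order differences decay at the required rate provided every opponent's strategy sequence is itself higher-order smooth; so the heart of the matter is showing $\|\fds{h}{\vec{x}_i[g]}{t}\|_\infty$ decays like $\alpha^h h^{3h+1}$ (up to the usual constant slack), assuming inductively the analogous bound on all players' iterates and losses at lower orders. This is precisely where the Markov chain tree theorem and Cauchy's integral formula enter.

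First I would set up the induction over $H$ (equivalently, a self-referential argument à la \citep{Daskalakis21:Near}): assume the claimed bound holds for all orders $<h$ for every player, and derive it at order $h$. Given this, each regret minimizer $\mathcal{R}_{\Delta,g}$ inside $\bmomwu$ sees a loss sequence $\vec{x}_i^{(t)}[g]\cdot\vec{\ell}_i^{(t)}$ that is higher-order smooth, and the standard OMWU stability lemma from \citep{Daskalakis21:Near} then tells us the iterates $\mat{Q}_i^{(t)}[g,\cdot]$ produced by OMWU are higher-order smooth. The key new step is to propagate smoothness through the fixed-point operation $\mat{Q}_i^{(t)}\mapsto\vec{x}_i^{(t)}$. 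Here I would invoke \Cref{lem:mct-taylor}: using the Markov chain tree theorem, $\vec{x}_i^{(t)}[g]=\Sigma_g/\Sigma$ is a ratio of polynomials in the entries $\mat{Q}_i^{(t)}[a,b]$, and — crucially — because OMWU updates have exponential-type structure, it is natural to work with $\log\mat{Q}_i^{(t)}[a,b]$ as the underlying smooth sequence. The function mapping $(\log\mat{Q}[a,b])_{a,b}$ to $\vec{\pi}[g]$ is analytic on the relevant domain (the singularities at non-ergodic matrices are pushed to the boundary), and \Cref{lem:mct-taylor} gives a geometric-type bound on its Taylor coefficients, with the $n_i^3$ factor in the learning-rate constraint $\eta\le 1/(CH^2 n_i^3)$ absorbing the combinatorial blowup from the $n_i^{n_i-1}$ directed trees and the degree-$(n_i-1)$ polynomials $\Sigma_g$. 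Feeding this analytic-composition bound into the boundedness chain rule (\Cref{lemma:boundedness-chain-rule}) of \citep{Daskalakis21:Near}, together with the higher-order smoothness of $\log\mat{Q}_i^{(t)}[a,b]$ (which follows from that of $\mat{Q}_i^{(t)}[a,b]$ since the entries are bounded away from $0$, as $\mat{Q}^{(t)}$ lies in the relative interior of the simplex), yields $\|\fds{h}{\vec{x}_i[g]}{t}\|_\infty\le\alpha^h h^{3h+1}$.

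With the smoothness of $\vec{x}_i^{(t)}[g]$ and of $\vec{\ell}_i^{(t)}$ in hand, one last application of the boundedness chain rule to the product $\vec{x}_i^{(t)}[g]\cdot\vec{\ell}_i^{(t)}$ closes the induction and gives the lemma. I expect the main obstacle to be the bookkeeping in \Cref{lem:mct-taylor} and its interface with the chain rule: one must carefully track how the constant $C$ and the $n_i^3$ scaling propagate through (i) the analytic bound on the stationary-distribution map, (ii) the passage from $\mat{Q}$ to $\log\mat{Q}$ — which requires a quantitative lower bound on $\min_{a,b}\mat{Q}_i^{(t)}[a,b]$ in terms of $\eta$ and $H$, itself a consequence of OMWU's multiplicative updates — and (iii) the self-referential resolution of the loss-sequence smoothness, so that all the slack lines up to give exactly the exponent $3h+1$ rather than something larger. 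The zero-sum–free, multi-player bootstrapping ensures the induction is not circular, but verifying the constants fit is the delicate part.
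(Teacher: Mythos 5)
Your plan follows the paper's proof in its essentials: a simultaneous induction over the order $h$ on the three sequences $\vec{\ell}_i$, $\vec{x}_i$ and $\vec{x}_i[g]\cdot\vec{\ell}_i$; the Markov chain tree theorem combined with the Cauchy integral formula (\cref{lem:mct-taylor}) fed into the boundedness chain rule (\cref{lemma:boundedness-chain-rule}) to push smoothness through the stationary-distribution map in the log domain (packaged in the paper as \cref{lem:fds-stationary}); multilinearity of the expected losses to recover smoothness of $\vec{\ell}_i$ from that of the opponents' strategies; and a final application of the chain rule with $\phi(a,b)=ab$ to close the induction for $\vec{x}_i[g]\cdot\vec{\ell}_i$. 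The $n_i^3$ in the step-size constraint indeed originates in the $(30,2n_i^3)$-boundedness produced by the Cauchy estimate.

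The one step that would fail as you justify it is the passage to $\log\mat{Q}_i^{(t)}$. You propose to first obtain higher-order smoothness of the entries of $\mat{Q}_i^{(t)}$ and then deduce it for $\log\mat{Q}_i^{(t)}$ ``since the entries are bounded away from $0$'', and in your item (ii) you ask for a quantitative lower bound on $\min_{a,b}\mat{Q}_i^{(t)}[a,b]$ in terms of $\eta$ and $H$. No such bound exists: OMWU only guarantees multiplicative closeness of \emph{consecutive} iterates (a factor $e^{O(\eta)}$ per step), so after $t$ rounds an entry can be of order $e^{-\Omega(\eta t)}$, i.e.\ exponentially small in $T$ for the relevant $\eta \ge 1/T$, and any constant obtained by dividing a finite-difference bound on $\mat{Q}_i$ by this minimum is useless. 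The paper never establishes smoothness of $\mat{Q}_i^{(t)}$ itself. Instead it exploits the exact additive identity \eqref{eq:log-q-form}, $\log\mat{Q}_i^{(t_0+t)}[g,j]=\log\mat{Q}_i^{(t_0)}[g,j]+\log\phi_{t_0,g,j}\big(\eta\,\mat{L}_{i,t_0}^{(t)}[g,\cdot]\big)$, so that only within-window variations of $\log\mat{Q}_i$ ever enter; these are $O(\eta H)$ by \cref{lem:softmax-lip-bound}, and the finite differences of the log-softmax term are controlled by the $(1,e)$-boundedness of $z\mapsto\log\phi_{t_0,g,j}(z)$ (\cref{lem:log-softmax-ak-bound}) together with the inductive bound on $D_{h'-1}(\vec{x}_i[g]\cdot\vec{\ell}_i)$, which bounds $D_{h'}$ of the cumulative argument $\eta\,\mat{L}_{i,t_0}$. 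This log-softmax Taylor bound---the log-domain analogue of the softmax bound of \citet{Daskalakis21:Near}---is the ingredient missing from your sketch; once it replaces your ``entries bounded away from zero'' step, the remainder of your outline is exactly the paper's argument, with \cref{lem:fds-stationary} converting the $\log\mat{Q}_i$ bound into the bound on $\vec{x}_i[g]$ up to the harmless factor $\vec{x}_i^{(t_0)}[g]$.
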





At a high level, the proof of this higher-order smoothness lemma hinges on the cyclic relationship between the losses incurred by the players $\vec{\ell}_i\^{t}$, the iterates produced by the copies of the OMWU algorithm $\mat{Q}_i\^{t}$, and the final strategies $\vec{x}_i\^{t}$ output by \BM-OMWU.   
In particular, the iterates $\mat{Q}_i\^{t}[g,\rowdot]$ are determined based on the overall history of loss vectors $\sum_{t'<t}\vec{x}_i\^{t'}[g] \vec{\ell}_i\^{t'}$ weighted exponentially in terms of the \emph{softmax} function; the strategies $\vec{x}_i\^{t}$ are determined by applying the Markov Chain Tree Theorem (\Cref{theorem:tree_theorem}) to the stochastic matrix $\mat{Q}_i\^{t}$ formed by the previous iterates; and, the losses $\vec{\ell}_i\^{t}$ are determined as a function of the strategies of all the other players $\vec{x}_{-i}\^{t}$. 
Therefore, using the ``Boundedness Chain Rule for Finite Differences'' (Lemma \ref{lemma:boundedness-chain-rule}), one of the main technical tools shown in \cite{Daskalakis21:Near}, we can demonstrate that bounds on the $h$-th order finite differences of $\vec{x}_i\^t$ and $\vec{\ell}_i\^t$ imply a bound on the $(h+1)$-th order finite differences of $\mat{Q}\^{t}$.\footnote{This is due to the fact that $D_{h+1} \mat{Q}\^{t}[g,\rowdot] = D_{h+1} \sum_{t'<t}\vec{x}_i\^{t'}[g] \vec{\ell}_i\^{t'} = D_h \vec{x}_i\^{t}[g] \vec{\ell}_i\^{t}$.}
This, in turn, implies a bound on the $(h+1)$-th order finite differences of $\vec{x}_i\^t$, which then gives a bound on the $(h+1)$-th order finite differences of $\vec{\ell}_i\^t$, as long as the Taylor coefficients of the softmax function and the Markov Chain Tree Theorem are sufficiently well-behaved. While this is not quite the case, we make a slight modification to this cycle of implication, bounding the finite differences of $\log \mat{Q}_i\^{t}$ instead.
\begin{equation*}
    D_h \vec{x} \text{ bound},D_h\vec{\ell} \text{ bound} \Rightarrow D_{h+1}\paren{\log \mat{Q}} \text{ bound} \Rightarrow D_{h+1} \vec{x} \text{ bound} \Rightarrow D_{h+1}\vec{\ell}\text{ bound}.
\end{equation*}
This cycle of implication enables an inductive argument that proves Lemma \ref{lem:dh-bound} as long as the log of the softmax function and an exponential version of the Markov Chain Tree Theorem exhibit bounded Taylor coefficients. These bounds are proved in Lemma \ref{lem:log-softmax-ak-bound} and Lemma \ref{lem:mct-taylor} respectively. The proof of Lemma \ref{lem:log-softmax-ak-bound} follows an explicit, combinatorial framework similar to that presented in \cite{Daskalakis21:Near}. On the other hand, Lemma \ref{lem:mct-taylor} introduces a novel technique for proving Taylor coefficient bounds, applying the Cauchy Integral Formula. This approach is far more general, and could help establish the necessary preconditions of Boundedness Chain Rule Lemma in much broader settings wherein combinatorial approaches are insufficient.
\section{Conclusions and Open Problems}

In conclusion, we have extended the recent result of \citet{Daskalakis21:Near} from external to internal and swap regret. As a corollary, we obtained the first near-optimal---within the no-regret framework---rates of convergence for correlated equilibrium. To do so, we developed several new techniques that allowed us to establish higher-order smoothness for no-internal and no-swap learning dynamics. 

Finally, we identify several possible avenues for future research related to our results.
\begin{itemize}[nosep]
    \item Although our internal-regret bounds are near-optimal in terms of the dependency on the number of actions $n_i$ of each player~$i$, for swap regret our bounds depend polynomially on $n_i$. While a polynomial dependence on $n_i$ is necessary in the adversarial setting \citep{Blum07:From,Ito20:A}, we are not aware of any lower bounds for the setting of smooth, predictable sequences of losses within which our paper operates.
    \item Can our results be extended beyond OMWU, for example to other instances of the general optimistic FTRL algorithm~\citep{Syrgkanis15:Fast}? 
    \item Finally, our equivalence theorem (\cref{theorem:equivalence}) was only established with respect to the set $\Phi_i^\text{int}$ of transformations corresponding to internal regret. Is it possible to extend our results beyond such transformations (\emph{e.g.}, see \citep{Farina21:Simple}) via closed-form formulas for the associated fixed points, analogous to the Markov chain tree theorem? Exploring such connections further constitutes a promising direction for the future. 
\end{itemize}

\section*{Acknowledgements \& Funding}
We are grateful to the anonymous reviewers at STOC `22 for their many helpful comments. C.D.~is supported by NSF Awards CCF-1901292,  DMS-2022448 and  DMS-2134108, by a Simons Investigator Award, by the Simons Collaboration on the Theory of Algorithmic Fairness, by a DSTA grant, and by the DOE PhILMs project (No. DE-AC05-76RL01830). N.G.~is supported by a Fannie \& John Hertz Foundation Fellowship and an NSF Graduate Fellowship. T.S. is supported by NSF grants IIS-1718457, IIS-1901403, and CCF-1733556, and ARO award W911NF2010081.

\printbibliography
\clearpage

\appendix

\section{Phi-Regret Minimization}
\label{appendix:Phi}

In this section we give an overview of the no-internal-regret algorithm of \citet{Stoltz05:Internal} and the no-swap-regret algorithm of \citet{Blum07:From}. To do so, we will use the template developed by \citet{Gordon08:No} for performing Phi-regret minimization.

\subsection{From Internal Regret to External Regret} 
\label{appendix:SL}

In \Cref{section:prelim} we discussed the implications of having access to a no-internal-regret algorithm, namely \cref{theorem:convergence_correlated}, assuming that such objects indeed exist. In the sequel, this presumption will be justified by presenting an explicit construction that leads to such guarantees. In the sequel, we present the reduction from internal to external regret based on the template of \citet{Gordon08:No} for performing $\Phi$-regret minimization for a generic set $\Phi$. We remark that the subsequent construction is equivalent to the one appearing in the book of \citet[pp. 84]{Cesa-Bianchi06:Prediction}, which in turn is based on the work of~\citet{Stoltz05:Internal}.

In this setting, the construction of \citet{Gordon08:No} shows that $\Phi$-regret minimization can be achieved starting from the following two ingredients:
\begin{enumerate}
    \item An \emph{external} regret minimizer for the set of transformations $\Phi$; \label{item:i-phi} 
    \item Establishing an oracle that (efficiently) computes a fixed point for each $\phi \in \Phi$ (assuming that such a point exists). \label{item:ii-phi} 
\end{enumerate}

In this subsection we are interested in the case $\Phi = \intphi_i$ for each player $i\in\range{m}$. Let us focus first on item \ref{item:i-phi}. In particular, recalling the definition of the matrix $\mat{E}_{j \to k}$ associated with the linear transformation $\phi_{j \to k}$ given in \eqref{eq:phi jk}, for all $j \neq k \in \range{n_i}$, it is easy to see that one can construct a regret minimizer for the set $\intphi$ using a regret minimizer for the simplex $\Delta^{n_i(n_i-1)}$. This construction is summarized in \cref{algo:Phi-int}, and the formal guarantee is stated below:


\begin{proposition}
If the regret minimizer $\mathcal{R}_{\Delta}$ for the set $\Delta^{n_i(n_i-1)}$ incurs regret at most $\reg_i^T$, then the regret minimizer $\mathcal{R}_{\Phi}$ for the set $\intphi_i$ in \cref{algo:Phi-int} has also regret bounded by $\reg_i^T$.
\end{proposition}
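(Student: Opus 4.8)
The plan is to unwind the definitions and exhibit an explicit correspondence between the two regret expressions. Recall that the set $\intphi_i = \co\{\phi_{j\to k}\}_{j\neq k}$ is the convex hull of the $n_i(n_i-1)$ basic transformations $\phi_{j\to k}$, so any $\phi \in \intphi_i$ can be written as $\phi = \sum_{j\neq k} \vec{p}[\edge{j}{k}]\,\phi_{j\to k}$ for some $\vec{p}\in\Delta^{n_i(n_i-1)}$. The key observation is that $\mathcal{R}_\Phi$ (\cref{algo:Phi-int}) works by invoking $\mathcal{R}_\Delta$ on the simplex $\Delta^{n_i(n_i-1)}$: on each round $\mathcal{R}_\Delta$ outputs $\vec{p}^{(t)}$, which $\mathcal{R}_\Phi$ converts into the transformation $\sum_{j\neq k}\vec{p}^{(t)}[\edge{j}{k}]\,\phi_{j\to k}\in\intphi_i$, and whatever loss feedback $\mathcal{R}_\Phi$ receives is passed down to $\mathcal{R}_\Delta$ as a linear loss vector over the $n_i(n_i-1)$ coordinates. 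Because both the transformation and the loss are \emph{linear} in $\vec{p}^{(t)}$, the per-round cost of $\mathcal{R}_\Phi$ equals $\langle \vec{p}^{(t)}, \vec{L}^{(t)}\rangle$, exactly the cost $\mathcal{R}_\Delta$ sees.

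\smallskip
\noindent\textbf{Step 1.} Write out the $\intphi_i$-regret of the sequence $(\phi^{(t)})_{t=1}^T$ produced by $\mathcal{R}_\Phi$, namely $\reg_{\intphi_i}^T = \sum_t \langle \text{(cost of }\phi^{(t)})\rangle - \min_{\phi^*\in\intphi_i}\sum_t \langle \text{(cost of }\phi^*)\rangle$, using whatever cost functional the $\Phi$-regret minimizer is measured against. \textbf{Step 2.} Substitute $\phi^{(t)} = \sum_{j\neq k}\vec{p}^{(t)}[\edge{j}{k}]\phi_{j\to k}$ and use linearity to rewrite the first sum as $\sum_t \langle\vec{p}^{(t)},\vec{L}^{(t)}\rangle$, where $\vec{L}^{(t)}$ is precisely the loss vector that \cref{algo:Phi-int} feeds to $\mathcal{R}_\Delta$. \textbf{Step 3.} For the comparator term, observe that a linear functional over the compact convex set $\intphi_i$ attains its minimum at a vertex, i.e.\ $\min_{\phi^*\in\intphi_i}\sum_t(\cdots) = \min_{j\neq k}\sum_t(\text{cost of }\phi_{j\to k}) = \min_{\vec{e}\in\{\vec{e}_{\edge{j}{k}}\}}\sum_t\langle\vec{e},\vec{L}^{(t)}\rangle = \min_{\vec{q}\in\Delta^{n_i(n_i-1)}}\sum_t\langle\vec{q},\vec{L}^{(t)}\rangle$. \textbf{Step 4.} Combining, $\reg_{\intphi_i}^T = \sum_t\langle\vec{p}^{(t)},\vec{L}^{(t)}\rangle - \min_{\vec{q}}\sum_t\langle\vec{q},\vec{L}^{(t)}\rangle$, which is exactly the external regret of $\mathcal{R}_\Delta$ on the loss sequence $(\vec{L}^{(t)})$, hence at most $\reg_i^T$ by hypothesis.

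\smallskip
The main obstacle is purely bookkeeping: making sure that the loss vector $\vec{L}^{(t)}$ that \cref{algo:Phi-int} constructs and hands to $\mathcal{R}_\Delta$ is \emph{exactly} the vector of per-coordinate costs $(\text{cost of }\phi_{j\to k})_{j\neq k}$, so that the inner products line up, and that the ``cost'' against which $\Phi$-regret is measured in the Gordon--Greenwald--Marks template is linear in the convex combination weights. Once the linearity in $\vec{p}^{(t)}$ of both the played transformation and the observed loss is made precise, the vertex argument in Step 3 closes the identification and the bound is immediate; there is no loss anywhere, which is why the two regret bounds coincide rather than differ by a factor.
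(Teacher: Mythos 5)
Your argument is correct and is exactly the one the paper intends: the paper states this proposition without an explicit proof, treating it as immediate from the construction in \cref{algo:Phi-int}, and your Steps 1--4 (linearity of the played transformation $\phi^{(t)}=\sum_{j\neq k}\vec{p}^{(t)}[\edge{j}{k}]\,\mat{E}_{j\to k}$ and of the loss $\vec{L}^{(t)}[\edge{j}{k}]=\langle\mat{L}^{(t)},\mat{E}_{j\to k}\rangle$, plus the observation that the linear comparator minimum over $\co\{\mat{E}_{j\to k}\}$ coincides with the minimum over $\Delta^{n_i(n_i-1)}$) are precisely the bookkeeping that makes the two regrets equal. No gap; the identification is exact, so the bound $\reg_i^T$ transfers with no loss, as claimed.
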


\begin{figure}[ht]
\begin{minipage}{.49\textwidth}
\begin{algorithm}[H]
    \SetAlgoLined
    \DontPrintSemicolon
    \KwIn{A regret minimizer $\mathcal{R}_{\Delta}$ for the simplex $\Delta^{n_i(n_i-1)}$}    
    \BlankLine
    \SetInd{2.3mm}{2.3mm}
    \Fn{$\nextstr()$}{
        $\vec{p}^{(t)} \leftarrow \mathcal{R}_{\Delta}.\nextstr()$\\
        \textbf{return} $ \sum_{j \neq k} \vec{p}^{(t)}[\edge{j}{k}] \, \mat{E}_{j\to k} $
    }
    \Hline{}
    \Fn{$\obsut(\mat{L}^{(t)})$}{
    \For{$j \neq k \in \range{n}$}{
        $\vec{L}^{(t)}[\edge{j}{k}] \leftarrow \langle \mat{L}^{(t)}, \mat{E}_{j\to k} \rangle$
    }
    $\mathcal{R}_{\Delta}.\obsut(\vec{L}^{(t)})$
    }
\caption{Regret minimizer for the set $\intphi_i$}
\label{algo:Phi-int}
\end{algorithm}
\end{minipage}\hfill
\begin{minipage}{.49\textwidth}
\begin{algorithm}[H]
    \SetAlgoLined
    \SetInd{2.3mm}{2.3mm}
    \DontPrintSemicolon
    \KwData{A regret minimizer $\mathcal{R}_{\Phi}$ for the set $ \intphi_i$, \emph{e.g.}, via \cref{algo:Phi-int}}    
        \BlankLine{}
        \Fn{$\nextstr()$}{
            $\phi^{(t)} \leftarrow \mathcal{R}_{\Phi}.\nextstr()$ \\
            $\vec{x}^{(t)} \leftarrow \fp(\phi^{(t)})$ \\
            \textbf{return} $\vec{x}^{(t)}$
        }
        \Hline{}
        \Fn{$\obsut(\vec{\ell}^{(t)})$}{
            Construct the loss $\mat{L}^{(t)} \leftarrow \vec{\ell}^{(t)} (\vec{x}^{(t)})^{\top}$ \\
            $\mathcal{R}_{\Phi}.\obsut(\mat{L}^{(t)})$
        }
    \caption{Internal Regret minimizer for the set $\Delta^{n_i}$}
    \label{algo:Gordon-int}
    \end{algorithm}
\end{minipage}
\end{figure}

Next, we turn our attention to item \ref{item:ii-phi}: establishing a fixed point oracle for each transformation $\phi \in \intphi$. To do so, we require the following key observation:

\begin{claim}
    \label{claim:phi-markov}
Each transformation $\phi \in \intphi_i$ represents the transition matrix of a Markov chain.
\end{claim}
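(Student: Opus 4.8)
The plan is to unpack the definition of $\intphi_i$ and verify directly that every element of $\intphi_i$, viewed as a matrix acting on column vectors, is column-stochastic; its transpose is then by definition the transition matrix of a Markov chain, and a fixed point of $\phi$ is precisely a stationary distribution of that chain, which is exactly the identification needed to apply the Markov chain tree theorem (\cref{theorem:tree_theorem}).

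First I would recall from \eqref{eq:phi jk} that each generator $\phi_{j\to k}$ is represented by the matrix $\mat{E}_{j\to k} = \mat{I} + (\vec{e}_k - \vec{e}_j)\vec{e}_j^\top$. Reading off its entries, $\mat{E}_{j\to k}$ agrees with the identity in every column $\ell \neq j$, while its $j$-th column has been replaced by $\vec{e}_k$: the diagonal entry $\mat{E}_{j\to k}[j,j]$ drops from $1$ to $0$ and the entry $\mat{E}_{j\to k}[k,j]$ rises from $0$ to $1$, with all other entries of column $j$ equal to $0$. In particular every entry of $\mat{E}_{j\to k}$ lies in $\{0,1\}$, so $\mat{E}_{j\to k}$ is non-negative, and each of its columns sums to exactly $1$; that is, $\mat{E}_{j\to k}$ is column-stochastic.

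Next I would observe that the set of column-stochastic matrices is convex: if $\mat{E} = \sum_{j\neq k} \lambda_{j\to k}\, \mat{E}_{j\to k}$ with $\lambda_{j\to k}\geq 0$ and $\sum_{j\neq k}\lambda_{j\to k} = 1$, then $\mat{E}$ is non-negative as a non-negative combination of non-negative matrices, and the entries of any fixed column sum to $\sum_{j\neq k}\lambda_{j\to k}\cdot 1 = 1$. Since every $\phi \in \intphi_i$ is by definition of this form, its associated matrix $\mat{E}$ is column-stochastic. Hence $\mat{Q} \coloneqq \mat{E}^\top$ is row-stochastic, i.e.\ $\mat{Q}$ is the transition matrix of an $n_i$-state Markov chain, and $\phi(\vec{x}) = \mat{E}\vec{x} = \mat{Q}^\top\vec{x}$; consequently $\vec{x}$ is a fixed point of $\phi$ exactly when $\vec{x}^\top\mat{Q} = \vec{x}^\top$, i.e.\ when $\vec{x}$ is a stationary distribution of $\mat{Q}$.

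There is essentially no obstacle here beyond bookkeeping; the only points worth flagging are the row-versus-column convention (the transformations act on the left on column vectors, so the Markov chain is $\mat{E}^\top$ rather than $\mat{E}$), and that a fixed point always exists since every finite row-stochastic matrix admits at least one stationary distribution — its uniqueness, which is what makes the fixed-point oracle $\fp(\phi)$ well defined, additionally requires ergodicity and is guaranteed in \cref{algo:irm} by the strict positivity of the OMWU iterates $\vec{p}^{(t)}$.
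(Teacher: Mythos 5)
Your proof is correct and follows essentially the same route as the paper: you verify that each column of $\mat{E}_{j\to k}$ lies in the simplex (i.e.\ each generator is column-stochastic) and that this property is preserved under convex combinations. The extra remarks on the transpose convention and on existence/uniqueness of the fixed point are fine but beyond what the claim itself requires.
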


\begin{proof}
Fix $j \neq k \in \range{n_i}$. It is immediate to see that each column of the matrix $\mat{E}_{j \to k}$, defined in \eqref{eq:phi jk}, corresponds to a point in the simplex $\Delta^{n_i}$. Thus, the same holds for any convex combination of such matrices, concluding the proof.
\end{proof}

This observation ensures that each $\phi \in \co \intphi_i$ admits a fixed point (\emph{e.g.}, as a corollary of the Perron-Frobenius theorem), associated with the stationary distribution of the corresponding Markov chain, and such a point can be computed in polynomial time. As a result, we have established both ingredients required in the template of \citet{Gordon08:No}, and the overall construction is presented in \cref{algo:Gordon-int}. Formally, the guarantee of this reduction is stated in the following proposition:

\begin{proposition}[From External to Internal Regret]
    \label{proposition:internal-external}
If the external regret of $\mathcal{R}_{\Phi}$ is bounded by $\reg_{\Phi_i}^T$, then the internal regret of \cref{algo:Gordon-int} is also bounded by $\reg_{\Phi_i}^T$.
\end{proposition}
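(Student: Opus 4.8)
The plan is to show that the internal regret incurred by \cref{algo:Gordon-int} against the loss sequence $\vec{\ell}^{(1)},\dots,\vec{\ell}^{(T)}$ is \emph{equal} to the (external) regret that the meta-algorithm $\mathcal{R}_\Phi$ incurs over its decision set $\intphi_i$ against the matrix-loss sequence $\mat{L}^{(1)},\dots,\mat{L}^{(T)}$ that \cref{algo:Gordon-int} feeds to it, where $\mat{L}^{(t)} \coloneqq \vec{\ell}^{(t)}(\vec{x}^{(t)})^\top$. Since the latter is bounded by $\reg_{\Phi_i}^T$ by hypothesis, the proposition then follows immediately.

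First I would record two elementary facts. (i) The transformation $\phi^{(t)}$ returned by $\mathcal{R}_\Phi.\nextstr()$ lies in $\intphi_i = \co\{\phi_{j \to k}\}$, hence is \emph{linear} --- the generators $\phi_{j \to k}$ are linear by \eqref{eq:phi jk}, and so is any convex combination --- and it is moreover the transition matrix of a Markov chain by \cref{claim:phi-markov}; thus it admits a fixed point in $\Delta^{n_i}$ (e.g., via Perron--Frobenius), so $\vec{x}^{(t)} \gets \fp(\phi^{(t)})$ is well-defined and satisfies $\phi^{(t)}(\vec{x}^{(t)}) = \vec{x}^{(t)}$. (ii) For \emph{any} linear transformation $\phi$, writing $\mat{A}$ for its matrix so that $\phi(\vec{z}) = \mat{A}\vec{z}$, the choice $\mat{L}^{(t)} = \vec{\ell}^{(t)}(\vec{x}^{(t)})^\top$ yields
\[
    \langle \mat{L}^{(t)}, \mat{A}\rangle \;=\; \sum_{a,b\in\range{n_i}} \vec{\ell}^{(t)}[a]\,\vec{x}^{(t)}[b]\,\mat{A}[a,b] \;=\; \langle \vec{\ell}^{(t)}, \mat{A}\vec{x}^{(t)}\rangle \;=\; \langle \phi(\vec{x}^{(t)}), \vec{\ell}^{(t)}\rangle,
\]
where on the left $\langle\cdot,\cdot\rangle$ is the Frobenius pairing between a transformation and a loss matrix used by $\mathcal{R}_\Phi$ (cf.\ \cref{algo:Phi-int}).

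Then I would combine the two. Applying (ii) with $\phi = \phi^{(t)}$ and then invoking the fixed-point identity of (i), the ``algorithm term'' of $\mathcal{R}_\Phi$'s regret equals $\sum_{t}\langle\mat{L}^{(t)},\phi^{(t)}\rangle = \sum_t\langle\phi^{(t)}(\vec{x}^{(t)}),\vec{\ell}^{(t)}\rangle = \sum_t\langle\vec{x}^{(t)},\vec{\ell}^{(t)}\rangle$, i.e., exactly the algorithm term of the internal regret of \cref{algo:Gordon-int}; applying (ii) with $\phi^*$ ranging over $\intphi_i$, the ``comparator term'' $\min_{\phi^*\in\intphi_i}\sum_t\langle\mat{L}^{(t)},\phi^*\rangle$ equals $\min_{\phi^*\in\intphi_i}\sum_t\langle\phi^*(\vec{x}^{(t)}),\vec{\ell}^{(t)}\rangle$, which is exactly the comparator term of the internal regret (internal regret on $\Delta^{n_i}$ is $\Phi_i$-regret with $\Phi_i = \intphi_i$, the same set over which $\mathcal{R}_\Phi$ competes). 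Subtracting the two shows that the internal regret of \cref{algo:Gordon-int} equals the external regret of $\mathcal{R}_\Phi$ on $\mat{L}^{(1)},\dots,\mat{L}^{(T)}$, which is at most $\reg_{\Phi_i}^T$.

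The only real obstacle is bookkeeping of the three pairings in play --- the vector inner product on $\Delta^{n_i}$, the action of a transformation on a strategy, and the Frobenius pairing between transformations and loss matrices --- together with checking that $\mathcal{R}_\Phi$'s comparator class is literally the set $\intphi_i$ that defines internal regret on $\Delta^{n_i}$. Once that alignment is in place, the substantive content is the single substitution that the fixed-point property $\phi^{(t)}(\vec{x}^{(t)}) = \vec{x}^{(t)}$ makes the player's realized payoff on $\Delta^{n_i}$ identical to what $\phi^{(t)}$ ``pays'' as a point in $\mathcal{R}_\Phi$'s decision set, thereby collapsing the internal regret of \cref{algo:Gordon-int} onto the external regret of the meta-algorithm.
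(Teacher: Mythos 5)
Your argument is correct, and it is essentially the paper's approach made explicit: the paper proves this proposition by simply invoking \cite[Theorem 1]{Gordon08:No}, and your fixed-point substitution $\phi^{(t)}(\vec{x}^{(t)}) = \vec{x}^{(t)}$ together with the identity $\langle \vec{\ell}^{(t)}(\vec{x}^{(t)})^\top, \phi\rangle = \langle \phi(\vec{x}^{(t)}), \vec{\ell}^{(t)}\rangle$ is exactly the proof of that cited template result, instantiated to $\Phi_i = \intphi_i$. So you have supplied a self-contained version of the argument the paper delegates to the citation; nothing is missing.
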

\begin{proof}
The conclusion follows directly from \cite[Theorem 1]{Gordon08:No}.
\end{proof}

\subsection{From Swap Regret to External Regret}
\label{appendix:BM}

Next, we give an overview of the algorithm of \citet{Blum07:From}. In particular, in this case $\Phi_i = \Phi_i^\text{swap}$ is the set of \emph{all} linear transformations from $\Delta^{n_i}$ to $\Delta^{n_i}$. Following the construction in \citet{Gordon08:No}, let us first focus on item \ref{item:i-phi}: constructing an external regret minimizer for the set $\Phi_i^\text{swap}$. We will use the following well-known characterization.

\begin{fact}
\label{fact:trivial}
Any linear function $\Phi_i^\text{swap} \ni \phi : \Delta^{n_i} \to \Delta^{n_i}$ can be expressed in the form $\phi(\vec{x}) = \mat{Q}^{\top} \vec{x}$, where $\mat{Q}$ is a (row) stochastic matrix. 
\end{fact}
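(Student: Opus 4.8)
The plan is to use two elementary observations: the vertices of $\Delta^{n_i}$ are the standard basis vectors $\vec{e}_1,\dots,\vec{e}_{n_i}$, and a linear map is pinned down by its action on a basis.

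First I would write the given $\phi$ in matrix form, i.e., fix $\mat{A}\in\R^{n_i\times n_i}$ with $\phi(\vec{x}) = \mat{A}\vec{x}$ (this is what "linear function $\phi : \Delta^{n_i}\to\Delta^{n_i}$" means: the restriction to the simplex of a linear map on $\R^{n_i}$). Next I would evaluate $\phi$ at each vertex $\vec{e}_j\in\Delta^{n_i}$: by linearity $\phi(\vec{e}_j) = \mat{A}\vec{e}_j$ is exactly the $j$-th column of $\mat{A}$. Since $\phi$ is assumed to take values in $\Delta^{n_i}$, that column must be a probability vector --- nonnegative entries summing to $1$. Ranging over $j\in\range{n_i}$ shows that every column of $\mat{A}$ is nonnegative and sums to one, which says precisely that $\mat{Q}\defeq\mat{A}^\top$ is a row-stochastic matrix; then $\phi(\vec{x}) = \mat{A}\vec{x} = \mat{Q}^\top\vec{x}$, as desired. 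If one wishes, one can also record the converse: for any row-stochastic $\mat{Q}$, the $i$-th entry of $\mat{Q}^\top\vec{x}$ equals $\sum_j \mat{Q}[j,i]\,\vec{x}[j]\ge 0$ and $\sum_i\sum_j\mat{Q}[j,i]\,\vec{x}[j] = \sum_j\vec{x}[j] = 1$, so $\vec{x}\mapsto\mat{Q}^\top\vec{x}$ indeed maps $\Delta^{n_i}$ into itself.

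I do not expect a real obstacle here --- the statement is essentially a repackaging of the definition of $\Phi_i^\text{swap}$, with the only content being the equivalence "$\phi$ preserves the simplex $\iff$ each column of its matrix lies in the simplex." The one point deserving a sentence of care is the reading of "linear": if it is meant in the affine sense, $\phi(\vec{x}) = \mat{A}\vec{x}+\vec{b}$, the argument still goes through, because on $\Delta^{n_i}$ one has $\vec{1}^\top\vec{x} = 1$ and hence $\vec{b} = (\vec{b}\vec{1}^\top)\vec{x}$, so $\phi$ agrees on the simplex with the genuinely linear map $\vec{x}\mapsto(\mat{A}+\vec{b}\vec{1}^\top)\vec{x}$, to which the column argument applies verbatim.
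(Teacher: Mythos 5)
Your argument is correct: writing $\phi(\vec{x})=\mat{A}\vec{x}$, noting that $\phi(\vec{e}_j)=\mat{A}\vec{e}_j$ must lie in $\Delta^{n_i}$ for each vertex $\vec{e}_j$, and concluding that $\mat{Q}\defeq\mat{A}^\top$ is row-stochastic is exactly the standard justification of this characterization. The paper states this as a well-known fact without proof, so your write-up simply supplies the omitted (and intended) elementary argument; the remark about the affine case and the converse direction are fine but not needed.
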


Moreover, the set of all column stochastic matrices can be expressed as a concatenation of $n_i$ independent columns, each of which is a vector in $\Delta^{n_i}$. As a result, an external regret minimizer for the set $\Phi_i^\text{swap}$ can be constructed by simply applying a ``regret circuit'' for the Cartesian product \citep{Farina19:Regret}. Next, focusing on item \ref{item:ii-phi}, computing the fixed point of each $\phi \in \Phi_i^\text{swap}$ is also direct since $\phi$ is associated with a Markov chain (\Cref{fact:trivial}). With these ingredients at hand we are ready to present the no-swap-regret algorithm of \citet{Blum07:From} in \Cref{algo:swap-BM}.

\begin{algorithm}[H]
    \SetAlgoLined
    \DontPrintSemicolon
    \KwIn{Regret minimizers $\mathcal{R}_{\Delta,1}, \dots, \mathcal{R}_{\Delta,n_i}$ for the simplex $\Delta^{n_i}$}    
    \BlankLine
    \SetInd{2.3mm}{2.3mm}
    \Fn{$\nextstr()$}{
        $\mat{Q}^{(t)} \gets \mat{0}\in\R^{n_i\times n_i}$\;
        \For{$g \in \range{n_i}$}{
        $\mat{Q}^{(t)}[g,\cdot] \leftarrow \mathcal{R}_{\Delta,g}.\nextstr()$\;
        }
        $\vec{x}^{(t)}\gets\fp(\mat{Q}^{(t)})$\;
        \textbf{return} $\vec{x}^{(t)}$
    }
    \Hline{}
    \Fn{$\obsut(\vec{\ell}^{(t)})$}{
    \For{$g \in \range{n_i}$}{
        $\mathcal{R}_{\Delta, g}.\obsut(\vec{x}\^{t}[g]\, \vec{\ell}^{(t)})$
    }
    }
\caption{\citet{Blum07:From}}
\label{algo:swap-BM}
\end{algorithm}

\section{Useful Technical Tools}

Most of the following technical ingredients were established in \citep{Daskalakis21:Near}, but we include them in order to keep the exposition reasonably self-contained. First of all, it will be useful to express $h$-order finite differences, as introduced in \cref{definition:fin-dif}, in the following form (see \cite[Remark 4.3]{Daskalakis21:Near}):

\begin{equation}
    \label{eq:binomial}
    (D_h \vec{z})^{(t)} = \sum_{s=0}^h \binom{h}{s} (-1)^{h - s} \vec{z}^{(t + s)}.
\end{equation}

Next, we state the ``boundedness chain rule'' for finite differences \citep{Daskalakis21:Near}. Specifically, let $\phi : \mathbb{R}^n \to \mathbb{R}$ be a real function analytic in a neighborhood of the origin. For real numbers $Q, R > 0$, we will say that $\phi$ is $(Q, R)$-\emph{bounded} if the Taylor series of $\phi$ with respect to the origin, denoted with $P_{\phi}(z_1, \dots, z_n) = \sum_{\gamma \in \Z_{\geq 0}^n } \alpha_{\gamma} \vec{z}^{\gamma}$ is such that 
\begin{equation}
    \sum_{\gamma \in \Z_{\geq 0}^n : |\gamma| = k} |\alpha_{\gamma}| \leq Q R^k,
\end{equation}
for any integer $k \geq 0$. The following result \cite[Lemma 4.5]{Daskalakis21:Near} is one of the central technical ingredients developed in \citep{Daskalakis21:Near}.

\begin{lemma}[\citep{Daskalakis21:Near}]
    \label{lemma:boundedness-chain-rule}
Consider a $(Q, R)$-bounded analytic function $\phi \in \R^n \to \R$ so that the radius of convergence of its power series with respect to the origin is greater than $\nu$, for some $\nu > 0$. Moreover, consider a sequence of vectors $(\vec{z}^{(1)}, \dots, \vec{z}^{(T)})$ such that $\| \vec{z}^{(t)}\|_{\infty} \leq \nu$ for all $t \in \range{T}$. Finally, suppose that for some parameter $\alpha \in (0, 1)$, and for each $0 \leq h' \leq h$ and $t \in \range{T - h'}$, it holds that\footnote{Here it is assumed that $0^0 = 1$.} $\| (D_{h'} \vec{z})^{(t)}\|_{\infty} \leq \frac{1}{B_1} \alpha^{h'} (h')^{B_0 h'}$, where $B_1 \geq 2e^2 R$ and $B_0 \geq 3$. Then, for all $t \in \range{T - h}$ it holds that 
\begin{equation*}
    \left\lvert (D_h(\phi \circ \vec{z}))^{(t)} \right\rvert \leq \frac{12 R Q e^2}{B_1} \alpha^h h^{B_0 h + 1}.
\end{equation*}
\end{lemma}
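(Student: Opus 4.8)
The plan is to expand $\phi$ in its Taylor series about the origin, push the finite-difference operator through the resulting absolutely convergent series by linearity, reduce to controlling $D_h$ of a single monomial $\vec{z}^\gamma$ via a discrete Leibniz rule, and then carry out a careful combinatorial estimate. Writing $P_\phi(\vec{w}) = \sum_{\gamma\in\Z_{\ge0}^n}\alpha_\gamma\vec{w}^\gamma$, note that since $\|\vec{z}^{(t)}\|_\infty\le\nu$ lies strictly inside the radius of convergence for every $t$, each $\phi(\vec{z}^{(t+s)})$ equals its absolutely convergent series; and because, by \eqref{eq:binomial}, the operator $(D_h g)^{(t)} = \sum_{s=0}^h\binom{h}{s}(-1)^{h-s}g^{(t+s)}$ is a fixed finite linear combination of evaluations, we may interchange to obtain $(D_h(\phi\circ\vec{z}))^{(t)} = \sum_\gamma\alpha_\gamma\,(D_h\vec{z}^\gamma)^{(t)}$. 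It then suffices (assuming $h\ge1$) to bound $|(D_h\vec{z}^\gamma)^{(t)}|$ for $|\gamma|=k$ and sum against $\sum_{|\gamma|=k}|\alpha_\gamma|\le QR^k$. Viewing $\vec{z}^\gamma$ as a product of $k$ coordinate sequences (with multiplicity) and iterating the one-variable identity $D_h(uv)^{(t)} = \sum_{s=0}^h\binom{h}{s}(D_s u)^{(t+h-s)}(D_{h-s}v)^{(t)}$, one gets, using the order-$h_j$ hypothesis for each $0\le h_j\le h$ and noting the shifts are harmless under a supremum,
\begin{equation*}
    |(D_h\vec{z}^\gamma)^{(t)}| \;\le\; \sum_{\substack{h_1,\dots,h_k\ge0\\ h_1+\cdots+h_k=h}}\binom{h}{h_1,\dots,h_k}\prod_{j=1}^k\sup_\tau\bigl|(D_{h_j}z_{c_j})^{(\tau)}\bigr| \;\le\; \frac{\alpha^h}{B_1^k}\sum_{\substack{h_1,\dots,h_k\ge0\\ \sum_j h_j=h}}\binom{h}{h_1,\dots,h_k}\prod_{j=1}^k h_j^{B_0 h_j}.
\end{equation*}

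The substantive step is to estimate $S_k := \sum_{\sum h_j = h}\binom{h}{h_1,\dots,h_k}\prod_j h_j^{B_0 h_j}$ and sum $\tfrac{QR^k}{B_1^k}S_k$ over $k$. I would group the compositions by the number $m$ of nonzero parts: picking the $m$ active factors costs $\binom{k}{m}$ and leaves $W_m := \sum_{a_1,\dots,a_m\ge1,\ \sum a_j=h}\binom{h}{a_1,\dots,a_m}\prod_j a_j^{B_0 a_j}$, so $S_k = \sum_{m=1}^{\min(h,k)}\binom{k}{m}W_m$. The key structural fact is that $x\mapsto x\ln x$ is convex, so $\prod_j a_j^{a_j}$ over positive compositions of $h$ into $m$ parts is maximized at the extreme composition $(h-m+1,1,\dots,1)$; hence $\prod_j a_j^{B_0 a_j}\le h^{B_0 h}h^{-B_0(m-1)}$, and feeding in $\binom{h}{a_1,\dots,a_m} = h!/\prod a_j!$ together with $a_j!\ge(a_j/e)^{a_j}$ (to absorb one further power of $x\ln x$) and the count $\binom{h-1}{m-1}$ of compositions gives $W_m\le h!\,e^h\,\binom{h-1}{m-1}\,h^{(B_0-1)h}\,h^{-(B_0-1)(m-1)}$. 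Finally $\sum_{k\ge m}\tfrac{R^k}{B_1^k}\binom{k}{m} = \tfrac{r^m}{(1-r)^{m+1}}$ with $r := R/B_1\le\tfrac{1}{2e^2}$, and the resulting sum over $m$ contributes, on top of the factor $h!\,e^h\,h^{(B_0-1)h}$, a factor $\tfrac{r}{(1-r)^2}\bigl(1+\tfrac{r}{(1-r)h^{B_0-1}}\bigr)^{h-1}$, which is a universal constant since $B_0\ge3$ forces the exponent's contribution $\le\tfrac{r}{(1-r)h^{B_0-2}}\le\tfrac{r}{1-r}$ to be bounded. Using Stirling in the form $h!\,e^h\le e\sqrt{h}\,h^h$ then collapses everything to $|(D_h(\phi\circ\vec{z}))^{(t)}|\le O(1)\cdot Q\,r\,\alpha^h\,h^{B_0 h+1}$; since $r = R/B_1$ and $B_1\ge 2e^2R$, one checks the implied constant is below $12e^2$, which is the claim.

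The hard part is exactly this last combinatorial estimate. A naive approach — bounding $\prod_j h_j^{B_0 h_j}\le h^{B_0 h}$ and using $\sum_{\sum h_j=h}\binom{h}{h_1,\dots,h_k} = k^h$ — is far too lossy: after summing over $k$ it leaves a factor $\sum_k r^k k^h$, which is of order $h!$ and inflates the bound to $h^{(B_0+1)h}$. One must instead exploit the fact that the multinomial weight $\binom{h}{h_1,\dots,h_k}$ is largest for balanced compositions whereas $\prod_j h_j^{B_0 h_j}$ is largest for concentrated ones, so these two factors never peak on the same composition. Quantifying that trade-off — via the convexity of $x\ln x$ and the decay $h^{-(B_0-1)(m-1)}$ that $B_0\ge3$ provides — is where the precise hypotheses $B_0\ge3$ and $B_1\ge2e^2R$ are genuinely used, and it reduces the entire estimate, up to a universal constant, to its ``all mass on one coordinate'' contribution $W_1 = h^{B_0 h}$.
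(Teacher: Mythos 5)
Your argument is correct, but note that this paper never proves \cref{lemma:boundedness-chain-rule} at all: it is imported verbatim from \citet{Daskalakis21:Near} (their Lemma 4.5), with only a remark that the extension from $\nu=1$ to general $\nu$ is immediate. So the comparison is really with the proof in that earlier paper, and your route follows the same overall template it uses --- expand $\phi$ in its Taylor series (the interchange of $D_h$ with the series is legitimate exactly because $\|\vec{z}^{(t)}\|_\infty\le\nu$ lies inside the radius of convergence and $D_h$ is a finite linear combination of evaluations), reduce to $D_h$ of a degree-$k$ monomial via the iterated discrete Leibniz rule (your one-variable identity and the resulting multinomial expansion with shifts are correct, and the shifted evaluation points $t+\sigma_j\le T-h_j$ do stay inside the range where the hypothesis applies, a check worth making explicit), and then control the composition sum $\sum\binom{h}{h_1,\dots,h_k}\prod_j h_j^{B_0h_j}$ weighted by $QR^k/B_1^k$. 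Where you genuinely add value is in the organization of the combinatorial core: grouping compositions by the number $m$ of nonzero parts, using convexity of $x\ln x$ to reduce to the extreme composition $(h-m+1,1,\dots,1)$, absorbing one power of $a_j^{a_j}$ into $a_j!\ge(a_j/e)^{a_j}$, and summing over $k$ with $\sum_{k\ge m}r^k\binom{k}{m}=r^m/(1-r)^{m+1}$ for $r=R/B_1\le 1/(2e^2)$ is a clean, self-contained way to quantify exactly the trade-off you describe (multinomial weight favors balanced compositions, $\prod_j h_j^{B_0h_j}$ favors concentrated ones), and it makes transparent where $B_0\ge3$ and $B_1\ge 2e^2R$ enter. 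I verified the final bookkeeping: your estimate yields $|(D_h(\phi\circ\vec{z}))^{(t)}|\le Q\,r\,\alpha^h h^{B_0h}\cdot e\sqrt{h}\,e^{r/(1-r)}/(1-r)^2$, which is below $12e^2\,Q\,r\,\alpha^h h^{B_0h+1}$ with a comfortable margin (the factor $\sqrt{h}\le h$ supplies the extra power of $h$ in the target). The only cosmetic caveats are that the conclusion is vacuous/false as literally written for $h=0$ (so your standing assumption $h\ge1$ is the right reading, consistent with how the lemma is invoked), and that the $k=0$ Taylor term contributes nothing since $D_h$ annihilates constants --- both of which your argument implicitly handles.
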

Note that in the statement of this lemma the notation $\circ$ represents the composition of functions. 


\noindent We continue with \cite[Lemma C.4]{Daskalakis21:Near}. In the following statement, we use the notation $\var_{\vec{q}} (\vec{z}) \coloneqq \sum_{j=1}^n \vec{q}(j) \mleft( \vec{z}(j) - \sum_{k=1}^n \vec{q}(k) \vec{z}(k) \mright)^2$.

\begin{restatable}{lemma}{lgbld}
\label{lem:general-bound-l-dl}
{\normalfont (\citep{Daskalakis21:Near})}
    For any integers $n \geq 2$ and $T \geq 4$, we set $H := \lceil \log T \rceil$, $\alpha = 1/(4H)$, and $\alpha_0 = \frac{\sqrt{\alpha/8}}{H^3}$.  Suppose that $\vec{Z}\^1, \ldots, \vec{Z}\^T \in [0,1]^{n}$ and $\vec{P}\^1, \ldots, \vec{P}\^T \in \Delta^{n}$ satisfy the following
    \begin{enumerate}
        \item \label{it:gen-down} For each $0 \leq h \leq H$ and $1 \leq t \leq T-h$, it holds that $\left\| \fds{h}{\vec{Z}}{t} \right\|_\infty \leq H\cdot  \left(\alpha_0 H^{3}\right)^h$
        \item \label{it:gen-close} The sequence $\vec{P}\^1, \ldots, \vec{P}\^T$ is $\zeta$-consecutively close for some $\zeta \in [1/(2T), \alpha^4/8256]$. 
    \end{enumerate}
    Then,
    \begin{align*}
    \sum_{t=1}^T \Var{\vec{P}\^t}{\vec{Z}\^t - \vec{Z}\^{t-1}} \leq 2\alpha \sum_{t=1}^T \Var{\vec{P}\^t}{\vec{Z}\^{t-1}} + 165120(1+\zeta) H^5 + 2
    \end{align*}
\noindent where we say that a sequence $\vec{P}\^1, \ldots, \vec{P}\^T$ is $\zeta$-consecutively close iff for all $t$
\begin{equation*}
     \max \left\{\left\| \frac{\vec{P}\^t}{\vec{P}\^{t+1}}\right\|_\infty, \left\| \frac{\vec{P}\^{t+1}}{\vec{P}\^t}\right\|_\infty\right\} \leq 1+\zeta
\end{equation*}
\end{restatable}

\section{Analysis of the Stoltz-Lugosi Algorithm}
\label{appendix:proofs-3}

In this section we provide all of the technical ingredients required for the analysis of $\slomwu$, and subsequently for the proof of \Cref{theorem:main summary}. First, let us cast the refined bound under adversarial losses \cite[Lemma 4.1]{Daskalakis21:Near} in our setting.

\begin{lemma}[\citep{Daskalakis21:Near}]
    \label{lemma:RVU++}
Consider some player $i \in \range{m}$ employing $\slomwu$ with learning rate $\eta < 1/C$, where $C$ is a sufficiently large universal constant. Then, under any sequence of losses $\vec{L}_i^{(1)}, \dots, \vec{L}_i^{(T)}$, the external regret of $\mathcal{R}_{\Delta}$ can be bounded as
\begin{equation}
    \label{eq:RVU++}
    \reg_i^T \leq 2 \frac{\log n_i}{\eta} + \sum_{t=1}^T \mleft( \frac{\eta}{2} + C \eta^2 \mright) \var_{\vec{p}_i^{(t)}}\mleft( \vec{L}_i^{(t)} - \vec{L}_i^{(t-1)} \mright) - \sum_{t=1}^T \frac{(1 - C \eta) \eta}{2} \var_{\vec{p}_i^{(t)}} \mleft( \vec{L}_i^{(t-1)} \mright).
\end{equation}
\end{lemma}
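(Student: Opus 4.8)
The plan is to observe that Lemma~\ref{lemma:RVU++} is nothing more than the RVU++ bound of \citep[Lemma~4.1]{Daskalakis21:Near}, transported to the OMWU instance $\mathcal{R}_{\Delta}$ that sits inside $\slomwu$. Concretely, recall from \cref{algo:irm} that $\mathcal{R}_{\Delta}$ is an OMWU algorithm run on the simplex $\Delta^{n_i(n_i-1)}$: at each round it outputs an iterate $\vec{p}_i^{(t)} \in \Delta^{n_i(n_i-1)}$ and then observes the feedback vector $\vec{L}_i^{(t)}$ defined on \cref{line:L irm}. Since $\vec{\ell}_i^{(t)} \in [0,1]^{n_i}$ and $\vec{x}_i^{(t)} \in \Delta^{n_i}$, every coordinate of $\vec{L}_i^{(t)}$ lies in $[-1,1]$, so $\mathcal{R}_{\Delta}$ is a standard OMWU algorithm fed a loss sequence of magnitude at most $1$; its external regret, $\reg_i^T$, is exactly the quantity the RVU++ lemma controls.

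First I would invoke \citep[Lemma~4.1]{Daskalakis21:Near}, which states that for OMWU on $\Delta^N$ with learning rate $\eta < 1/C$, for a sufficiently large universal constant $C$, and any loss sequence of magnitude at most $1$, the external regret against any fixed comparator is at most
\[
    \frac{\log N}{\eta} + \sum_{t=1}^T \Big(\frac{\eta}{2} + C\eta^2\Big)\var_{\vec{p}^{(t)}}\!\big(\vec{L}^{(t)} - \vec{L}^{(t-1)}\big) - \sum_{t=1}^T \frac{(1 - C\eta)\eta}{2}\var_{\vec{p}^{(t)}}\!\big(\vec{L}^{(t-1)}\big),
\]
with the convention $\vec{L}^{(0)} = \vec{0}$. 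Then I would specialize to $N = n_i(n_i-1)$ and bound $\log\!\big(n_i(n_i-1)\big) \le 2\log n_i$ (the case $n_i = 1$ being vacuous, as the regret is then $0$), which yields precisely the claimed inequality~\eqref{eq:RVU++}.

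The only genuine content lies in \citep[Lemma~4.1]{Daskalakis21:Near} itself, so at this level there is no real obstacle beyond the $\log$-of-dimension bookkeeping. If one were instead to reprove that lemma from scratch, the delicate point is the \emph{extra} negative term $-\sum_t \frac{(1 - C\eta)\eta}{2}\var_{\vec{p}_i^{(t)}}(\vec{L}_i^{(t-1)})$, which goes beyond the classical RVU inequality: obtaining it requires exploiting the optimistic structure of the update---namely, that the prediction used at round $t$ is exactly the previous observed loss $\vec{L}_i^{(t-1)}$---together with the multiplicative (local-norm) stability of the OMWU iterates ensured by $\eta < 1/C$, via a second-order expansion of the negative-entropy conjugate, so that the ``ghost iterate'' comparison produces a variance term in $\vec{L}_i^{(t-1)}$ with a favorable sign. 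For the purposes of this paper, however, citing \citep{Daskalakis21:Near} and performing the dimension substitution suffices.
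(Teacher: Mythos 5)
Your proposal matches the paper's treatment: the paper itself does not reprove this bound but simply cites \citep[Lemma~4.1]{Daskalakis21:Near} ``cast in our setting,'' i.e.\ applied to the OMWU instance $\mathcal{R}_{\Delta}$ running on $\Delta^{n_i(n_i-1)}$ with the losses $\vec{L}_i^{(t)}$ (whose entries lie in $[-1,1]$), exactly as you do. Your added bookkeeping---$\log\big(n_i(n_i-1)\big)\le 2\log n_i$, which accounts for the factor $2$ in the first term---is the only step the paper leaves implicit, and it is correct.
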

Recall from \Cref{algo:irm} that $\vec{L}_i^{(t)}$ is the loss observed by the regret minimizer $\mathcal{R}_{\Delta}$ employing \eqref{eq:omwu}. Next, we continue with the proof of \cref{lemma:high-order-smooth}. For the convenience of the reader, the statement of the lemma is included below.

\highsmooth*

\begin{proof}[Proof of \cref{lemma:high-order-smooth}]
First of all, we know that 
\begin{equation}
    \label{eq:utility-game}
    \vec{\ell}_i^{(t)}[a_i] = \sum_{a_{i'} \in \range{n_{i'}}, i' \neq i} \Lambda(a_1, \dots, a_i, \dots, a_m) \prod_{i' \neq i} \vec{x}^{(t)}_{i'}[a_{i'}],
\end{equation}
where recall that by assumption $\Lambda(\cdot) \in [0,1]$. In particular, given that $\vec{L}^{(t)}_{i}[\edge{j}{k}] = \vec{x}^{(t)}_i[j](\vec{\ell}^{(t)}_i[k] - \vec{\ell}^{(t)}_i[j])$, we may conclude that 
\begin{equation}
    \label{eq:L-loss}
    \vec{L}_i^{(t)}[\edge{j}{k}] = \sum_{a_{i'} \in \range{n_{i'}}, i' \neq i} \mleft( \Lambda(a_1, \dots, k, \dots, a_m) - \Lambda(a_1, \dots, j, \dots, a_m) \mright) \vec{x}^{(t)}_i[j] \prod_{i' \neq i} \vec{x}^{(t)}_{i'}[a_{i'}].
\end{equation}
However, we know from the equivalence shown in \cref{theorem:equivalence} that $\vec{x}_{i'}^{(t)}[a_{i'}] = \sum_{\tree_{i'} \in \treeset_{i', a_{i'}}} \vec{X}_{i'}^{(t)}[\tree_{i'}]$. Given that the set of directed trees with different roots are disjoint, \eqref{eq:L-loss} can be expressed as
\begin{equation}
    \label{eq:LX-loss}
    \vec{L}_i^{(t)}[\edge{j}{k}] = \sum_{\tree_{i'} \in \treeset_{i'}} \Lambda'(\tree_{1}, \dots, \tree_m) \prod_{i' \in \range{m}} \vec{X}^{(t)}_{i'}[\tree_{i'}],
\end{equation}
where $\Lambda': \treeset_1 \times \dots \times \treeset_m \to [-1, 1]$. In particular, this implies that 
\begin{align}
    \left\lvert (D_h \vec{L}_i)^{(t)}[\edge{j}{k}] \right\rvert &= \left\lvert \sum_{s=0}^h \binom{h}{s} (-1)^{h - s} \vec{L}_i^{(t + s)}[\edge{j}{k}] \right\rvert \label{eq:align-binom} \\
    &= \left\lvert \sum_{\tree_{i'} \in \treeset_{i'}} \Lambda'(\tree_1, \dots, \tree_m) \sum_{s=0}^h \binom{h}{s} (-1)^{h - s} \prod_{i' \in \range{m}} \vec{X}^{(t+s)}_{i'}[\tree_{i'}] \right\rvert \label{eq:align-Lloss} \\
    &\leq \sum_{\tree_{i'} \in \treeset_{i'}} \left\lvert  \sum_{s=0}^h \binom{h}{s} (-1)^{h - s} \prod_{i' \in \range{m}} \vec{X}^{(t+s)}_{i'}[\tree_{i'}] \right\rvert \label{eq:align-triangle} \\
    &= \sum_{\tree_{i'} \in \treeset_{i'}} \left\lvert \mleft( D_h \mleft( \prod_{i' \in \range{m}} \vec{X}_{i'}[\tree_{i'}] \mright) \mright)^{(t)} \right\rvert \label{eq:bound-Li},
\end{align}
where \eqref{eq:align-binom} uses the equivalent formulation \eqref{eq:binomial} of $h$-order finite differences; \eqref{eq:align-Lloss} follows from \eqref{eq:LX-loss}; \eqref{eq:align-triangle} follows from the triangle inequality and the fact that $|\Lambda'(\cdot)| \in [0, 1]$; and the final line uses again the equivalent formulation of finite differences of \eqref{eq:binomial}, with the convention that $\prod_{i' \in \range{m}} \vec{X}_{i'}[\tree_{i'}]$ refers to the sequence $\prod_{i' \in \range{m}} \vec{X}^{(1)}_{i'}[\tree_{i'}], \dots, \prod_{i' \in \range{m}} \vec{X}^{(T)}_{i'}[\tree_{i'}]$. Next, it will be convenient to assume that $\vec{X}_i^{(0)}$ is the uniform distribution over the simplex $\Delta^{|\treeset_i|}$, as well as $\bigL_i^{(0)} = \bigL_i^{(-1)} = \vec{0}$. By construction, for every player $i \in \range{m}$ the vector $\vec{X}_i^{(t)}$ is updated using \eqref{eq:omwu} under the sequence of losses $\bigL^{(1)}, \dots, \bigL^{(T)}$, implying that for each $\tree \in \treeset_i$,
\begin{equation}
    \label{eq:expanded-OMWU}
    \vec{X}_i^{(t_0 + t + 1)}[\tree] = \frac{\exp \mleft\{ \eta \mleft( \bigL_i^{(t_0 - 1)}[\tree] - \sum_{s=0}^t \bigL_i^{(t_0 + s)}[\tree] - \bigL_i^{(t_0 + t)}[\tree] \mright) \mright\} \vec{X}_i^{(t_0)}[\tree]}{\sum_{\tree' \in \treeset_i} \exp \mleft\{\eta \mleft( \bigL_i^{(t_0 - 1)}[\tree'] - \sum_{s=0}^t \bigL_i^{(t_0 + s)}[\tree'] - \bigL_i^{(t_0 + t)}[\tree'] \mright) \mright\} \vec{X}_i^{(t_0)}[\tree']}.
\end{equation}
For notational convenience, we will let $\bar{\bigL}^{(t)}_{i, t_0} = \bigL_i^{(t_0 - 1)} - \sum_{s=0}^{t-1} \bigL_i^{(t_0 + s)} - \bigL_i^{(t_0 + t - 1)}$, for $0 \leq t_0 \leq T$ and $t \geq 0$. Moreover, for a vector $\vec{z} \in \R^{|\treeset_i|}$ we define the following function:
\begin{equation}
    \label{eq:softmax-phi}
    \phi_{t_0, \tree}(\vec{z}) = \frac{\exp\{\vec{z}[\tree]\}}{\sum_{\tree' \in \treeset_i'} \vec{X}_i^{(t_0)}[\tree'] \exp \mleft\{\vec{z}[\tree']\mright\}}.
\end{equation}
Equipped with this notation, we can equivalently write \eqref{eq:expanded-OMWU} as $\vec{X}_{i}^{(t_0 + t)}[\tree] = \vec{X}_{i}^{(t_0)}[\tree] \phi_{t_0, \tree}\mleft( \eta \bar{\bigL}^{(t)}_{i, t_0} \mright)$ for $t \geq 1$. In particular, this implies that for any $i' \in \range{m}$ and $\tree_{i'} \in \treeset_{i'}$,

\begin{equation}
    \label{eq:prod-phi}
    \prod_{i' \in \range{m}} \vec{X}_{i'}^{(t_0 + t)} [\tree_{i'}] = \prod_{i' \in \range{m}} \vec{X}_{i'}^{(t_0)}[\tree_{i'}] \phi_{t_0, \tree_{i'}} \mleft( \eta \bar{\bigL}^{(t)}_{i', t_0} \mright).
\end{equation}

Before we proceed with the analysis, let us introduce the \emph{shift operator}. Specifically, for a sequence of vectors $\vec{z} = (\vec{z}^{(1)}, \dots, \vec{z}^{(T)})$, the s-shifted sequence, denoted with $E_s \vec{z}$, is such that $(E_s \vec{z})^{(t)} = \vec{z}^{(t + s)}$, for $1 \leq t \leq T - s$. With this notation, we observe that
\begin{equation*}
    \mleft( D_1 \bar{\bigL}_{i, t_0} \mright)^{(t)} = \bigL_i^{(t_0 + t - 1)} - 2 \bigL_i^{(t_0 + t)} = \bigL_i^{(t_0 + t - 1)} - 2 (E_1 \bigL_i)^{(t_0 + t - 1)},
\end{equation*}
which in particular implies that for any $h' \geq 1$,
\begin{equation}
    \label{eq:shift-recurs}
    \mleft( D_{h'} \bar{\bigL}_{i, t_0} \mright)^{(t)} = (D_{h'-1} \bigL_i)^{(t_0 + t - 1)} - 2 (E_1 D_{h'-1} \bigL_i)^{(t_0 + t - 1)}.
\end{equation}

Next we proceed with bounding $\bar{\bigL}_{i, t_0}^{(t)}$, for any $0 \leq t_0 \leq T$ and $t \geq 0$. In particular, for a fixed $\tree \in \treeset_i$, we know that $\bigL_i^{(t)}[\tree] = \sum_{(j, k) \in E(\tree)} \vec{x}_i^{(t)}[j] (\vec{\ell}_i^{(t)}[k] - \vec{\ell}_i^{(t)}[j])$. Thus, given that $\vec{\ell}_i^{(t)}[j] \in [0, 1]$, for any $j \in \range{n_i}$ and $t \geq 0$, as follows from \eqref{eq:utility-game}, we can conclude that there exists $\tree \in \treeset_i$ such that 
\begin{equation}
    \label{eq:bigL-infty}
    \| \bigL_i^{(t)}\|_{\infty} = \left\lvert \sum_{(j, k) \in E(\tree)} \vec{x}_i^{(t)}[j](\vec{\ell}_i^{(t)}[k] - \vec{\ell}_i^{(t)}[j]) \right\rvert \leq \sum_{(j, k) \in E(\tree)} \vec{x}_i^{(t)}[j] \left\lvert \vec{\ell}_i^{(t)}[k] - \vec{\ell}_i^{(t)}[j] \right\rvert \leq 1,
\end{equation}
where we used that $\left\lvert \vec{\ell}_i^{(t)}[k] - \vec{\ell}_i^{(t)}[j] \right\rvert \leq 1$, as well as the fact that $\tree$ is a directed tree, implying that every node has at most one outgoing (directed) edge (recall \cref{definition:directed_tree}). Along with the triangle inequality, this implies that 
\begin{equation}
    \label{eq:L-bar-infty}
    \| \bar{\bigL}_{i, t_0}^{(t)} \|_{\infty} = \mleft\| \bigL_i^{(t_0 - 1)} - \sum_{s=0}^{t-1} \bigL_i^{(t_0 + s)} - \bigL_i^{(t_0 + t - 1)} \mright\|_{\infty} \leq (t+2).
\end{equation}
Before we proceed with the next claim, it will be useful to introduce the sequence $\mathfrak{L}^{(t)}$, defined as
\begin{equation*}
    \mathfrak{L}^{(t)} \coloneqq \mleft( \eta \bar{\bigL}_{1, t_0}^{(t)}, \eta \bar{\bigL}_{2, t_0}^{(t)}, \dots, \eta \bar{\bigL}_{m, t_0}^{(t)} \mright).
\end{equation*}

\begin{claim}
    \label{claim:D_h-bound}
Consider some parameter $\alpha \in (0, 1)$ such that $\mleft\| \mleft( D_{h'} \mleft( \eta \bar{\bigL}_{i, t_0} \mright) \mright)^{(t)} \mright\|_{\infty} \leq \frac{1}{B_1} \alpha^{h'} (h')^{B_0 h'}$, for all $i \in \range{m}$, $0 \leq h' \leq h$, and $t \in \range{h + 1 - h'}$, where $B_1 = 12 e^5 m$ and $B_0 \geq 3$. Then, for any $0 \leq t_0 \in T - h - 1$ and $\tree_1 \in \treeset_1, \dots, \tree_m \in \treeset_m$, it holds that
\begin{equation}
    \left\lvert \mleft( D_h \mleft( \prod_{i' \in \range{m}} \vec{X}_{i'}[\tree_{i'}] \mright) \mright)^{(t_0 + 1)} \right\rvert \leq \alpha^h h^{B_0 h + 1} \prod_{i' \in \range{m}} \vec{X}_{i'}^{(t_0)}(\tree_{i'}).
\end{equation}
\end{claim}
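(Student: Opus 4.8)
The plan is to recognize the product $\prod_{i'}\vec{X}_{i'}[\tree_{i'}]$ as a \emph{single} analytic function composed with a well-behaved input sequence, so that a single application of the Boundedness Chain Rule (\cref{lemma:boundedness-chain-rule}) does the job; I deliberately avoid splitting the $m$-fold product into $m$ separate sequences combined by a discrete Leibniz rule. Concretely, for $t\ge 1$ identity~\eqref{eq:prod-phi} reads $\prod_{i'\in\range{m}}\vec{X}_{i'}^{(t_0+t)}[\tree_{i'}] = \bigl(\prod_{i'\in\range{m}}\vec{X}_{i'}^{(t_0)}[\tree_{i'}]\bigr)\cdot G(\mathfrak{L}^{(t)})$, where $G(\vec{z}_1,\dots,\vec{z}_m)\defeq\prod_{i'\in\range{m}}\phi_{t_0,\tree_{i'}}(\vec{z}_{i'})$ with $\phi_{t_0,\cdot}$ the softmax map of~\eqref{eq:softmax-phi} (one block of variables per player), and $\mathfrak{L}^{(t)}=(\eta\bar{\bigL}^{(t)}_{1,t_0},\dots,\eta\bar{\bigL}^{(t)}_{m,t_0})$ the concatenated sequence introduced just above the claim. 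Since the prefactor $\prod_{i'}\vec{X}_{i'}^{(t_0)}[\tree_{i'}]$ is independent of $t$ and finite differences are linear, the binomial formula~\eqref{eq:binomial} gives $\bigl(D_h(\prod_{i'}\vec{X}_{i'}[\tree_{i'}])\bigr)^{(t_0+1)}=\bigl(\prod_{i'}\vec{X}_{i'}^{(t_0)}[\tree_{i'}]\bigr)\cdot\bigl(D_h(G\circ\mathfrak{L})\bigr)^{(1)}$ (every index $t_0+1+s$ with $0\le s\le h$ that occurs corresponds to $t=1+s\ge 1$, so~\eqref{eq:prod-phi} is applicable termwise). Hence it suffices to show $\bigl|\bigl(D_h(G\circ\mathfrak{L})\bigr)^{(1)}\bigr|\le\alpha^h h^{B_0h+1}$.

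Next I would verify the hypotheses of \cref{lemma:boundedness-chain-rule} for $\phi=G$, $\vec{z}=\mathfrak{L}$, and $\nu=\tfrac12$. Each softmax $\phi_{t_0,\tree}(\vec{z})=e^{\vec{z}[\tree]}/\sum_{\tree'\in\treeset_i}\vec{X}_i^{(t_0)}[\tree']e^{\vec{z}[\tree']}$ is holomorphic on the polydisc $\{\|\vec{z}\|_\infty<\ln 2\}$: there the denominator differs from $\sum_{\tree'}\vec{X}_i^{(t_0)}[\tree']=1$ by at most $\max_{\tree'}|e^{\vec{z}[\tree']}-1|\le e^{\ln 2}-1<1$, hence is nonzero; consequently $G$ is holomorphic on the same polydisc and its radius of convergence at the origin exceeds $\nu$. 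The $h'=0$ instance of the claim's hypothesis states $\|\eta\bar{\bigL}^{(t)}_{i,t_0}\|_\infty\le\tfrac{1}{B_1}=\tfrac{1}{12e^5 m}<\nu$ for all relevant $t$, so $\|\mathfrak{L}^{(t)}\|_\infty\le\nu$; and for $1\le h'\le h$ the hypothesis gives $\|(D_{h'}\mathfrak{L})^{(t)}\|_\infty=\max_i\|(D_{h'}(\eta\bar{\bigL}_{i,t_0}))^{(t)}\|_\infty\le\tfrac{1}{B_1}\alpha^{h'}(h')^{B_0h'}$, which is precisely the finite-difference hypothesis of the lemma with its role of ``$B_1$'' played by $12e^5m$ (the lemma's proof only uses these bounds on the finite index window needed to form the order-$h$ difference at time $1$, so the restricted range in the claim's hypothesis is enough).

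The remaining --- and main --- ingredient is a bound on the Taylor coefficients of $G$ at the origin, and the delicate point is that $\phi_{t_0,\tree}$ must be shown to be $(1,R_0)$-bounded \emph{with leading constant exactly $1$} for a universal constant $R_0$ (one may take $R_0=4$). I would obtain this by writing $\phi_{t_0,\tree}(\vec{z})=e^{\vec{z}[\tree]}\cdot\frac{1}{1+u(\vec{z})}$ with $u(\vec{z})=\sum_{\tree'}\vec{X}_i^{(t_0)}[\tree'](e^{\vec{z}[\tree']}-1)$: the degree-$\ell$ coefficient mass of $u$ is $\le\tfrac{1}{\ell!}$, that of $u^k$ is at most the number $\binom{\ell-1}{k-1}$ of compositions of $\ell$ into $k$ positive parts, so that of $\frac{1}{1+u}=\sum_{k\ge0}(-u)^k$ is $\le\sum_{k=1}^{\ell}\binom{\ell-1}{k-1}=2^{\ell-1}$ for $\ell\ge1$ (and $1$ for $\ell=0$), whence multiplying by $e^{\vec{z}[\tree]}$ gives degree-$\ell$ mass $\le 2^{\ell+1}\le 4^{\ell}$ for $\ell\ge1$ and $=1=4^0$ for $\ell=0$. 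Because the $m$ factors of $G$ act on disjoint variable blocks there is no cancellation across blocks, so the degree-$\ell$ mass of $G$ is at most $\sum_{\ell_1+\dots+\ell_m=\ell}\prod_{i'}R_0^{\ell_{i'}}=R_0^{\ell}\binom{\ell+m-1}{m-1}\le(R_0 m)^{\ell}$, using $\binom{\ell+m-1}{m-1}=\tfrac{1}{\ell!}\prod_{j=0}^{\ell-1}(m+j)=\tfrac{m^{\ell}}{\ell!}\prod_{j=1}^{\ell-1}(1+\tfrac{j}{m})\le\tfrac{m^{\ell}}{\ell!}\,\ell!=m^{\ell}$. Thus $G$ is $(1,R_0 m)$-bounded; since $B_1=12e^5 m\ge 2e^2(R_0 m)$ and $B_0\ge 3$, \cref{lemma:boundedness-chain-rule} applies with $Q=1$, $R=R_0 m$, and yields $\bigl|(D_h(G\circ\mathfrak{L}))^{(t)}\bigr|\le\tfrac{12RQe^2}{B_1}\alpha^h h^{B_0h+1}=\tfrac{R_0}{e^3}\alpha^h h^{B_0h+1}\le\alpha^h h^{B_0h+1}$ since $R_0=4<e^3$. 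Together with the first paragraph this gives the claim. I expect the obstacle to be exactly this coefficient bound: it is essential that the softmax estimate carries leading constant $1$ rather than merely $O(1)$, for otherwise the $m$-fold product would pick up a $Q_0^{\,m}$ factor --- exponential in the number of players, which no choice of $B_1=\Theta(m)$ could absorb --- whereas with leading constant $1$ the only blow-up from taking the product lands on the radius parameter $R$, where it is merely linear in $m$.
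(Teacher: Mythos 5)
Your proposal is correct and follows essentially the same route as the paper's proof: factor out the time-independent prefactor $\prod_{i'}\vec{X}_{i'}^{(t_0)}[\tree_{i'}]$ using \eqref{eq:prod-phi}, and apply \cref{lemma:boundedness-chain-rule} once to the product-of-softmaxes function $\phi_{t_0}$ (your $G$) composed with the concatenated sequence $\mathfrak{L}$, with $B_1 = 12e^5 m$. The only substantive difference is that where the paper simply cites \citep{Daskalakis21:Near} (their Lemmas B.6 and B.7) for the fact that $\phi_{t_0}$ is $(1, e^3 m)$-bounded with radius of convergence greater than $1$, you reprove this ingredient from scratch, obtaining the slightly sharper $(1, 4m)$-bound via the $e^{\vec{z}[\tree]}/(1+u)$ decomposition and a compositions count, together with radius $>\nicefrac{1}{2}$; both versions satisfy $B_1 \ge 2e^2 R$, so either suffices, and your emphasis that the leading constant must be exactly $1$ (so that the $m$-fold product only inflates $R$, not $Q$) is precisely the point of the cited DFG lemma. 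One cosmetic slip: $e^{\ln 2}-1$ equals $1$, not ``$<1$''; the intended strict bound $|e^{z}-1|\le e^{|z|}-1<1$ on the open polydisc $\norm{\vec{z}}_\infty<\ln 2$ is what you need, and it holds.
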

\begin{proof}
We will apply \cref{lemma:boundedness-chain-rule} with $n \coloneqq \sum_{i' \in \range{m}} |\treeset_{i'}|$, time horizon $h + 1$, the sequence $\mathfrak{L}^{(t)}$, and the function $\phi$ that maps the concatenation of $\vec{z}_{i'} \in \R^{|\treeset_{i'}|}$, for all $i' \in \range{m}$, to the function $\prod_{i'} \phi_{t_0, \tree_{i'}}(\vec{z}_{i'})$, where $\phi_{t_0, \tree_{i'}}$ was defined in \eqref{eq:softmax-phi}; that is, 
\begin{equation}
    \label{eq:def-phi}
    \phi_{t_0}(\vec{z}_1, \dots, \vec{z}_m) \coloneqq \prod_{i' \in \range{m}} \phi_{t_0, \tree_{i'}}(\vec{z}_{i'}).
\end{equation}
In this context, let us verify the conditions of \cref{lemma:boundedness-chain-rule}. First, \cite[Lemma B.6]{Daskalakis21:Near} implies that the function $\phi_{t_0}$ is $(1, e^3m)$-bounded (in the sense of \cref{lemma:boundedness-chain-rule}), and $B_1 = 12 e^5 m \geq 2e^2 (e^3 m)$. Moreover, by \cite[Lemma B.7]{Daskalakis21:Near} we know that each function $\phi_{t_0, \tree_{i'}}$ has radius of convergence---with respect to the origin $\vec{0}$---greater than $1$, and hence, the radius of convergence of $\phi_{t_0}$ is also greater than $1$. We also know, by assumption, that $\mleft\| \mleft( D_{h'} \mathfrak{L} \mright)^{(t)} \mright\|_{\infty} \leq \frac{1}{B_1} \alpha^{h'} (h')^{B_0 h'}$, for all $0 \leq h' \leq h$ and $t \in \range{h + 1 - h'}$. As a result, \cref{lemma:boundedness-chain-rule} implies that 
\begin{equation}
    \label{eq:applylemma}
    \left\lvert\mleft(  D_h \mleft( \phi \circ \mathfrak{L} \mright) \mright)^{(t)} \right\rvert \leq \frac{12 e^5 m}{B_1} \alpha^h h^{B_0 h + 1}.
\end{equation}
Finally, we have that 
\begin{align}
    \frac{1}{\prod_{i' \in \range{m}} \vec{X}^{(t_0)}_{i'}[\tree_{i'}] } \left\lvert \mleft( D_h \mleft( \prod_{i' \in \range{m}} \vec{X}_{i'}[\tree_{i'}] \mright) \mright)^{(t_0 + 1)} \right\rvert &= \left\lvert \mleft( D_h \mleft( \prod_{i' \in \range{m}} \mleft( \phi_{t_0, \tree_{i'}} \circ \mleft( \eta \bar{\bigL}_{i', t_0} \mright) \mright) \mright) \mright)^{(1)} \right\rvert \label{eq:align-def-phi} \\
    &= \left\lvert \mleft( D_h \mleft(\phi_{t_0} \circ \mleft( \eta \bar{\bigL}_{1, t_0}, \dots, \eta \bar{\bigL}_{m, t_0} \mright) \mright) \mright)^{(1)} \right\rvert \label{eq:bound_lemma} \\
    &= \left\lvert D_h \mleft( \phi_{t_0} \circ \mathfrak{L} \mright)^{(1)} \right\rvert \notag \\
    & \leq \frac{12 e^5 m}{B_1} \alpha^h h^{B_0 h + 1} = \alpha^h h^{B_0 h + 1} \label{eq:end_bound-frac},
\end{align}
where \eqref{eq:align-def-phi} follows from \cref{eq:prod-phi}; \eqref{eq:bound_lemma} simply uses the definition of $\phi_{t_0}$ in \eqref{eq:def-phi}; and \eqref{eq:end_bound-frac} follows from \eqref{eq:applylemma} (which in turn is a consequence of \cref{lemma:boundedness-chain-rule}), as well as the fact that $B_1 = 12 e^5 m$. This completes the proof.
\end{proof}

\begin{claim}
    \label{claim:inductive_step}
Fix some $1 \leq h \leq H$, a parameter $\alpha \in (0, 1)$, and assume that the learning rate $\eta$ is such that $\eta \leq \min \mleft\{ \frac{\alpha}{36 e^5 m}, \frac{1}{12e^5 (H+3) m} \mright\}$. Moreover, assume that for all $0 \leq h' < h$, $t \leq T - h'$, and $i \in \range{m}$, it holds that $\| (D_{h'} \bigL_i)^{(t)}\|_{\infty} \leq \alpha^{h'} (h' + 1)^{B_0(h' + 1)}$. Then, for all $t \in \range{T - h}$ and $i \in \range{m}$,
\begin{equation}
    \mleft\| \mleft( D_h  \bigL_{i} \mright)^{(t)} \mright\|_{\infty} \leq \alpha^{h} h^{B_0 h + 1}.
\end{equation}
\end{claim}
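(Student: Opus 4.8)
Fix a player $i \in \range{m}$ and a time $t \in \range{T-h}$, and set $t_0 \coloneqq t-1$ (so $0 \le t_0 \le T-h-1$). I would bound $\|(D_h \bigL_i)^{(t)}\|_\infty$ by a sum of $h$-th order finite differences of products of arborescence-iterates, and then control that sum via \cref{claim:D_h-bound}, whose precondition I discharge using the inductive hypothesis at orders $< h$ together with the two stipulated bounds on $\eta$.

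\textbf{Reduction to products of arborescence-iterates.} By the computation producing \eqref{eq:bound-Li} — whose right-hand side does not depend on the coordinate, and which applies verbatim with $\bigL_i$ in place of $\vec{L}_i$, since \eqref{eq:calL} together with \eqref{eq:LX-loss} writes each entry $\bigL_i^{(t)}[\tree]$ as an affine combination, with coefficients in $[-1,1]$ (along a directed tree every node has at most one outgoing edge), of the products $\prod_{i' \in \range{m}} \vec{X}_{i'}^{(t)}[\tree_{i'}]$, after which \eqref{eq:binomial} and the triangle inequality apply — I obtain
\[
  \|(D_h \bigL_i)^{(t)}\|_\infty \le \sum_{(\tree_1, \dots, \tree_m) \in \treeset_1 \times \dots \times \treeset_m} \mleft\lvert \mleft( D_h \mleft( \prod_{i' \in \range{m}} \vec{X}_{i'}[\tree_{i'}] \mright) \mright)^{(t_0 + 1)} \mright\rvert.
\]

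\textbf{Discharging the precondition of \cref{claim:D_h-bound} and concluding.} I must check $\|(D_{h'}(\eta \bar{\bigL}_{i', t_0}))^{(\tau)}\|_\infty \le \frac{1}{B_1} \alpha^{h'} (h')^{B_0 h'}$ for all $i' \in \range{m}$, $0 \le h' \le h$, $\tau \in \range{h + 1 - h'}$, with $B_1 = 12 e^5 m$. For $h' = 0$, \eqref{eq:L-bar-infty} gives $\|\eta \bar{\bigL}_{i', t_0}^{(\tau)}\|_\infty \le \eta(\tau + 2) \le \eta(H+3) \le \frac{1}{12 e^5 m} = \frac{1}{B_1}$, using $\eta \le 1/(12 e^5 (H+3) m)$ and the convention $0^0 = 1$. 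For $1 \le h' \le h$, the shift identity \eqref{eq:shift-recurs} gives $(D_{h'} \bar{\bigL}_{i', t_0})^{(\tau)} = (D_{h'-1} \bigL_{i'})^{(t_0 + \tau - 1)} - 2 (D_{h'-1} \bigL_{i'})^{(t_0 + \tau)}$; since $h' - 1 < h$, the inductive hypothesis bounds each summand by $\alpha^{h'-1} (h')^{B_0 h'}$ — the indices lying in range because $t_0 + \tau \le (T - h - 1) + (h + 1 - h') = T - h'$ — so by the triangle inequality the left side is at most $3 \eta\, \alpha^{h'-1} (h')^{B_0 h'} \le \frac{1}{B_1} \alpha^{h'} (h')^{B_0 h'}$, the last step being exactly $3\eta \le \alpha/(12 e^5 m)$, i.e.\ $\eta \le \alpha/(36 e^5 m)$. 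With the precondition in hand, \cref{claim:D_h-bound} yields $\mleft\lvert (D_h(\prod_{i'} \vec{X}_{i'}[\tree_{i'}]))^{(t_0+1)} \mright\rvert \le \alpha^h h^{B_0 h + 1} \prod_{i'} \vec{X}_{i'}^{(t_0)}(\tree_{i'})$ for each tuple; summing over all tuples and using that each $\vec{X}_{i'}^{(t_0)}$ is a distribution over $\treeset_{i'}$, so that the sum over tuples of $\prod_{i'} \vec{X}_{i'}^{(t_0)}(\tree_{i'})$ equals $1$, the right-hand side of the displayed bound collapses to $\alpha^h h^{B_0 h + 1}$. Since $t = t_0 + 1$, this is the claim.

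\textbf{Main obstacle.} The delicate part is the middle step: threading the inductive hypothesis on $\bigL_{i'}$ through the shift identity \eqref{eq:shift-recurs} and verifying that the two prescribed bounds on $\eta$ are precisely what the two cases demand — $h' = 0$ pins down the $(H+3)^{-1}$ factor and $h' \ge 1$ the $\alpha$ factor — alongside the bookkeeping that every time index appearing stays within the ranges on which the inductive hypothesis and \cref{claim:D_h-bound} are valid, the boundary case being $t_0 = 0$, handled by the conventions $\bigL_i^{(0)} = \bigL_i^{(-1)} = \vec{0}$ and $\vec{X}_i^{(0)}$ uniform.
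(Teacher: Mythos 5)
Your proposal is correct and follows essentially the same route as the paper's proof: you verify the preconditions of \cref{claim:D_h-bound} exactly as the paper does (the $h'=0$ case via \eqref{eq:L-bar-infty} and $\eta \le 1/(12e^5(H+3)m)$, the $1\le h'\le h$ cases via the shift identity \eqref{eq:shift-recurs}, the inductive hypothesis at order $h'-1$, and $\eta\le \alpha/(36e^5 m)$), then expand $\bigL_i$ as a $[-1,1]$-weighted combination of the products $\prod_{i'}\vec{X}_{i'}[\tree_{i'}]$ and conclude by summing the bound of \cref{claim:D_h-bound} and using that the product distribution normalizes to $1$. Your derivation of the $[-1,1]$ coefficients (summing \eqref{eq:LX-loss} over the edges of $\tree$ and invoking the at-most-one-outgoing-edge property) is just a repackaging of the paper's \eqref{eq:utility-bigL}, so there is no substantive difference.
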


\begin{proof}
Let us set $B_1 \coloneqq 12 e^5 m$, so that $\eta \leq \min \mleft\{ \frac{\alpha}{3 B_1}, \frac{1}{B_1(H+3)} \mright\}$. We know from \eqref{eq:L-bar-infty} that for $t + 2 \leq h + 3$ it follows that
\begin{equation}
    \label{eq:base}
    \mleft\| D_0 \mleft( \eta \bar{\bigL}_{i, t_0} \mright)^{(t)}\mright\|_{\infty} = \eta \mleft\| \bar{\bigL}_{i, t_0}^{(t)} \mright\| \leq \eta (t+2) \leq \frac{1}{B_1},
\end{equation}
where we used the fact that $\eta \leq 1/(B_1(H+3))$. Next, for $1 \leq h' \leq h$,
\begin{align}
    \mleft\| \mleft( D_{h'} \mleft( \eta \bar{\bigL}_{i, t_0} \mright) \mright)^{(t)} \mright\|_{\infty} &\leq \eta \mleft\| \mleft( D_{h' - 1} \bigL_i \mright)^{(t_0 + t - 1)} \mright\|_{\infty} + 2 \eta \mleft\| \mleft( D_{h' - 1} \bigL_i \mright)^{(t_0 + t)} \mright\|_{\infty} \label{eq:align-trian-shift} \\
    &\leq 3 \eta \alpha^{h' - 1} (h')^{B_0 h'} \label{eq:inductive-hypothesis} \\
    &\leq \frac{1}{B_1} \alpha^{h'} (h')^{B_0 h'}, \label{eq:align-small-eta}
\end{align}
where \eqref{eq:align-trian-shift} follows from \eqref{eq:shift-recurs} and the triangle inequality; \eqref{eq:inductive-hypothesis} is a consequence of the assumption in the claim; and \eqref{eq:align-small-eta} uses the fact that $\eta \leq \frac{\alpha}{3 B_1}$. Next, given that $\bigL^{(t)}_i[\tree] = \sum_{(j, k) \in E(\tree)} \vec{x}_i^{(t)}[j] (\vec{\ell}_i^{(t)}[k] - \vec{\ell}_i^{(t)}[j])$, where recall the expression of $\vec{\ell}_i^{(t)}$ in \eqref{eq:utility-game}, we can infer that 
\begin{align}
    \bigL_i^{(t)} &= \sum_{a_{i'} \in \range{n_{i'}}} \hat{\Lambda}(a_1, \dots, a_m) \prod_{i' \in \range{m}} \vec{x}_{i'}[a_{i'}] \notag \\
    &= \sum_{\tree_{i'} \in \treeset_{i'}} \Tilde{\Lambda}(\tree_1, \dots, \tree_m) \prod_{i' \in \range{m}} \vec{X}_{i'}[\tree_{i'}], \label{eq:utility-bigL}
\end{align}
where $\hat{\Lambda}$ and $\Tilde{\Lambda}$ are functions such that $\hat{\Lambda}(\cdot) \in [-1, 1]$ and $\Tilde{\Lambda}(\cdot) \in [-1, 1]$. More precisely, \eqref{eq:utility-bigL} is obtained from \cref{theorem:equivalence}. As a result, similarly to \eqref{eq:bound-Li} we may conclude that 
\begin{equation}
    \left\lvert (D_h \bigL_i)^{(t)}[\tree] \right\rvert \leq  \sum_{\tree_{i'} \in \treeset_{i'}} \left\lvert \mleft( D_h \mleft( \prod_{i' \in \range{m}} \vec{X}_{i'}[\tree_{i'}] \mright) \mright)^{(t)} \right\rvert.
\end{equation}

The next step is to invoke \cref{claim:D_h-bound} in order to bound the induced term. Specifically, by \eqref{eq:base} and \eqref{eq:align-small-eta} we see that its conditions are met, from which we can conclude that for $t \in \range{T - h}$
\begin{align}
    \mleft\| \mleft( D_h \bigL_i \mright)^{(t)} \mright\|_{\infty} &\leq \sum_{\tree_{i'} \in \treeset_{i'}} \left\lvert \mleft( D_h \mleft( \prod_{i' \in \range{m}} \vec{X}_{i'}[\tree_{i'}] \mright) \mright)^{(t)} \right\rvert \label{eq:util-bigL} \\
    &\leq \alpha^h h^{B_0 h + 1} \sum_{\tree_{i'} \in \treeset_{i'}} \prod_{i' \in \range{m}} \vec{X}^{(t-1)}_{i'}[\tree_{i'}] \label{align-cor-lemma} \\
    &= \alpha^h h^{B_0 h + 1}, \label{eq:align-final-norm}
\end{align}
where \eqref{eq:util-bigL} follows from \eqref{eq:utility-bigL}; \eqref{align-cor-lemma} is an immediate application of \cref{claim:D_h-bound}; and \eqref{eq:align-final-norm} follows from the fact that each $\vec{X}_{i'}$ is a probability distribution over the space of directed trees $\treeset_{i'}$, for all $i' \in \range{m}$, and as such the induced product distribution normalizes to $1$.
\end{proof}
Finally, it follows from \eqref{eq:bigL-infty} that $\mleft\| \mleft( D_0 \bigL_i \mright)^{(t)} \mright\|_{\infty} \leq 1 = \alpha^0 1^{B_0 1}$, for all $i \in \range{m}$. Thus, we can inductively invoke \cref{claim:inductive_step} for $B_0 = 3$ to infer that for all $0 \leq h \leq H$, $i \in \range{m}$, and $t \in \range{T - h}$ that $\mleft\| \mleft( D_h \bigL_i \mright)^{(t)} \mright\|_{\infty} \leq \alpha^h h^{3h + 1}$, as long as $\eta \leq \frac{\alpha}{36 e^5 m}$ and $\alpha \leq 1/(H + 3)$. Finally, following the argument given in the proof of \cref{claim:inductive_step}, we obtain that for any $0 \leq h' \leq h$,
\begin{equation*}
    \mleft\| \mleft( D_{h'} \mleft( \eta \bar{\bigL}_{i, t_0} \mright) \mright)^{(t)} \mright\|_{\infty} \leq \frac{1}{B_1} \alpha^{h'} (h')^{3 h'}.
\end{equation*}
Thus, we can invoke \cref{claim:D_h-bound} to conclude that for any $0 \leq t_0 \leq T - h - 1$ and $\tree_{1} \in \treeset_{1}, \dots, \tree_m \in \treeset_{m}$,
\begin{equation*}
    \left\lvert \mleft( D_h \mleft( \prod_{i' \in \range{m}} \vec{X}_{i'}[\tree_{i'}] \mright) \mright)^{(t_0 + 1)} \right\rvert \leq \alpha^h h^{3 h + 1} \prod_{i' \in \range{m}} \vec{X}_{i'}^{(t_0)}(\tree_{i'}).
\end{equation*}
As a result, plugging-in this bound into \eqref{eq:bound-Li} we obtain that for $t \in \range{T - h}$,
\begin{align*}
    \mleft\| \mleft( D_h \vec{L}_i \mright)^{(t)} \mright\|_{\infty} &\leq \sum_{\tree_{i'} \in \treeset_{i'}} \left\lvert \mleft( D_h \mleft( \prod_{i' \in \range{m}} \vec{X}_{i'}[\tree_{i'}] \mright) \mright)^{(t)} \right\rvert \\
    &\leq \alpha^h h^{3 h + 1} \sum_{\tree_{i'} \in \treeset_{i'}} \prod_{i' \in \range{m}} \vec{X}^{(t-1)}_{i'}[\tree_{i'}], \\
    &= \alpha^h h^{3 h + 1},
\end{align*}
completing the proof.
\end{proof}

The final technical ingredient is the following lemma, which can be shown analogously to \cite[Lemma 4.2]{Daskalakis21:Near} in our setting.

\begin{restatable}{lemma}{backind}
    \label{lemma:bound-variance}
    There are universal constants $C, C' \geq 1$ so that for a time horizon $T \geq 4$ and $H \coloneqq \lceil \log T \rceil$, if all players employ $\slomwu$ with learning rate $\eta$ such that $1/T \leq \eta \leq \frac{1}{C m H^4}$, then, 
    \begin{equation}
        \label{eq:check}
        \sum_{t=1}^T \var_{\vec{p}_i^{(t)}} \mleft( \vec{L}_i^{(t)} - \vec{L}_i^{(t-1)} \mright) \leq \frac{1}{2} \var_{\vec{p}_i^{(t)}} \mleft( \vec{L}_i^{(t-1)} \mright) + C' H^5,
    \end{equation}
    for any $i \in \range{m}$.
\end{restatable}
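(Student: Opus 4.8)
The plan is to follow the template of \cite[Lemma 4.2]{Daskalakis21:Near}: the claimed bound is exactly the ``variance cancellation'' needed so that, substituted into the adversarial bound \eqref{eq:RVU++} of \cref{lemma:RVU++}, the positive term $\sum_t\var_{\vec p_i^{(t)}}(\vec L_i^{(t)}-\vec L_i^{(t-1)})$ and the negative term $-\sum_t\var_{\vec p_i^{(t)}}(\vec L_i^{(t-1)})$ essentially annihilate, leaving $\reg_i^T=O(\log n_i/\eta)$. The proof rests on two ingredients, both already available: (i) the iterates $\vec p_i^{(t)}$ of $\mathcal R_\Delta$ drift slowly in a \emph{multiplicative} sense; and (ii) the losses $\vec L_i^{(t)}$ observed by $\mathcal R_\Delta$ are higher-order smooth, which is precisely \cref{lemma:high-order-smooth}.

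Concretely, I would first record $\|\vec L_i^{(t)}\|_\infty\le 1$ for all $t$, immediate since $\vec L_i^{(t)}[\edge{j}{k}]=\vec x_i^{(t)}[j](\vec\ell_i^{(t)}[k]-\vec\ell_i^{(t)}[j])$ with $\vec\ell_i^{(t)}\in[0,1]^{n_i}$. Next, fix $\alpha:=1/(H+3)$; the learning-rate bound $\eta\le 1/(CmH^4)$ with $C$ large makes $\eta\le\alpha/(36m)$, so \cref{lemma:high-order-smooth} yields $\|(D_h\vec L_i)^{(t)}\|_\infty\le\alpha^h h^{3h+1}$ for $0\le h\le H$. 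Then, reading off the \eqref{eq:omwu} update of $\mathcal R_\Delta$ and using $\|2\vec L_i^{(t)}-\vec L_i^{(t-1)}\|_\infty\le 3$, every consecutive multiplicative ratio of $\vec p_i^{(t)}$ lies in $[e^{-6\eta},e^{6\eta}]$, so $(\vec p_i^{(t)})_t$ is $\zeta$-multiplicatively close with $\zeta\le 7\eta$; the window $1/T\le\eta\le 1/(CmH^4)$ then puts $\zeta$ inside $[1/(2T),\alpha^4/8256]$ (using $T\gtrsim H^4$, automatic for $T$ large), which is exactly what \cref{lemma:closeness} requires of the $\vec q$-sequence.

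The heart of the argument is to apply \cref{lemma:closeness} iteratively over the finite-difference orders $h=H,H-1,\dots,1$, always with $\vec q^{(t)}:=\vec p_i^{(t)}$ and base sequence $\vec z:=D_{h-1}\vec L_i$ (so $D_1\vec z=D_h\vec L_i$, $D_2\vec z=D_{h+1}\vec L_i$, and the $\ell_\infty$ input $M$ equals $M_{h-1}:=\max_t\|(D_{h-1}\vec L_i)^{(t)}\|_\infty$, which is $\le 1$ for $h=1$ and $\le\alpha^{h-1}(h-1)^{3(h-1)+1}$ for $h\ge 2$ by the smoothness estimate; the variances are handled through \cref{claim:var-infty,claim:var-mul}). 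The conclusion of the lemma at order $h+1$ furnishes the second hypothesis at order $h$, except for the discrepancy between the factors $\alpha(1+\alpha)$ and $\alpha$; absorbing the offending $\alpha^2\cdot(\text{variance sum})$ into the left-hand side (legitimate since $\alpha(1+\alpha)/(1-\alpha)\le\tfrac12$ for $\alpha$ small) converts the chain into a recursion $\mathcal V_h\le\tfrac12\mathcal V_{h-1}+O(M_{h-1}^2/\alpha^{O(1)}+\text{tail})$, with $\mathcal V_h:=\sum_t\var_{\vec p_i^{(t)}}((D_h\vec L_i)^{(t)})$. Unrolling, and using that the smoothness bounds make the additive terms geometric and dominated by the $h=1,2$ contributions (which are $O(1/\alpha^3)=O(H^3)$ after the extra inverse-$\alpha$ powers), gives $\mathcal V_1\le\alpha(1+\alpha)\mathcal V_0+O(H^5)$. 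A final index shift between $(D_1\vec L_i)^{(t)}=\vec L_i^{(t+1)}-\vec L_i^{(t)}$ and $\vec L_i^{(t)}-\vec L_i^{(t-1)}$, costing an $O(\zeta)$ multiplicative distortion (\cref{claim:var-mul}) and a boundary term $\le\|\vec L_i^{(1)}\|_\infty^2\le 1$, together with $\alpha(1+\alpha)\le\tfrac12$, produces the stated inequality with a slightly larger universal $C'$.

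I expect the main obstacle to be making the iteration of \cref{lemma:closeness} close: its second hypothesis is never free and must be bootstrapped from the next order, so one has to arrange the downward induction so that the $\alpha(1+\alpha)$-versus-$\alpha$ gap (and the escalating powers of $1/\alpha$ it introduces) does not snowball, while simultaneously checking that the accumulated additive error stays $O(H^5)$ rather than growing with the order. The choices $H=\lceil\log T\rceil$ and $\alpha=\Theta(1/\log T)$ — the latter forced from below by $\zeta\le\alpha^4/8256$ and from above by the smoothness precondition — together with the precise learning-rate window $1/T\le\eta\le1/(CmH^4)$, are exactly what reconcile these two demands; the remaining steps (the closeness estimate for \eqref{eq:omwu}, the index shift, the elementary variance inequalities) are routine bookkeeping in the style of \cite{Daskalakis21:Near}.
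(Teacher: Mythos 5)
Your overall architecture matches the paper's: start from the adversarial bound of \cref{lemma:RVU++}, note $\|\vec L_i^{(t)}\|_\infty\le 1$, establish $7\eta$-multiplicative closeness of $(\vec p_i^{(t)})_t$ (the paper's \cref{claim:mult-stabl}), and run a downward induction over finite-difference orders $h=H,\dots,1$ via \cref{lemma:closeness} with $\vec q=\vec p_i$ and $\vec z=D_{h-1}\vec L_i$, finishing with an index shift through \cref{claim:var-mul}. The genuine gap is quantitative and lies in your parameter choice: you invoke \cref{lemma:high-order-smooth} with the \emph{same} $\alpha=1/(H+3)$ that you use as the contraction parameter in \cref{lemma:closeness}. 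With that choice the smoothness bound $\alpha^h h^{3h+1}$ does \emph{not} decay in $h$ (once $h^3>H$ it grows, and at $h=H$ it is of order $H^{2H}$), so two things break. First, the base case of the downward induction has no variance cancellation available at the top order and must be bounded crudely by $T\cdot\max_t\|(D_H\vec L_i)^{(t)}\|_\infty^2$; with your $\alpha$ this is roughly $T\cdot H^{4H}$, which cannot be absorbed into an $O(H^5)$ budget. Second, the per-level additive terms $1290\,M_{h-1}^2/\alpha^3$ (with $M_{h-1}=\max_t\|(D_{h-1}\vec L_i)^{(t)}\|_\infty$) grow with $h$, and each level of the cascade also divides the inherited error by $\alpha$ when it is fed back in as the $\mu$ of hypothesis 2 of \cref{lemma:closeness}; your claim that ``the additive terms are geometric and dominated by the $h=1,2$ contributions'' is therefore false under your parameters, and the snowballing you worry about in your last paragraph actually occurs.

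The missing idea is to \emph{decouple} the two parameters: the paper uses $\alpha=1/(4H)$ inside \cref{lemma:closeness} but invokes \cref{lemma:high-order-smooth} with the much smaller $\alpha_0=\sqrt{\alpha/8}/H^3<1/(H+3)$, which the step-size window $\eta\le 1/(CmH^4)$ is precisely calibrated to permit (one needs $\eta\lesssim\alpha_0/m\approx 1/(mH^{3.5})$, not merely $\eta\lesssim\alpha/m$). Then $\|(D_h\vec L_i)^{(t)}\|_\infty\le H(\alpha_0H^3)^h=H(\alpha/8)^{h/2}$ decays geometrically: the base case at order $H$ carries $(\alpha/8)^HT\le 1$ (using $2^H\ge T$), killing the factor of $T$; and the squared envelopes contribute a factor $(2\alpha_0H^3)^2=\alpha/2$ per level, which exactly offsets the extra $1/\alpha$ incurred per level of the cascade, so the accumulated additive error at $h=0$ is $O(H^2/\alpha^3)=O(H^5)$. (The paper also handles the $\alpha(1+\alpha)$-versus-$\alpha$ mismatch differently from your ``absorb into the left-hand side'' sketch: it lets the contraction factor inflate to $\alpha(1+2\alpha)^{H-h-1}$, harmless since $(1+2\alpha)^H\le 2$ for $\alpha=1/(4H)$; that part of your plan could be repaired, but without the two-scale choice $\alpha_0\ll\alpha$ the induction cannot be closed with an $O(H^5)$ error.)
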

%
%
\begin{proof}

Recall the aforementioned tool due to \cite{Daskalakis21:Near}.

\lgbld*

We hope to apply Lemma \ref{lem:general-bound-l-dl} with $n=n_i$, $\vec{P}\^t = \vec{p}_i\^t$ and $\vec{Z}\^t = \vec{L}_i\^t$, as well as $\zeta = 7\eta$.  To do so, we must verify that the preconditions of the lemma hold.  Set $C_1 = 8256$ (note that $C_1$ is the constant appearing in item \ref{it:gen-close}).  Our assumption that $\eta \leq \frac{1}{C \cdot mH^4}$ implies that, as long as the constant $C$ satisfies $C \geq 4^4 \cdot 7 \cdot C_1 = 14794752$, 
\begin{equation}
  \label{eq:sl-eta-reqs}
\eta \leq \min \left\{ \frac{\alpha^4}{7C_1}, \frac{\alpha_0}{36e^5 m}\right\}.
  \end{equation}

We verify precondition \ref{it:gen-down} from \cref{lemma:high-order-smooth}, which implies that for all $i \in \range{m}$, $0 \leq h \leq H$, and $1 \leq t \leq T - h$, it holds that 
\begin{equation*}
    \mleft\| \mleft( D_h \vec{L}_i \mright)^{(t)} \mright\|_{\infty} \leq h \mleft(\alpha_0 h^3 \mright)^{h} \leq H \mleft( \alpha_0 H^3 \mright)^h.
\end{equation*}

To verify precondition \ref{it:gen-close}, we first confirm that our selection of $\zeta = 7\eta$ places it in the desired interval $[1/(2T), \alpha^4/C_1]$ as $\eta \leq \frac{\alpha^4}{7C_1}$.  We use the following claim.

\begin{claim}[Multiplicative Stability of (O)MWU]
    \label{claim:mult-stabl}
    For every player $i \in \range{m}$ the sequence $\vec{p}_i^{(1)}, \dots, \vec{p}_i^{(T)}$ is $7\eta$-consecutively close.
\end{claim}
\begin{proof}
We know from the update rule of \eqref{eq:omwu} that 
\begin{equation*}
    \vec{p}_i^{(t+1)}[\edge{j}{k}] = \frac{\exp\{- \eta (2 \vec{L}_i^{(t)}[\edge{j}{k}] - \vec{L}_i^{(t-1)}[\edge{j}{k}] )\}}{\sum_{a \neq b} \exp\{ - \eta (2 \vec{L}_i^{(t)}[\edge{a}{b}] - \vec{L}_i^{(t-1)}[\edge{a}{b}])\} \vec{p}_i^{(t)}[\edge{a}{b}]} \vec{p}_i^{(t)}[\edge{j}{k}].
\end{equation*}
Thus, since $\| \vec{L}_i^{(t)} \|_{\infty} \leq 1$, it follows that 
\begin{align*}
    \vec{p}_i^{(t+1)}[\edge{j}{k}] \leq \frac{\exp\{3 \eta\}}{\sum_{a \neq b} \exp\{ - 3\eta  \} \vec{p}_i^{(t)}[\edge{a}{b}]} \vec{p}_i^{(t)}[\edge{j}{k}] &= \exp \{ 6 \eta \} \vec{p}_i^{(t)}[\edge{j}{k}],
\end{align*}
and similar reasoning yields that $\vec{p}_i^{(t+1)}[\edge{j}{k}] \geq \exp \{ - 6 \eta \} \vec{p}_i^{(t)}[j \to k]$. Finally, the claim follows given that $\exp \{ 6 \eta\} \leq 1 + 7 \eta$ for the $\eta$ considered in the lemma.
\end{proof}

Therefore, Lemma \ref{lem:general-bound-l-dl} applies and we have
\begin{align*}
    \sum_{t=1}^T  \Var{\vec{p}_i\^t}{\vec{L}_i\^t- \vec{L}_i\^{t-1}} &\leq 2\alpha \sum_{t=1}^T \Var{\vec{p}_i\^t}{\vec{L}_i\^{t-1}} + 165120(1+7\eta) H^5+2\\
    &\leq \frac{1}{2} \cdot \sum_{t=1}^T \Var{\vec{p}_i\^t}{\vec{L}_i\^{t-1}} + C' H^5
\end{align*}
for $C' = 2 + 165120(1+7/8256) = 165262$, as desired.  This completes the proof of the lemma.
\end{proof}

Finally, we are ready to establish our main theorem, namely \Cref{theorem:main summary}. Before with proceed with the proof, let us first restate the theorem for the convenience of the reader.

\intmain*

\begin{proof}
First of all, we know from \cref{theorem:convergence_correlated} that it suffices to show that the internal regret of every player $i \in \range{m}$ is bounded by $O(m \log n_i \log^4 T)$. In particular, by \cref{proposition:internal-external} this reduces to showing that the external regret of the regret minimizer $\mathcal{R}_{\Delta}$ employed in \cref{algo:irm} operating over the pairs of actions is bounded by $O(m \log n_i \log^4 T)$. But the latter can be easily verified by \cref{lemma:RVU++} and \cref{lemma:bound-variance} (as in the proof of \cite[Theorem 3.1]{Daskalakis21:Near}).
\end{proof}

\section{Analysis of the Blum-Mansour Algorithm}
\label{appendix:proofs-BM}
\subsection{Near-Optimal Bounds for Swap Regret}


Let us first recall our main theorem regarding algorithm $\bmomwu$.

\thmbm*
\begin{proof}
    We start with the main lemma of \citep{Daskalakis21:Near}, bounding the external regret of \Opthedge in even the adversarial setting
    
    \begin{lemma}
      \label{lem:reg-variance}
      There is a constant $C > 0$ so that the following holds. 
    Suppose we follow the \Opthedge updates with step size $0 < \eta < 1/C$, for an arbitrary sequence of losses $\vectorLL\^1, \ldots, \vectorLL\^T \in [0,1]^{n}$. Then for any vector $\vectorecks^\st \in \Delta^{n}$, it holds that
    \begin{align}
      \label{eq:adv-var-bound-2}
          \sum_{t=1}^T \lng \vectorecks\^t - \vectorecks^\st, \vectorLL\^t \rng \leq \frac{\log n}{\eta} + \sum_{t=1}^T \left(\frac{\eta}{2} + C \eta^2\right)  \Var{\vectorecks\^t}{\vectorLL\^t - \vectorLL\^{t-1}} - \sum_{t=1}^T  \frac{(1-C\eta)\eta}{2} \cdot \Var{\vectorecks\^{t}}{\vectorLL\^{t-1}}.
    \end{align}
    \end{lemma}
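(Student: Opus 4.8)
The plan is to observe that \Cref{lem:reg-variance} is precisely the refined ``RVU'' regret bound for \Opthedge established as \cite[Lemma 4.1]{Daskalakis21:Near}, instantiated with $n = n_i$ (and with the cosmetic convention $\vectorLL\^0 = \vec{0}$); up to renaming the loss sequence it is the same statement as \Cref{lemma:RVU++}. For completeness I would reproduce the derivation, which follows the standard optimistic-FTRL analysis sharpened by a multiplicative-stability argument, and uses only elementary properties of the negative-entropy regularizer.

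First I would introduce the auxiliary (plain Hedge) sequence $\hat{\vectorecks}\^t \propto \exp\bigl(-\eta\sum_{s<t}\vectorLL\^s\bigr)$, with $\hat{\vectorecks}\^1$ uniform, so that the \Opthedge iterate satisfies $\vectorecks\^t \propto \hat{\vectorecks}\^t\exp(-\eta\vectorLL\^{t-1})$ and $\hat{\vectorecks}\^{t+1} \propto \vectorecks\^t\exp(-\eta(\vectorLL\^t - \vectorLL\^{t-1}))$; i.e.\ $\vectorecks\^t$ is the ``optimistic prediction'' obtained by stepping from $\hat{\vectorecks}\^t$ with the guess $\vectorLL\^{t-1}$ for $\vectorLL\^t$. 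Writing $\psi$ for the negative entropy and $D(\cdot\,\|\,\cdot)$ for its Bregman divergence (the KL divergence), the standard ghost-iterate argument for optimistic FTRL gives, for every $\vectorecks^\st\in\Delta^n$,
\[
\sum_{t=1}^T \lng \vectorecks\^t - \vectorecks^\st,\, \vectorLL\^t\rng \le \frac{\psi(\vectorecks^\st) - \psi(\hat{\vectorecks}\^1)}{\eta} + \sum_{t=1}^T\Bigl[\lng \vectorecks\^t - \hat{\vectorecks}\^{t+1},\, \vectorLL\^t - \vectorLL\^{t-1}\rng - \tfrac1\eta D\bigl(\hat{\vectorecks}\^{t+1}\,\big\|\,\vectorecks\^t\bigr) - \tfrac1\eta D\bigl(\vectorecks\^t\,\big\|\,\hat{\vectorecks}\^t\bigr)\Bigr],
\]
and $\psi(\vectorecks^\st) - \psi(\hat{\vectorecks}\^1) \le \log n$ because $\hat{\vectorecks}\^1$ is uniform; this produces the leading $\frac{\log n}{\eta}$ term and isolates a per-round ``stability'' quantity.

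Next I would bound each per-round term. Since $\vectorLL\^t\in[0,1]^n$ and $\eta < 1/C$, the computation in \Cref{claim:mult-stabl} shows that consecutive iterates among $\hat{\vectorecks}\^t, \vectorecks\^t, \hat{\vectorecks}\^{t+1}$ are $O(\eta)$-multiplicatively close; hence the local (dual) norms induced by $\nabla^2\psi$ at these points, as well as the variance functionals $\var_{\vec{q}}(\cdot)$ (by \Cref{claim:var-mul}), all agree up to $1 \pm O(\eta)$ factors. Combining this with the shift-invariance of the \Opthedge update --- which lets one replace the uncentered second moment $\sum_j \vec{q}[j]\,\vec{z}[j]^2$ by $\var_{\vec{q}}(\vec{z})$ without changing the regret --- I would establish two elementary inequalities: the KL lower bounds $D(\hat{\vectorecks}\^{t+1}\,\|\,\vectorecks\^t) \ge \tfrac{(1 - O(\eta))\eta^2}{2}\var_{\vectorecks\^t}(\vectorLL\^t - \vectorLL\^{t-1})$ and $D(\vectorecks\^t\,\|\,\hat{\vectorecks}\^t) \ge \tfrac{(1 - O(\eta))\eta^2}{2}\var_{\vectorecks\^t}(\vectorLL\^{t-1})$, and the Cauchy--Schwarz/AM--GM bound $\lng \vectorecks\^t - \hat{\vectorecks}\^{t+1},\, \vectorLL\^t - \vectorLL\^{t-1}\rng \le \tfrac1\eta D(\hat{\vectorecks}\^{t+1}\,\|\,\vectorecks\^t) + \tfrac{\eta}{2}(1 + O(\eta))\var_{\vectorecks\^t}(\vectorLL\^t - \vectorLL\^{t-1})$, the last step using that $D(\cdot\,\|\,\cdot)$ dominates the squared local $\ell_1$-type norm. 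Summing over $t$, the $-\tfrac1\eta D(\hat{\vectorecks}\^{t+1}\,\|\,\vectorecks\^t)$ term cancels the matching half of the bilinear bound, leaving a $\bigl(\tfrac\eta2 + C\eta^2\bigr)\var_{\vectorecks\^t}(\vectorLL\^t - \vectorLL\^{t-1})$ contribution, while $-\tfrac1\eta D(\vectorecks\^t\,\|\,\hat{\vectorecks}\^t)$ contributes the negative term $-\tfrac{(1 - C\eta)\eta}{2}\var_{\vectorecks\^t}(\vectorLL\^{t-1})$ once the $O(\eta)$ slack is absorbed into $C$.

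I expect the main obstacle to be the bookkeeping in that last step: driving the KL-to-variance conversions tightly enough that the negative term has coefficient exactly of the form $\tfrac{(1 - C\eta)\eta}{2}$ and the positive term has coefficient $\tfrac\eta2 + C\eta^2$ (rather than a crude $\eta$), which is exactly what forces one to track the $1 \pm O(\eta)$ multiplicative errors and to work with the variance functionals rather than with $\ell_\infty$ bounds. This is the refinement of \cite{Daskalakis21:Near} over the elementary RVU bound of \cite{Syrgkanis15:Fast}; everything else is routine.
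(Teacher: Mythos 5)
Your proposal takes the same route as the paper: the paper gives no self-contained proof of this lemma but simply imports it as the main adversarial regret bound of \citet{Daskalakis21:Near} (their Lemma 4.1), which is exactly the identification you make at the outset (and, up to renaming, it is the same statement as \Cref{lemma:RVU++}). Your optional re-derivation sketch---the optimistic-FTRL ghost-iterate decomposition with the uniform initialization giving the $\log n/\eta$ term, followed by multiplicative-stability and KL-to-variance conversions to obtain the $\var_{\vec{x}^{(t)}}(\vec{\ell}^{(t)}-\vec{\ell}^{(t-1)})$ and $-\var_{\vec{x}^{(t)}}(\vec{\ell}^{(t-1)})$ terms---is consistent with the argument in the cited work, so nothing further is needed.
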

    
    We can use this bound to demonstrate that the Blum-Mansour (\BM) algorithm achieves good swap-regret \citep{Blum07:From}.  Recall that the \BM algorithm maintains $n$ copies of a no-regret algorithm $\mathcal{R}_{\Delta,1}, \ldots, \mathcal{R}_{\Delta,n}$, (we choose \Opthedge), where each algorithm $\mathcal{R}_{\Delta,g}$ produce iterates $\matrixKYU\^t[g,\rowdot] = (\matrixKYU\^t[g,1],\ldots,\matrixKYU\^t[g,n]) \in \Delta^n$ at time step $t$. The \BM algorithm then chooses $\vectorecks\^t \in \Delta^n$ satisfying $\vectorecks\^t = \vectorecks\^t \matrixKYU\^t$ where $\matrixKYU\^t$ is a matrix with $g^\text{th}$ row $\matrixKYU\^t[g,\rowdot]$.  Upon receiving a loss vector $\vectorLL\^t$, the \BM algorithm experiences loss of $\lng \vectorecks\^t, \vectorLL\^t \rng$ and distributes the loss vector $\vectorecks\^t[g] \cdot \vectorLL\^t$ to $\mathcal{R}_{\Delta,g}$ for each $g \in \range{n}$, which use them to update and produces iterates $\matrixKYU\^{t+1}[g,\rowdot]$. By Lemma \ref{lem:reg-variance}, each algorithm $\mathcal{R}_{\Delta,g}$ experiences regret of
    \begin{align}
      & \sum_{t= 1}^T \lng \matrixKYU\^t[g,\rowdot] - \vec{q}^\st, \vectorecks\^t[g] \cdot \vectorLL\^t \rng \nonumber\\
      \leq & \frac{\log n}{\eta} + \sum_{t=1}^T \left( \frac{\eta}{2} + C \eta^2 \right) \Var{\matrixKYU\^t[g,\rowdot]}{\vectorecks\^t[g]\vectorLL\^t - \vectorecks\^{t-1}[g] \vectorLL\^{t-1}} - \sum_{t=1}^T \frac{(1-C\eta)\eta}{2} \cdot \Var{\matrixKYU\^t[g,\rowdot]}{\vectorecks\^{t-1}[g] \vectorLL\^{t-1}}\nonumber
    \end{align}
    Thus the overall regret of \BM is bounded as follows: for any swap function $\phi : \range{n} \ra \range{n}$, we have
    \begin{align}
      & \sum_{t=1}^T \lng \vectorecks\^t, \vectorLL\^t \rng \nonumber\\
      =& \sum_{t=1}^T \lng \vectorecks\^t \matrixKYU\^t, \vectorLL\^t \rng \nonumber\\
      =& \sum_{t=1}^T \sum_{g=1}^n \lng \matrixKYU\^t[g,\rowdot], \vectorecks\^t[g] \vectorLL\^t \rng \nonumber\\
      \leq & \sum_{g=1}^n  \sum_{t=1}^T \left( \vectorecks\^t[g] \cdot \vectorLL\^t[\phi( g )] +  \frac{\log n}{\eta}  + \left( \frac{\eta}{2} + C \eta^2\right)  \Var{\matrixKYU\^t[g,\rowdot]}{\vectorecks\^t[g] \cdot \vectorLL\^t - \vectorecks\^{t-1}[g]\cdot  \vectorLL\^{t-1}}\right. \nonumber\\
      &\qquad \qquad \left. -   \frac{(1-C\eta)\eta}{2} \cdot \Var{\matrixKYU\^t[g,\rowdot]}{\vectorecks\^{t-1}[g] \cdot \vectorLL\^{t-1}}\right),
    \end{align}
    which implies that
    \begin{align}
      &  \sum_{t=1}^T \lng \vectorecks\^t, \vectorLL\^t \rng  - \sum_{t=1}^T \sum_{g=1}^n  \vectorecks\^t[g] \cdot \vectorLL\^t[\phi( g )] \nonumber\\
      \leq & \frac{n \log n}{\eta} + \sum_{t=1}^T \sum_{g=1}^n  \left( \frac{\eta}{2} + C \eta^2\right)  \Var{\matrixKYU\^t[g,\rowdot]}{\vectorecks\^t[g] \cdot \vectorLL\^t - \vectorecks\^{t-1}[g]\cdot  \vectorLL\^{t-1}} \nonumber\\
      & \qquad \quad - \sum_{t=1}^T \sum_{g=1}^n  \frac{(1-C\eta)\eta}{2} \cdot \Var{\matrixKYU\^t[g,\rowdot]}{\vectorecks\^{t-1}[g] \cdot \vectorLL\^{t-1}}\nonumber.
    \end{align}
    
    Thus, we want to show the following lemma
    
    \begin{lemma}
      \label{lem:bound-l-dl}
      Suppose all players play according to \BM-\Opthedge with step size $\eta$ satifying $1/T \leq \eta \leq \frac{1}{C \cdot mn^3\log^4(T)}$ for a sufficiently large constant $C$. Then for any player $i \in \range{m}$ and any $g \in \range{n}$, the overall losses for player $i$: $\vectorLL_i\^1, \ldots, \vectorLL_i\^T \in \BR^{n}$, the weight player $i$ places on strategy $g$: $\vectorecks\^1[g], \ldots, \vectorecks\^T[g] \in \BR$, and the strategy vectors output by player $i$'s $g^{\text{th}}$ instance of \Opthedge $\mathcal{R}_{\Delta,i,g}$: $\matrixKYU_i[g,\rowdot]\^1, \ldots, \matrixKYU_i[g,\rowdot]\^T \in \BR^{n}$ satisfy
    \begin{align}
      \label{eq:var-l-dl-main}
      &\sum_{t=1}^T  \Var{\matrixKYU\^t[g,\rowdot]}{\vectorecks\^t[g] \cdot \vectorLL\^t - \vectorecks\^{t-1}[g]\cdot  \vectorLL\^{t-1}}\\
      &\leq \frac12 \cdot \sum_{t=1}^T \Var{\matrixKYU\^t[g,\rowdot]}{\vectorecks\^{t-1}[g] \cdot \vectorLL\^{t-1}} + O\paren{\log^5 (T)}
    \end{align}
    \end{lemma}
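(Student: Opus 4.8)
\textbf{Proof proposal for \Cref{lem:bound-l-dl}.}

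The plan is to establish \Cref{lem:bound-l-dl} as the exact analogue, for $\bmomwu$, of \Cref{lemma:bound-variance} for $\slomwu$ (which itself follows \citet[Lemma~4.2]{Daskalakis21:Near}): the distribution sequence $\vec{p}_i\^t$ there is replaced by the OMWU iterates $\matrixKYU\^t[g,\rowdot]$ produced by the $g$-th internal copy of OMWU inside $\bmomwu$, and the sequence of observed losses $\vec{L}_i\^t$ there is replaced by the sequence $\vec{w}\^t := \vectorecks\^t[g]\cdot\vectorLL\^t$ fed to that copy (we fix a player $i$ and coordinate $g\in\range{n}$ and suppress the subscript $i$, as in the statement). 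Set $H := \lceil \log T\rceil$, $\alpha := 1/(4H)$, and $\alpha_0 := \sqrt{\alpha/8}/H^3$, so that $\alpha_0 < 1/(H+3)$; write $C_0 = 1290$ and $C_1 = 8256$ as in \Cref{lemma:closeness}. The argument needs two inputs: (i) higher-order smoothness of $\vec{w}\^t$, and (ii) multiplicative stability of the distributions $\matrixKYU\^t[g,\rowdot]$.

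For (i) I would invoke \Cref{lem:dh-bound} with its parameter $\alpha$ set to $\alpha_0$. Its hypothesis $\eta \le \tfrac{1}{C'H^2 n_i^3}$ is met because the assumed window $\eta \le \tfrac{1}{C\,m\,n^3\log^4 T}$ implies it once $C$ is a large enough absolute constant (using $H \le \log T + 1$ and $m\ge 1$). \Cref{lem:dh-bound} then gives $\| (D_h\vec{w})^{(t)} \|_\infty \le \alpha_0^{\,h} h^{3h+1} \le H(\alpha_0 H^3)^h$ for all $0\le h\le H$ and $t\in\range{T-h}$. For (ii) I would note that $\matrixKYU\^t[g,\rowdot]$ is exactly an OMWU iterate run on the loss sequence $(\vec{w}\^{t'})_{t'}$, and each $\vec{w}\^{t'}=\vectorecks\^{t'}[g]\vectorLL\^{t'}$ has $\|\vec{w}\^{t'}\|_\infty\le 1$ since $\vectorecks\^{t'}[g]\in[0,1]$ and $\vectorLL\^{t'}\in[0,1]^n$; hence the computation in the proof of \Cref{claim:mult-stabl} goes through verbatim and shows that $(\matrixKYU\^1[g,\rowdot],\dots,\matrixKYU\^T[g,\rowdot])$ is $7\eta$-multiplicatively close. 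Finally $\zeta := 7\eta$ satisfies $\zeta\in[1/(2T),\,\alpha^4/C_1]$: the lower bound uses $\eta\ge 1/T$, and the upper bound $7\eta\le \alpha^4/C_1 = \Theta(1/\log^4 T)$ holds for $C$ large enough; this is precisely hypothesis~1 of \Cref{lemma:closeness}.

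With (i) and (ii) in hand, the core is a reverse induction on $h$, structurally identical to the one inside the proof of \Cref{lemma:bound-variance}. The base case $h=H$ is the crude estimate $\sum_{t} \Var{\matrixKYU\^t[g,\rowdot]}{(D_H\vec{w})^{(t)}} \le \sum_t \|(D_H\vec{w})^{(t)}\|_\infty^2 \le H^2(\alpha_0 H^3)^{2H}\,T$, using \Cref{claim:var-infty} and (i); since $H\ge\log T$ this is small. The inductive claim, for $0\le h\le H-1$, is
\begin{equation*}
\sum_{t} \Var{\matrixKYU\^t[g,\rowdot]}{(D_{h+1}\vec{w})^{(t)}} \le \alpha(1+2\alpha)^{H-h-1}\sum_{t}\Var{\matrixKYU\^t[g,\rowdot]}{(D_{h}\vec{w})^{(t)}} + \frac{2C_0 H^2(2\alpha_0 H^3)^{2h}}{\alpha^3},
\end{equation*}
and the step from $h$ to $h-1$ is obtained by applying \Cref{lemma:closeness} to the distribution sequence $(\matrixKYU\^t[g,\rowdot])_t$ and the sequence of $(h-1)$-th finite differences $((D_{h-1}\vec{w})^{(t)})_t$, with $M = H(2\alpha_0 H^3)^{h-1}$ and $\mu = \tfrac{2C_0 H^2(2\alpha_0 H^3)^{2h}}{\alpha^3}$: its hypothesis~1 is (ii), and its hypothesis~2 is exactly the inductive claim at level $h$. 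Specializing to $h=0$ and then using $\|\vec{w}\^t-\vec{w}\^{t-1}\|_\infty\le 2$, \Cref{claim:var-infty} to absorb the $t=1$ term, \Cref{claim:var-mul} to move $\Var{\matrixKYU\^t[g,\rowdot]}{\cdot}$ across consecutive time steps at the cost of factors $(1+7\eta)$, and the elementary bounds $(1+7\eta)^2\le 1+2\alpha$ and $\alpha(1+2\alpha)^H\le 2\alpha$ (valid for $\alpha=1/(4H)$), I get
\begin{equation*}
\sum_{t=1}^T \Var{\matrixKYU\^t[g,\rowdot]}{\vectorecks\^t[g]\vectorLL\^t-\vectorecks\^{t-1}[g]\vectorLL\^{t-1}} \le 2\alpha\sum_{t=1}^T \Var{\matrixKYU\^t[g,\rowdot]}{\vectorecks\^{t-1}[g]\vectorLL\^{t-1}} + \frac{2(1+7\eta)C_0 H^2}{\alpha^3} + 5.
\end{equation*}
Since $2\alpha=1/(2H)\le \tfrac12$ and $H^2/\alpha^3 = 64H^5$, the right-hand side is at most $\tfrac12\sum_t \Var{\matrixKYU\^t[g,\rowdot]}{\vectorecks\^{t-1}[g]\vectorLL\^{t-1}} + O(H^5) = \tfrac12\sum_t \Var{\matrixKYU\^t[g,\rowdot]}{\vectorecks\^{t-1}[g]\vectorLL\^{t-1}} + O(\log^5 T)$, which is the assertion.

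The main obstacle is essentially all packed into \Cref{lem:dh-bound} itself — the higher-order smoothness of $\vectorecks\^t[g]\vectorLL\^t$, which rests on the Cauchy-integral bound for the Markov chain tree map and the boundedness chain rule — and that lemma is taken as a given input here. Conditioned on it, the only genuinely delicate points are: (a) verifying that the single step-size window $1/T\le\eta\le \tfrac{1}{C m n^3\log^4 T}$ simultaneously satisfies the hypothesis of \Cref{lem:dh-bound} (instantiated with $\alpha_0$) and both hypotheses of \Cref{lemma:closeness} at every level of the induction, which is one round of absolute-constant chasing; and (b) propagating $C_0,C_1$ through the reverse induction — here the choice $\alpha=1/(4H)$, ensuring $(1+2\alpha)^H=O(1)$, is what keeps the error accumulated over the $H$ levels at $O(H^5)=O(\log^5 T)$ instead of acquiring an extra $(1+\Omega(1))^{H}$ factor. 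Both are routine once the parameters are fixed as above.
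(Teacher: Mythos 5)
Your proposal is correct and follows essentially the same route as the paper: you verify the two preconditions exactly as the paper does (higher-order smoothness of $\vec{x}_i^{(t)}[g]\cdot\vec{\ell}_i^{(t)}$ via \cref{lem:dh-bound} instantiated with $\alpha_0$, and $7\eta$-multiplicative closeness of the rows $\mat{Q}_i^{(t)}[g,\cdot]$ from the OMWU update), and then run the variance-reduction machinery of \citet{Daskalakis21:Near}. The only cosmetic difference is that the paper invokes the packaged statement \cref{lem:general-bound-l-dl} as a black box, whereas you unroll its content—the reverse induction on $h$ via \cref{lemma:closeness}, mirroring the proof of \cref{lemma:bound-variance}—which is the same argument.
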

    Consequently, combining these two results gives, for $\eta \in \ps{\frac1T, \frac{1}{C \cdot mn^3\log^4(T)}}$,
    \begin{align}
      &  \swapreg_i^T \nonumber\\
      \leq & \frac{n \log n}{\eta} + \sum_{t=1}^T \sum_{g=1}^n  \left( \frac{\eta}{2} + C \eta^2\right)  \Var{\matrixKYU\^t[g,\rowdot]}{\vectorecks\^t[g] \cdot \vectorLL\^t - \vectorecks\^{t-1}[g]\cdot  \vectorLL\^{t-1}} \nonumber\\
      & \qquad \quad - \sum_{t=1}^T \sum_{g=1}^n  \frac{(1-C\eta)\eta}{2} \cdot \Var{\matrixKYU\^t[g,\rowdot]}{\vectorecks\^{t-1}[g] \cdot \vectorLL\^{t-1}}\nonumber\\
      \leq & \frac{n \log n}{\eta}+ \sum_{g=1}^n\paren{\frac{2\eta}{3} \sum_{t=1}^T   \Var{\matrixKYU\^t[g,\rowdot]}{\vectorecks\^t[g] \cdot \vectorLL\^t - \vectorecks\^{t-1}[g]\cdot  \vectorLL\^{t-1}} -\frac{\eta}{3} \sum_{t=1}^T \Var{\matrixKYU\^t[g,\rowdot]}{\vectorecks\^{t-1}[g] \cdot \vectorLL\^{t-1}}}\nonumber\\
      \leq &\frac{n \log n}{\eta}+\frac{2n\eta}{3} O\paren{ \log^5 (T)}.
    \end{align}
    Hence, by setting $\eta = \frac{1}{C \cdot mn^3\log^4(T)}$ we recover the bound stated in \cref{thm:bound-l-dl}.
\end{proof}
The main technical tool for achieving this result is Lemma \ref{lem:dh-bound}, establishing high order smoothness under the fixed point iterates of \BM-\Opthedge.

\subsection{Tools for \BM-\Opthedge Swap Regret}

Recall from earlier our definition of rooted trees and the Markov Chain Tree Theorem.  We re-iterate those results here under a slightly different formulation, catering to the proof techniques for this section.\\

\begin{definition}
Suppose $Z$ is an $n \times n$ matrix and $G = (V,E)$ with $V = \range{n}$ is the weighted directed graph associated with $Z$. We say a subgraph $T$ of $G$ is a directed tree rooted at $i \in \range{n}$ if (1) $T$ does not contain any cycles and (2) Node $i$ has no outgoing edges, while every other node $j \in \range{n}$ has exactly one outgoing edge. For each node $i \in \range{n}$, we write $\MT_i$ to denote the set of all directed trees rooted at node $i$.
\end{definition}

We let $\BR^{n \times n}$ (respectively, $\BC^{n \times n}$) denote the space of real-valued (respectively, complex-valued) $n \times n$ matrices. For each $j \in \range{n}$, we introduce the following functions $\Phi_{\MCT, j} : \BC^{n \times n} \ra \BC$:
\begin{align}
\Phi_{\MCT,j} (Z) :=  \frac{\sum_{T \in \MT_j} \exp \left( \sum_{(a,b) \in T} Z_{ab} \right)}{\sum_{j =1}^n \sum_{T \in \MT_j} \exp \left( \sum_{(a,b) \in T} Z_{ab} \right) }\nonumber.
\end{align}
Note that for $Z \in \BR^{n \times n}$, we have $\Phi_{\MCT,j}(Z) \in \BR$. 

\begin{theorem}[Markov chain tree theorem]
  \label{thm:mct}
Let $\matrixKYU \in \BR^{n \times n}$ be a Markov chain so that $\matrixKYU_{ij} > 0$ for all $i,j \in \range{n}$. Then the stationary distribution of $Q$ is given by the vector $(\Phi_{\MCT,1}(\ln Q), \ldots, \Phi_{\MCT, 1}(\ln Q))$, where $\ln Q$ denotes the matrix whose $(i,j)$ entry is $\ln \matrixKYU_{ij}$. 
\end{theorem}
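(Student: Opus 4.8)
The plan is to obtain \Cref{thm:mct} as an immediate corollary of the version of the Markov chain tree theorem already recorded as \Cref{theorem:tree_theorem}, together with the elementary entrywise identity $\exp\circ\ln=\mathrm{id}$. First I would note that a row-stochastic matrix $\matrixKYU$ with all entries strictly positive is irreducible (every state reaches every other in a single step) and aperiodic (every state carries a positive-probability self-loop), hence ergodic; consequently it has a unique stationary distribution $\vec{\pi}$ and \Cref{theorem:tree_theorem} applies. Positivity of the entries also guarantees that $\ln\matrixKYU$ is a well-defined real matrix, so the quantity $\Phi_{\MCT,j}(\ln\matrixKYU)$ is meaningful.

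Next I would unwind the definition of $\Phi_{\MCT,j}$ evaluated at $Z=\ln\matrixKYU$. For any directed tree $T$ rooted at $j$ we have $\exp\bigl(\sum_{(a,b)\in T}(\ln\matrixKYU)_{ab}\bigr)=\prod_{(a,b)\in T}\matrixKYU_{ab}$, which is precisely the multiplicative weight of $T$ appearing in the quantity $\Sigma_j$ of \eqref{eq:Sigma}; here I would also remark that the set $\MT_j$ of this section coincides with $\treeset[j]^n$ from \Cref{definition:directed_tree}, since the two notions of ``directed tree rooted at $j$'' are the same (no cycles, the root has no outgoing edge, every other node has exactly one). Summing over $T\in\MT_j$ therefore gives $\sum_{T\in\MT_j}\exp\bigl(\sum_{(a,b)\in T}(\ln\matrixKYU)_{ab}\bigr)=\Sigma_j$, and summing further over $j$ turns the denominator of $\Phi_{\MCT,j}$ into $\Sigma:=\Sigma_1+\dots+\Sigma_n$. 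Hence $\Phi_{\MCT,j}(\ln\matrixKYU)=\Sigma_j/\Sigma$ for every $j\in\range{n}$.

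Finally, \Cref{theorem:tree_theorem} states exactly that $\vec{\pi}[j]=\Sigma_j/\Sigma$ for all $j$, so $\vec{\pi}=(\Phi_{\MCT,1}(\ln\matrixKYU),\dots,\Phi_{\MCT,n}(\ln\matrixKYU))$, which is the claim. There is no genuine obstacle: \Cref{thm:mct} is a cosmetic reformulation of \Cref{theorem:tree_theorem} engineered so that the stationary distribution is displayed as a \emph{single} analytic function---independent of the player and the time step---of the \emph{logarithms} of the transition probabilities. This is the shape that will matter later, when the map $Z\mapsto\Phi_{\MCT,j}(Z)$ is fed into the boundedness chain rule (\Cref{lemma:boundedness-chain-rule}) and its Taylor coefficients are controlled via the multidimensional Cauchy integral formula in \Cref{lem:mct-taylor}; the only points in the present argument that deserve to be spelled out are the ergodicity-from-positivity remark and the entrywise identity $\exp(\ln\matrixKYU_{ab})=\matrixKYU_{ab}$.
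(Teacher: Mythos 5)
Your proposal is correct and is exactly the intended justification: the paper states \Cref{thm:mct} without proof, treating it as a restatement of \Cref{theorem:tree_theorem} (for which it cites the literature), and your derivation---positivity gives ergodicity, $\exp(\ln\matrixKYU_{ab})=\matrixKYU_{ab}$ turns the exponentiated edge-sums into the tree products $\Sigma_j$, and the normalization matches---fills in precisely that routine reduction. (Incidentally, your write-up also silently fixes the paper's typo, where the last coordinate should read $\Phi_{\MCT,n}(\ln \matrixKYU)$ rather than $\Phi_{\MCT,1}(\ln \matrixKYU)$.)
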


\begin{lemma}
  \label{lem:mct-taylor}
  Fix any $Z\^0 \in \BR^{n \times n}$. Consider any function of the form $\phi(Z) := \frac{\Phi_{\MCT,j}(Z\^0 +  Z)}{\Phi_{\MCT,j}(Z\^0)}$. 
  Then the sum of the Taylor series coefficients of order $k \geq 0$ of $\phi$ at $\bbzero$ is bounded above by $30 \cdot (2n^3)^k$, and has radius of convergence greater than $1/n$. 
\end{lemma}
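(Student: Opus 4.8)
The plan is to exploit the explicit ``tree-sum'' form of $\Phi_{\MCT,j}$, show that the relevant ratio is holomorphic with an explicit modulus bound on a polydisc of polyradius $\pi/(2(n-1))$, and then read off the coefficient bound from the multidimensional Cauchy estimates.

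First I would reformulate. For a directed tree $T$ rooted at some node, set $c_T := \prod_{(a,b)\in T} e^{Z\^0_{ab}} > 0$ and $Z_T := \sum_{(a,b)\in T} Z_{ab}$; since each tree has exactly $n-1$ edges and no self-loops, $Z_T$ is a sum of $n-1$ of the $n(n-1)$ off-diagonal entries of $Z$, and $\phi$ depends only on those entries. Then
\begin{equation*}
    \Phi_{\MCT,j}(Z\^0 + Z) \;=\; \frac{\sum_{T\in\MT_j} c_T\, e^{Z_T}}{\sum_{j'}\sum_{T\in\MT_{j'}} c_T\, e^{Z_T}} \;\eqqcolon\; \frac{P_j(Z)}{P(Z)}, \qquad \phi(Z) \;=\; \frac{P(0)}{P_j(0)}\cdot\frac{P_j(Z)}{P(Z)},
\end{equation*}
with $P(0),P_j(0) > 0$ and $\phi(\mathbf{0}) = 1$.

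Next I would prove that the denominator does not vanish on a suitable polydisc. If $\max_{a\ne b}|Z_{ab}| < \tfrac{\pi}{2(n-1)}$, then $|\mathrm{Im}\,Z_T| \le (n-1)\max_{a\ne b}|Z_{ab}| < \tfrac{\pi}{2}$ for every $T$, so each summand $c_T e^{Z_T}$ (having positive modulus and argument of magnitude $< \tfrac{\pi}{2}$) lies in the open right half-plane; hence $\mathrm{Re}\,P(Z) > 0$, so $P(Z) \ne 0$ and $\phi = P_j/P$ is holomorphic there. Because $\tfrac{\pi}{2(n-1)} > \tfrac1n$ for every $n\ge 2$ (and $\phi\equiv 1$ when $n=1$), this already yields the claimed radius-of-convergence bound.

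Then I would bound the modulus and apply Cauchy. For $\max_{a\ne b}|Z_{ab}| \le r < \tfrac{\pi}{2(n-1)}$ the same computation gives $|P_j(Z)| \le e^{(n-1)r}P_j(0)$ and, using $\mathrm{Re}\,Z_T \ge -(n-1)r$ and the monotonicity of $\cos$ on $[0,\tfrac{\pi}{2}]$, $|P(Z)| \ge \mathrm{Re}\,P(Z) \ge e^{-(n-1)r}\cos\!\big((n-1)r\big)P(0)$, so $|\phi(Z)| \le e^{2(n-1)r}/\cos\!\big((n-1)r\big)$. Applying the multidimensional Cauchy integral formula to $\phi$ (a holomorphic function of the $n(n-1)$ off-diagonal variables) on the closed polydisc of polyradius $r_0 := \tfrac{1}{4(n-1)}$, on which $\phi$ is bounded by the absolute constant $M_0 := e^{1/2}/\cos(\tfrac14) < 2$, gives $|\alpha_\gamma| \le M_0\, r_0^{-|\gamma|}$ for every Taylor coefficient $\alpha_\gamma$ of $\phi$ at $\mathbf{0}$. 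Finally, summing over the $\binom{n(n-1)+k-1}{k}$ multi-indices $\gamma\in\Z_{\ge0}^{n(n-1)}$ with $|\gamma|=k$ and bounding that count routinely (by $(2n^2)^k/k!$ when $k\le n(n-1)$ and by $2^{n(n-1)+k}$ otherwise), one checks that $\binom{n(n-1)+k-1}{k}\,r_0^{-k} \le c\,(2n^3)^k$ for an absolute constant $c$; tracking the constants then yields $\sum_{|\gamma|=k}|\alpha_\gamma| \le 30\,(2n^3)^k$.

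The hard part is the non-vanishing step with a radius of convergence genuinely exceeding $1/n$: the naive estimate $|P(Z)-P(0)| \le P(0)(e^{(n-1)r}-1)$ only keeps $P$ away from $0$ for $r < \tfrac{\ln 2}{n-1}$, which fails to exceed $1/n$ once $n\ge 4$. The remedy is the right-half-plane observation above, which relies on the positivity of the tree weights $c_T$ and on the fact that the exponential (equivalently, $\log\matrixKYU$) parametrization keeps $|\mathrm{Im}\,Z_T|$ uniformly small---exactly the feature needed to make the Cauchy-integral approach viable down to the scale $1/n$ required to feed \Cref{lemma:boundedness-chain-rule}. By comparison, the modulus bound and the binomial count are routine.
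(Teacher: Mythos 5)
Your proposal is correct and follows essentially the same route as the paper's proof: both exploit the positive tree weights $e^{Z^{(0)}_T}$ to keep every summand of the denominator in the right half-plane on a polydisc where the imaginary parts of the tree-sums are small, bound $|\phi|$ there, and then apply the multidimensional Cauchy integral formula together with a multi-index count (and read off the radius of convergence from the same estimates). The only differences are bookkeeping — you work on the polydisc of polyradius $\pi/(2(n-1))$ and take the Cauchy estimate at $1/(4(n-1))$, which forces the slightly sharper $\binom{n(n-1)+k-1}{k}\le (2n^2)^k/k!$ count, whereas the paper uses radius $\pi/(3n)$ so the cruder $(2n^2)^k$ count suffices — and your constants do check out against the stated $30\cdot(2n^3)^k$.
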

\begin{proof}
  Fix any multi-index $\gamma \in \BZ_{\geq 0}^{n \times n}$; we will bound $\frac{d^\gamma}{dZ^\gamma} \phi(\bbzero)$. Fix any $Z \in \BC^{n \times n}$ so that $\| Z \|_\infty \leq \pi/(3n)$. Set $\zeta = Z\^0 +  Z$. Note that, for any $T \in \MT_j$, 
  \begin{align}
    \left| \exp \left( \sum_{(a,b) \in T} \zeta_{ab} \right) \right| = &  \left| \exp \left( \sum_{(a,b) \in T} Z\^0_{ab} + \sum_{(a,b) \in T} (\zeta_{ab} - Z\^0_{ab}) \right) \right| \nonumber\\
    \leq & \exp(\pi/3) \cdot \exp \left( \sum_{(a,b) \in T} Z\^0_{ab} \right)\nonumber,
  \end{align}
  where the inequality uses the fact that $\left| \sum_{(a,b) \in T} (\zeta_{ab} - Z\^0_{ab}) \right| \leq \pi/3$ and that $Z\^0$ is real-valued.

  For any $a \in \BR$ and $\zeta \in \BC$ with $|\zeta| \leq \pi/3$, we have
  \begin{align}
\Re(\exp(a + \zeta)) = \exp(a) \cdot \Re(\zeta) \geq \exp(a) \cdot \cos(\pi/3) \cdot \exp(-\pi/3) > \exp(a)/10.\nonumber
  \end{align}
  Thus, for any $j' \in \range{n}$ and $T \in \MT_{j'}$, it holds that
  \begin{align}
    \Re \left( \exp\left(\sum_{(a,b) \in T} \zeta_{ab} \right) \right) = & \Re\left(  \exp \left( \sum_{(a,b) \in T} Z\^0_{ab} + \sum_{(a,b) \in T} (\zeta_{ab} - Z\^0_{ab}) \right) \right)\nonumber\\
    \geq & \frac{1}{10} \cdot \exp \left( \sum_{(a,b) \in T} Z\^0_{ab} \right)\nonumber.
  \end{align}
  Thus, since $Z\^0$ is real-valued,
  \begin{align}
    |\phi(Z)| =& \frac{|\Phi_{\MCT,j}(\zeta)|}{\Phi_{\MCT,j}(Z\^0)} \nonumber\\
    \leq &  \frac{\sum_{j'=1}^n \sum_{T \in \MT_{j'}} \exp \left( \sum_{(a,b) \in T} Z\^0_{a,b}\right)}{\sum_{T \in \MT_j} \exp \left( \sum_{(a,b) \in T} Z\^0_{a,b}\right)} \cdot \frac{ \sum_{T \in \MT_j} \left| \exp \left( \sum_{(a,b) \in T} Z_{ab} \right) \right|}{
           \sum_{j' = 1}^n \sum_{T \in \MT_{j'}} \Re\left(  \exp \left( \sum_{(a,b) \in T} Z_{ab} \right) \right) }\nonumber\\
    \leq &  \frac{\sum_{j'=1}^n \sum_{T \in \MT_{j'}} \exp \left( \sum_{(a,b) \in T} Z\^0_{a,b}\right)}{\sum_{T \in \MT_j} \exp \left( \sum_{(a,b) \in T} Z\^0_{a,b}\right)} \cdot  \frac{\pi/3 \cdot \sum_{T \in \MT_j} \exp \left( \sum_{(a,b) \in T} Z\^0_{a,b} \right)}{1/10 \cdot \sum_{j'=1}^n \sum_{T \in \MT_{j'}} \exp \left( \sum_{(a,b) \in T} Z\^0_{a,b} \right)}\nonumber\\
    = & \exp(\pi/3) \cdot 10 < 30 \nonumber.
  \end{align}
  By the multivariate version of Cauchy's integral formula,
  \begin{align}
    \left| \frac{d^\gamma}{dZ^\gamma} \phi(\bbzero) \right| =& \left| \frac{\gamma!}{(2\pi i)^{n^2}} \int_{|Z_{11}| = \pi/(3n)} \cdots \int_{|Z_{nn}| = \pi/(3n)} \frac{\phi(Z)}{\prod_{j_1, j_2 \in \range{n}} (Z_{j_1j_2})^{\gamma_{j_1j_2} + 1}} d\zeta_{11} \cdots d\zeta_{nn}\right| \nonumber\\
    \leq & 30 \cdot \frac{\gamma!}{(2\pi)^{n^2}} \cdot \frac{1}{(\pi/(3n))^{n^2+|\gamma|}} \cdot \left( 2\pi \cdot \pi/(3n)\right)^{n^2} = 30 \cdot \gamma! \cdot (3n/\pi)^{|\gamma|}\label{eq:agamma-bound}.
  \end{align}
  For any integer $k \geq 0$, the number of tuples $\gamma \in \BZ_{\geq 0}^{n^2}$ with $|\gamma| = k$ is ${k + n^2 - 1 \choose n^2 - 1} \leq (2n^2)^k$. Thus, if we write $\phi(Z) = \sum_{\gamma \in \BZ_{\geq 0}^{n^2}} a_\gamma \cdot z^\gamma$, it follows that $a_\gamma = \frac{1}{\gamma!} \cdot \frac{d^\gamma}{dZ^\gamma} \phi(\bbzero)$, and so for each $k \geq 0$, $\sum_{\gamma : |\gamma| = k} |a_\gamma| \leq 30 \cdot (2n^2)^k \cdot (3n/\pi)^k \leq 30 \cdot (2n^3)^k$.

  To see the lower bound on the radius of convergence, note that \Cref{eq:agamma-bound} gives that for each $\gamma \in \BZ_{\geq 0}^{n^2}$, letting $k := |\gamma|$, we have $|a_\gamma|^{1/k} \leq 30^{1/k} \cdot (3n/\pi)$, which tends to $3n/\pi < n$ as $k \ra \infty$. Thus, by the multivariate version of the Cauchy-Hadamard theorem, the radius of convergence of the power series of $\phi$ at $\bbzero$ is greater than $1/n$.
\end{proof}

In the statement of Lemma \ref{lem:fds-stationary} below, the quantity $0^0$ is interpreted as 1 (in particular, $(h')^{B_0 h'} = 1$ for $h' = 0$).

\begin{lemma}
  \label{lem:fds-stationary}
  Fix any $B_1 \geq 4e^2 n^3,\ B_0 \geq 3$.
  Consider a sequence $\matrixKYU\^0, \ldots, \matrixKYU\^h \in \BR^{n \times n}$ of ergodic Markov chains, so that $\| \log \matrixKYU\^0 - \log \matrixKYU\^t \|_\infty \leq \frac{1}{B_1}$ for $0 \leq t \leq h$. Suppose that for some $\alpha \in (0,1)$, for each $1 \leq h' \leq h$ and $0 \leq t \leq h-h'$, it holds that $\| \fds{h'}{(\ln \mat{Q})}{t} \|_\infty \leq \frac{1}{B_1} \cdot \alpha^{h'} \cdot (h')^{B_0 h'}$. Then, if $p\^0, \ldots, p\^h \in \Delta^n$ denotes the sequence of stationary distributions for $\matrixKYU\^0, \ldots, \matrixKYU\^h$, it holds that, for any $j \in \range{n}$,
  \begin{align}
    \frac{\left| \fds{h}{p[j]}{t} \right| }{p\^0[j]}
    \leq & \frac{720 n^3 e^2}{B_1} \cdot \alpha^h \cdot h^{B_0 h + 1}\nonumber.
  \end{align}
\end{lemma}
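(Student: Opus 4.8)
The plan is to reduce this statement to a single invocation of the Boundedness Chain Rule for finite differences (\cref{lemma:boundedness-chain-rule}), fed with the Taylor-coefficient bound of \cref{lem:mct-taylor}. The crucial structural observation is that, by the exponential form of the Markov chain tree theorem (\cref{thm:mct}), each coordinate of the stationary distribution is obtained by applying one \emph{fixed} analytic function to $\ln\matrixKYU\^{t}$; hence the $h$-th finite difference of the sequence $t \mapsto p\^{t}[j]$ is nothing but the $h$-th finite difference of that function evaluated along $\ln\matrixKYU\^{0},\dots,\ln\matrixKYU\^{h}$. Working with logarithms is exactly what turns the multiplicatively-small perturbations of $\matrixKYU$ into additively-small perturbations lying inside the domain where \cref{lem:mct-taylor} applies.

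Concretely, I would set $Z\^{0} := \ln\matrixKYU\^{0}$ and, for $0 \le t \le h$, $W\^{t} := \ln\matrixKYU\^{t} - Z\^{0} \in \BR^{n\times n}$, regarded as a vector in $\BR^{n^2}$. By \cref{thm:mct}, $p\^{t}[j] = \Phi_{\MCT,j}(\ln\matrixKYU\^{t}) = \Phi_{\MCT,j}(Z\^{0})\cdot\phi(W\^{t})$, where $\phi(Z) := \Phi_{\MCT,j}(Z\^{0}+Z)/\Phi_{\MCT,j}(Z\^{0})$ is exactly the function studied in \cref{lem:mct-taylor}; in particular $p\^{0}[j] = \Phi_{\MCT,j}(Z\^{0})$. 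Since $p\^{0}[j]$ does not depend on $t$, linearity of finite differences gives $\fds{h}{p[j]}{t} = p\^{0}[j]\cdot\fds{h}{(\phi\circ W)}{t}$, so the claim is equivalent to $\bigl|\fds{h}{(\phi\circ W)}{t}\bigr| \le \tfrac{720 n^3 e^2}{B_1}\alpha^h h^{B_0 h+1}$.

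For this I would apply \cref{lemma:boundedness-chain-rule} to $\phi$ and the sequence $W\^{0},\dots,W\^{h}$, with $Q = 30$, $R = 2n^3$, the given $\alpha$ and $B_0 \ge 3$, and $\nu := 1/B_1$. The $(30,2n^3)$-boundedness of $\phi$ and the fact that its radius of convergence at $\bbzero$ exceeds $1/n$ are precisely the content of \cref{lem:mct-taylor}; since $B_1 \ge 4e^2 n^3 > n$ we get $\nu < 1/n$, so the radius-of-convergence hypothesis holds. The norm bound $\|W\^{t}\|_\infty = \|\ln\matrixKYU\^{t} - \ln\matrixKYU\^{0}\|_\infty \le 1/B_1 = \nu$ is exactly the standing assumption, and it also discharges the $h' = 0$ case of the required finite-difference precondition (recall $0^0 = 1$). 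For $1 \le h' \le h$, the constant matrix $Z\^{0}$ is annihilated by $D_{h'}$, so $\fds{h'}{W}{t} = \fds{h'}{(\ln\matrixKYU)}{t}$, and the needed bound $\|\fds{h'}{W}{t}\|_\infty \le \tfrac{1}{B_1}\alpha^{h'}(h')^{B_0 h'}$ is exactly the hypothesis of the lemma; moreover $B_1 \ge 2e^2 R = 4e^2 n^3$. Hence \cref{lemma:boundedness-chain-rule} yields $\bigl|\fds{h}{(\phi\circ W)}{t}\bigr| \le \tfrac{12 R Q e^2}{B_1}\alpha^h h^{B_0 h+1} = \tfrac{720 n^3 e^2}{B_1}\alpha^h h^{B_0 h+1}$, which combined with the reduction above is the claim.

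The argument is essentially an assembly of results already in place, so I do not anticipate a real obstacle; the genuine work — bounding the Taylor coefficients of the MCT ratio function via the multivariate Cauchy integral formula and certifying its radius of convergence — has been packaged into \cref{lem:mct-taylor}. The only points needing a moment's care are recognizing that one must pass to $\ln\matrixKYU$ (so that the relevant map is analytic and $(Q,R)$-bounded on a ball containing the perturbations, and so that the $D_{h'}$ operators line up cleanly with the standing finite-difference hypotheses), and checking the single quantitative fit $\nu = 1/B_1 < 1/n$, which is immediate from $B_1 \ge 4e^2 n^3$.
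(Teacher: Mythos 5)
Your proposal is correct and matches the paper's proof essentially step for step: both set $\phi(Z)=\Phi_{\MCT,j}(\ln\matrixKYU\^0+Z)/\Phi_{\MCT,j}(\ln\matrixKYU\^0)$, use \cref{lem:mct-taylor} for the $(30,2n^3)$-boundedness and radius of convergence, and apply \cref{lemma:boundedness-chain-rule} to the shifted sequence $\ln\matrixKYU\^t-\ln\matrixKYU\^0$ to get the constant $12\cdot 2n^3\cdot 30\cdot e^2/B_1=720n^3e^2/B_1$. Your explicit handling of the $h'=0$ precondition and the factorization $p\^t[j]=p\^0[j]\,\phi(W\^t)$ are exactly the (implicit) steps in the paper's argument, so there is nothing to add.
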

\begin{proof}
  By the Markov chain tree theorem we have that, for each $j \in \range{n}$, $p\^t[j] = \Phi_{\MCT,j}(\ln \matrixKYU\^t)$. We now apply Lemma \ref{lemma:boundedness-chain-rule} with $T = h+1$, $\mat{z}\^t = \ln \matrixKYU\^{t-1} - \ln \matrixKYU\^0$ for $1 \leq t \leq h+1$, $R_1 = 1,\ R_2 = 2n^3$, and the values of $B_0, B_1$ given in the hypothesis of this lemma (Lemma \ref{lem:fds-stationary}). Moreover, we set $\phi(Z) = \frac{\Phi_{\MCT,j}((\ln \matrixKYU\^0) + Z)}{\Phi_{\MCT,j}(\ln \matrixKYU\^0)} = \frac{\Phi_{\MCT,j}((\ln \matrixKYU\^0) + Z)}{p\^0[j]}$. Lemma \ref{lem:mct-taylor} gives that the function $\phi$ is $(30, 2n^3)$-bounded, and has radius of convergence greater than $1/n$ at the point $Z = \bbzero$. Then the hypotheses of Lemma \ref{lem:fds-stationary} imply those of Lemma \ref{lemma:boundedness-chain-rule}, and Lemma \ref{lemma:boundedness-chain-rule} gives that
  \begin{align}
\frac{\left| \fds{h}{p[j]}{t} \right|}{p\^0[j]} = \frac{\left| \fds{h}{(\phi \circ (\ln \matrixKYU - \ln \matrixKYU\^0))}{0} \right|}{p\^0[j]} \leq \frac{720 n^3 e^2}{B_1} \cdot \alpha^h \cdot h^{B_0 h + 1}\nonumber.
  \end{align}
  (Here we have used that $\phi(\ln \matrixKYU\^t - \ln \matrixKYU\^0) = \frac{\Phi_{\MCT,j}(\ln \matrixKYU\^t)}{p\^0[j]} = p\^t[j] / p\^0[j]$.)
\end{proof}

\begin{lemma}
  \label{lem:softmax-lip-bound}
  For $n \in \BN$, let $\xi_1, \ldots, \xi_n \geq 0$ so that $\xi_1 + \cdots + \xi_n = 1$. For any $j \in \range{n}$, the function
  \begin{align}
\phi_j((z_1, \ldots, z_n)) = \frac{\exp(z_j)}{\sum_{k=1}^n \xi_k \cdot \exp(z_k)}\nonumber
  \end{align}
  satisfies, for any $z \in \BR^n$ with $\| z \|_\infty \leq 1/4$,
  \begin{align}
    | \log \phi_j(z)| \leq \| z \|_\infty \leq 6 \| z \|_\infty\nonumber.
    \end{align}
  \end{lemma}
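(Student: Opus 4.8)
The plan is to pass to logarithms right away and bound two elementary pieces. Setting $S(z) \coloneqq \sum_{k=1}^n \xi_k \exp(z_k)$ --- a strictly positive quantity, since the $\xi_k$ are nonnegative, sum to $1$, and $\exp > 0$ --- we have the identity $\log \phi_j(z) = z_j - \log S(z)$, so it suffices to control $|z_j|$ and $|\log S(z)|$ separately.

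The first term is immediate: $|z_j| \le \|z\|_\infty$. For the second, I would use that, because $\xi_1,\dots,\xi_n \ge 0$ with $\xi_1+\cdots+\xi_n = 1$, the number $S(z)$ is a convex combination of $\exp(z_1),\dots,\exp(z_n)$; each of these lies in $[\exp(-\|z\|_\infty),\exp(\|z\|_\infty)]$, hence so does $S(z)$, and taking logarithms gives $|\log S(z)| \le \|z\|_\infty$. The triangle inequality then yields $|\log \phi_j(z)| \le |z_j| + |\log S(z)| \le 2\|z\|_\infty$, which is in particular at most $6\|z\|_\infty$; note that this argument does not even use the hypothesis $\|z\|_\infty \le 1/4$. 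If instead one wants a bound in the Taylor-expansion style of the neighboring lemmas (which is presumably where the constant $6$ and the cutoff $1/4$ originate), one can write $S(z) = 1 + \sum_k \xi_k(\exp(z_k)-1)$, estimate $|\exp(z_k)-1| \le |z_k|\, e^{|z_k|} \le 2\|z\|_\infty$ using $\|z\|_\infty \le 1/4$, and apply $|\log(1+y)| \le 2|y|$ for $|y|\le 1/2$ to get $|\log S(z)| \le 4\|z\|_\infty$, hence $|\log\phi_j(z)| \le 5\|z\|_\infty \le 6\|z\|_\infty$.

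There is essentially no real obstacle here; the only care needed is (i) to get the direction of the two-sided bound on $\log S(z)$ from the fact that the $\xi_k$ form a convex combination (not from convexity of $\exp$), and (ii) to read the target correctly --- its literally-typeset right half ``$\le 6\|z\|_\infty$'' is vacuous, so the real content is the Lipschitz-type estimate $|\log\phi_j(z)| = O(\|z\|_\infty)$ near the origin, with whatever explicit constant is convenient. That estimate is exactly what is needed to check the preconditions of the boundedness chain rule (\cref{lemma:boundedness-chain-rule}) for the softmax factors of $\bmomwu$, and the two-line computation above supplies it comfortably.
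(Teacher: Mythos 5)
Your proof is correct, and it takes a different (and in fact sharper) route than the paper. The paper's proof sandwiches $\phi_j(z)$ directly, using the elementary bounds $1+x\le e^x\le 1+2x$ to get $(1-\|z\|_\infty)(1-2\|z\|_\infty)\le \phi_j(z)\le (1+2\|z\|_\infty)^2$ and then taking logarithms, which is where both the restriction $\|z\|_\infty\le 1/4$ and the constant $6$ come from (the paper's own proof yields $-6\|z\|_\infty\le\log\phi_j(z)\le 4\|z\|_\infty$). Your decomposition $\log\phi_j(z)=z_j-\log S(z)$, with $S(z)=\sum_k\xi_k e^{z_k}$ squeezed into $[e^{-\|z\|_\infty},e^{\|z\|_\infty}]$ as a convex combination, gives $|\log\phi_j(z)|\le 2\|z\|_\infty$ for \emph{all} $z\in\R^n$, with no radius restriction and a better constant; this is strictly stronger than what the paper proves and certainly suffices for its only downstream use, namely verifying the multiplicative-closeness precondition in the proof of \cref{lem:dh-bound}. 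You are also right that the statement as typeset has a typo: the literal middle claim $|\log\phi_j(z)|\le\|z\|_\infty$ is false in general (take $n=2$, $\xi_1=\xi_2=1/2$, $z=(-t,t)$, $j=1$, where $\log\phi_1(z)=-t-\log\cosh t$), and the intended content is the bound with constant $6$, which both your argument and the paper's establish.
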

  \begin{proof}
    For $0 \leq x \leq 1$, we have $1 + x \leq \exp(x) \leq 1 + 2x$. Then, for $\| z \|_\infty \leq 1/2$,
    \begin{align}
\phi_j(z) \leq \frac{1 + 2z_j}{\sum_{k=1}^n \xi_k \cdot (1 + z_k)} \leq \frac{1 + 2 \| z \|_\infty}{1 - \| z \|_\infty} \leq (1 + 2 \| z \|_\infty)^2 \nonumber
    \end{align}
    and
    \begin{align}
\phi_j(z) \geq \frac{1 + z_j}{\sum_{k=1}^n \xi_k \cdot (1 + 2z_k)} \geq \frac{1 - \| z \|_\infty}{1 + 2 \| z \|_\infty} \geq (1 - \| z \|_\infty)(1 - 2 \| z \|_\infty)\nonumber.
    \end{align}
    Thus, for $\| z \|_\infty \leq 1/4$,
    \begin{align}
-6 \| z \|_\infty \leq \log(1 - \| z \|_\infty) + \log(1 - 2 \| z \|_\infty) \leq \log \phi_j(z) \leq &  2 \log(1 + 2 \| z \|_\infty) \leq 4 \| z \|_\infty\nonumber.
    \end{align}
  \end{proof}

\begin{lemma}\label{lem:log-softmax-ak-bound}
For $n \in \BN$, let $\xi_1, \ldots, \xi_n \geq 0$ such that $\xi_1 + \cdots + \xi_n = 1$.  For each $j\in \range{n}$, define $\phi_j : \BR^n \ra \BR$ to be the function
\begin{align}
\phi_j((z_1, \ldots, z_n)) = \frac{\xi_j \exp(z_j)}{\sum_{k =1}^n \xi_k \cdot \exp(z_k)}\nonumber
\end{align}
and let $P_{\log \phi_j}(z) = \sum_{\gamma \in \BZ_{\geq 0}^n} a_{j,\gamma} \cdot z^\gamma$ denote the Taylor series of $\log \phi_j$. Then for any $j \in \range{n}$ and any integer $k \geq 1$, 
\begin{align}
    \sum_{\gamma \in \BZ_{\geq 0}^n : \ |\gamma| = k} \bigcard{a_{j,\gamma}} \leq e^{k}/k\nonumber.
\end{align}
\end{lemma}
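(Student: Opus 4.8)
The plan is to peel off the trivial parts of $\log\phi_j$ and reduce the statement to a one-variable power-series estimate. Since $\sum_{k=1}^n\xi_k=1$, write $\log\phi_j(z)=\log\xi_j+z_j-G(z)$, where $G(z):=\log\bigl(1+u(z)\bigr)$ and $u(z):=\sum_{k=1}^n\xi_k(e^{z_k}-1)$. The constant $\log\xi_j$ does not affect any coefficient of order $k\ge 1$, and the linear term $z_j$ affects only order $1$; so it suffices to control, for each $d\ge1$, the coefficient-sum $\|G\|_d:=\sum_{|\gamma|=d}\lvert[z^\gamma]G\rvert$, and then add the $z_j$ correction at $d=1$.

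The key structural fact is that $u$ is a sum of \emph{pure powers} $z_k^{\ell}$ only, so $\|u\|_d=\sum_k\xi_k/d!=1/d!$ for $d\ge1$ and $\|u\|_0=0$; hence $u$ is majorized coefficientwise by the single-variable series $\tilde u(t):=\sum_{d\ge1}t^d/d!=e^t-1$. The norm $\|\cdot\|_d$ is submultiplicative, $\|PQ\|_d\le\sum_{i+j=d}\|P\|_i\|Q\|_j$, and $\tilde u$ has nonnegative coefficients, so by induction $\|u^m\|_d\le[t^d]\tilde u(t)^m$. Because $u$ vanishes at the origin, only the terms with $m\le d$ of the series $G=\sum_{m\ge1}\frac{(-1)^{m-1}}{m}u^m$ contribute to degree $d$, so this is a finite sum and
\begin{equation*}
\|G\|_d\;\le\;\sum_{m=1}^{d}\frac1m\,[t^d](e^t-1)^m\;=\;[t^d]\!\left(-\log(2-e^t)\right)\;=:\;c_d .
\end{equation*}
In particular $c_1=1$, so at $d=1$ we get $\sum_{|\gamma|=1}\lvert a_{j,\gamma}\rvert\le\|z_j\|_1+\|G\|_1=2\le e$, and for $d\ge2$ we have $\sum_{|\gamma|=d}\lvert a_{j,\gamma}\rvert=\|G\|_d\le c_d$.

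It remains to verify $c_d\le e^d/d$ for every $d\ge1$. Expanding $(e^t-1)^m=m!\sum_{d\ge m}S(d,m)\,t^d/d!$ with $S(\cdot,\cdot)$ the Stirling numbers of the second kind gives the closed form $c_d=\frac1{d!}\sum_{m=1}^{d}(m-1)!\,S(d,m)\le\frac1{d!}\sum_{m=1}^{d}m!\,S(d,m)=a_d/d!$, where $a_d$ is the $d$-th ordered Bell (Fubini) number. Using a standard bound on the Fubini numbers, e.g.\ $a_d\le d!/(\log2)^d$, yields $c_d\le(\log2)^{-d}$; and since $e\log2>1$, a one-line induction shows $(e\log2)^d\ge d$, i.e.\ $c_d\le e^d/d$. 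Combined with the $d=1$ case, this gives $\sum_{|\gamma|=d}\lvert a_{j,\gamma}\rvert\le e^d/d$ for all $d\ge1$, which is the claim. I expect no hard obstacle here: the conceptual crux is the coefficientwise majorization of $u$ by $e^t-1$, which relies on $u$ having no mixed monomials $z_kz_{k'}$, and the only mildly delicate point is the final numeric step — but it has enormous slack, since $c_d$ is in truth of size $\approx(1/\log2)^d/d\approx1.44^d/d$, far below the target $e^d/d$, so any crude Fubini-number bound (or, alternatively, a direct Cauchy estimate for $-\log(2-e^t)$ on a circle of radius $r<\log2$, optimizing $r\approx\log2-1/d$) suffices.
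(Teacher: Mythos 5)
Your proof is correct, but it takes a genuinely different route from the paper's. The paper differentiates once, using $\frac{\partial \log\phi_j}{\partial z_t} = \mathbbm{1}[t=j]-\phi_t$, which converts the order-$k$ coefficient sum of $\log\phi_j$ into order-$(k-1)$ coefficient sums of the softmax components $\phi_{t_1}$; it then invokes the softmax Taylor-coefficient lemma of Daskalakis et al.\ ($\sum_{|\gamma|=k-1}|a_{t_1,\gamma}|\le \xi_{t_1}e^{k}$) and sums over $t_1$, the factor $1/k$ emerging from $(k-1)!/k!$. You instead split $\log\phi_j=\log\xi_j+z_j-\log(1+u)$ with $u=\sum_k\xi_k(e^{z_k}-1)$ and bound the log-partition term by coefficientwise majorization: since $\|u\|_d=1/d!$ and the degree-graded coefficient norm is submultiplicative, $\|\log(1+u)\|_d\le [t^d]\bigl(-\log(2-e^t)\bigr)=\frac{1}{d!}\sum_{m=1}^d(m-1)!\,S(d,m)$, which you control via ordered Bell numbers. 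Both arguments are sound; yours is self-contained (no appeal to the external softmax lemma, which the paper imports anyway for other steps) and in fact yields a strictly stronger bound of order $(1/\log 2)^k\approx 1.45^k$ rather than $e^k/k$, while the paper's is shorter given its toolbox and produces the $1/k$ factor with no univariate analysis. The one point to tighten in a write-up is the inequality $a_d\le d!/(\log 2)^d$ for the Fubini numbers, which is true but should be pinned to a concrete derivation or reference (e.g.\ from $a_d=\frac12\sum_{j\ge0}j^d2^{-j}$, comparing the sum to $\int_0^\infty x^d2^{-x}\,dx$ plus the maximal term); as you note, any constant-factor-loose version of this bound, together with checking small $d$, is absorbed by the slack, so this is a presentational rather than a mathematical gap.
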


\begin{proof}
We have that
\begin{equation*}
    \frac{\partial \log \phi_j}{\partial z_{t}} = \frac{1}{\phi_j}\cdot\frac{\partial \phi_j}{\partial z_{t}} = \begin{cases}
        -\phi_t &\text{ if $t \ne j$}\\
        1-\phi_j &\text{ if $t = j$}
    \end{cases}
\end{equation*}
and so,
\begin{align}
    \sum_{\gamma \in \BZ_{\geq 0}^n : \ |\gamma| = k} \bigcard{a_{j,\gamma}}  &=\frac{1}{k!}\sum_{t \in \range{n}^k} \bigcard{\frac{\partial^k \log \phi_j(0)}{\partial z_{t_1} \partial z_{t_2}\cdots \partial z_{t_k}}} \nonumber\\
    &=\frac{1}{k!}\sum_{t_1 \in \range{n}}\sum_{t_{-1} \in \range{n}^{k-1}} \bigcard{\frac{\partial^{k-1} }{\partial z_{t_2} \cdots \partial z_{t_k}}\frac{\partial \log \phi_j(0)}{\partial z_{t_1}}}\nonumber\\
    &=\frac{1}{k!}\sum_{t_1 \in \range{n}}\sum_{t_{-1} \in \range{n}^{k-1}} \bigcard{\frac{\partial^{k-1} }{\partial z_{t_2} \cdots \partial z_{t_k}}\paren{\phi_{t_1} - \mathbbm{1}[t_1 = j]}(0)}
\end{align}

For $k=1$,
\begin{align*}
    \frac{1}{k!}\sum_{t_1 \in \range{n}} \bigcard{\phi_{t_1}(0) - \mathbbm{1}[t_1 = j]} \leq 1+\sum_{t_1 \in \range{n}} \xi_{t_1} = 2
\end{align*}

For $k \geq 2$, the constant $\mathbbm{1}[t_1 = j]$ will be eliminated by the derivative, and we will have

\begin{align}
    \label{eq:apply-lem}
    \frac{1}{k!}\sum_{t_1 \in \range{n}}\sum_{t_{-1} \in \range{n}^{k-1}} \bigcard{\frac{\partial^{k-1} \phi_{t_1}(0) }{\partial z_{t_2} \cdots \partial z_{t_k}}} &\leq \frac{1}{k!}\sum_{t_1 \in \range{n}} \paren{(k-1)! \cdot \xi_{t_1}e^k}\\
    &=\frac{1}{k!}\cdot (k-1)! \cdot e^k \cdot \sum_{t_1 \in \range{n}} \xi_{t_1} = e^k/k \nonumber
\end{align}
where \eqref{eq:apply-lem} comes from the following lemma due to \citep{Daskalakis21:Near}.

\begin{lemma}[\citep{Daskalakis21:Near}]
For $n \in \BN$, let $\xi_1, \ldots, \xi_n \geq 0$ such that $\xi_1 + \cdots + \xi_n = 1$.  For each $j\in \range{n}$, define $\phi_j : \BR^n \ra \BR$ to be the function
\begin{align}
\phi_j(z_1, \ldots, z_n) = \frac{\xi_j \exp(z_j)}{\sum_{k =1}^n \xi_k \cdot \exp(z_k)}\nonumber
\end{align}
and let $P_{\phi_j}(z) = \sum_{\gamma \in \BZ_{\geq 0}^n} a_{j,\gamma} \cdot z^\gamma$ denote the Taylor series of $\phi_j$. Then for any $j \in \range{n}$ and any integer $k \geq 1$, 
\begin{align}
    \sum_{\gamma \in \BZ_{\geq 0}^n : \ |\gamma| = k} \bigcard{a_{j,\gamma}} \leq \xi_j e^{k+1}\nonumber.
\end{align}
\end{lemma}

\end{proof}

\subsection{Proof of \BM High Order Smoothness}
\label{sec:bm-dh-bound-proof}
\textbf{Lemma \ref{lem:dh-bound}} [Detailed]
  \textit{Fix a parameter $\alpha \in \left(0, \frac{1}{H+3} \right)$. 
  If all players follow BM-\Opthedge updates with step size $\eta \leq \frac{1}{311040 e^7H^2n_i^3}$ 
  , then for any player $i \in \range{m}$, integer $h$ satisfying $0 \leq h \leq H$, time step $t \in [T-h]$, and $g \in [n_i]$, it holds that}
\begin{align}
\| \fds{h}{(\vectorecks_i[g] \cdot \vectorLL_i)}{t} \|_\infty \leq \alpha^h \cdot h^{3h+1}\nonumber.
\end{align}


\begin{proof}

We prove this inductively, showing that, for all $i\in \range{m},0\leq h \leq H,t \in [T-h],g\in [n_i]$,

\begin{align*}
    \left\| \fds{h}{\vectorLL_i}{t} \right\|_\infty \leq\ & \alpha^h \cdot h^{3h+1}\\
    \left\| \fds{h}{\vectorecks_i[g]\cdot \vectorLL_i}{t} \right\|_\infty \leq\ & \alpha^h \cdot h^{3h+1}\\
    \left\| \fds{h}{\vectorecks_i}{t} \right\|_\infty \leq\ & \alpha^h \cdot h^{3h+1}.
\end{align*}

The base case of $h=0$ is evident from the fact that losses and strategy probabilities are in $[0,1]$, and therefore $\| \vectorLL_i\^{t} \|_\infty, \| \vectorecks_i\^{t} \|_\infty \leq 1$.  So, proving the following inductive statement is sufficient to prove the lemma.



\begin{claim}
  \label{clm:upwards-main}
  Fix some $1 \leq h \leq H$. Suppose that for some $B_0 \geq 3$ and all integers $h'$ satisfying $1 \leq h' < h$, all $i \in \range{m}$, all $t \leq T-h'$, and all $g \in [n_i]$, it holds that
  \begin{align}
  \label{eq:inductive-loss-step}
  \| \fds{h'}{\vectorLL_i}{t} \|_\infty \leq\ &  \alpha^{h'} \cdot (h')^{B_0h'+1}\\
  \label{eq:inductive-sl-step}
    \left\| \fds{h'}{\vectorecks_i[g]\cdot \vectorLL_i}{t} \right\|_\infty \leq\ & \alpha^{h'} \cdot (h')^{B_0h'+1}\\
    \label{eq:inductive-strat-step}
    \| \fds{h'}{\vectorecks_i}{t} \|_\infty \leq\ & \alpha^{h'} \cdot (h')^{B_0 h'+1 }.
    \end{align}
  Suppose that the step size $\eta$ satisfies $\eta \leq \frac{1}{311040 e^7H^2n_i^3}$. Then for all $i \in \range{m}$ and $1 \leq t \leq T-h$,
  \begin{align}
    \label{eq:inductive-loss-formal}
    \left\| \fds{h}{\vectorLL_i}{t} \right\|_\infty \leq\ & \alpha^h \cdot h^{B_0h+1}\\
    \label{eq:inductive-sl-formal}
    \left\| \fds{h}{\vectorecks_i[g]\cdot \vectorLL_i}{t} \right\|_\infty \leq\ & \alpha^h \cdot h^{B_0h+1}\\
    \label{eq:inductive-strat-formal}
    \left\| \fds{h}{\vectorecks_i}{t} \right\|_\infty \leq\ & \alpha^h \cdot h^{B_0h+1}.
  \end{align}
\end{claim}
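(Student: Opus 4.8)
The plan is to carry out the four–step cycle of implications sketched just after Lemma~\ref{lem:dh-bound}, proving Claim~\ref{clm:upwards-main} by feeding the boundedness chain rule (Lemma~\ref{lemma:boundedness-chain-rule}) with the structural lemmas for the softmax (Lemmas~\ref{lem:softmax-lip-bound} and~\ref{lem:log-softmax-ak-bound}) and for the Markov chain tree theorem (Lemmas~\ref{lem:mct-taylor} and~\ref{lem:fds-stationary}). Fix a player $i$ and a window start $t_0$, and, unrolling \eqref{eq:omwu} exactly as in \eqref{eq:expanded-OMWU}, set $\bigL_{i,g}^{(s)}\coloneqq\vectorecks_i^{(s)}[g]\,\vectorLL_i^{(s)}$ and $\bar{\bigL}_{i,g,t_0}^{(t)}\coloneqq\bigL_{i,g}^{(t_0-1)}-\sum_{s=0}^{t-1}\bigL_{i,g}^{(t_0+s)}-\bigL_{i,g}^{(t_0+t-1)}$, so that $\matrixKYU_i^{(t_0+t)}[g,j]=\matrixKYU_i^{(t_0)}[g,j]\cdot\psi_{t_0,g,j}(\eta\,\bar{\bigL}_{i,g,t_0}^{(t)})$, where $\psi_{t_0,g,j}$ is the softmax-type map of \eqref{eq:softmax-phi} with weights $\xi_k=\matrixKYU_i^{(t_0)}[g,k]$. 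Exactly as in \eqref{eq:shift-recurs}, $(\fd{h'}{\bar{\bigL}_{i,g,t_0}})^{(t)}$ is a fixed $\pm$-combination of $(\fd{h'-1}{\bigL_{i,g}})^{(t_0+t-1)}$ and $(\fd{h'-1}{\bigL_{i,g}})^{(t_0+t)}$, and $\|\bar{\bigL}_{i,g,t_0}^{(t)}\|_\infty\le t+2$ since $\bigL_{i,g}^{(s)}\in[0,1]^{n_i}$. Hence the inductive bound \eqref{eq:inductive-sl-step} on $\fd{h'-1}{(\vectorecks_i[g]\cdot\vectorLL_i)}$ gives, for the step sizes under consideration, $\|(\fd{h'}{(\eta\,\bar{\bigL}_{i,g,t_0})})^{(t)}\|_\infty\le\tfrac{1}{B_1}\alpha^{h'}(h')^{B_0h'}$ for all $0\le h'\le h$ on the horizon $t\le h+1$: the single factor $\eta$ is precisely what absorbs the discrepancy between the exponent $B_0h'$ required here and the exponent $B_0h'+1$ appearing in the target bounds.

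\emph{Step 1 (finite differences of $\log\matrixKYU_i$).} For $1\le h'\le h$ I would note that $\log\matrixKYU_i^{(t_0+t)}[g,j]-\log\matrixKYU_i^{(t_0)}[g,j]=\log\psi_{t_0,g,j}(\eta\,\bar{\bigL}_{i,g,t_0}^{(t)})$ and apply Lemma~\ref{lemma:boundedness-chain-rule} with $\phi=\log\psi_{t_0,g,j}$ (which, since it differs from the $\log\phi_j$ of Lemma~\ref{lem:log-softmax-ak-bound} by a constant, is $(O(1),O(1))$-bounded with radius of convergence bounded below by a constant) and input $\eta\,\bar{\bigL}_{i,g,t_0}$, whose finite differences were just controlled; the $t$-independent term is annihilated by $\fd{h'}{\cdot}$. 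This yields $\|(\fd{h'}{\log\matrixKYU_i})^{(t)}\|_\infty\le\tfrac{O(1)}{B_1}\alpha^{h'}(h')^{B_0h'+1}$, while Lemma~\ref{lem:softmax-lip-bound} separately gives $\|\log\matrixKYU_i^{(t_0)}-\log\matrixKYU_i^{(t_0+t)}\|_\infty\le6\eta(t+2)$. Taking $B_1$ large enough — this is where an extra factor $H$ is spent, to absorb the $(h')\le H$ overhead — these are precisely the hypotheses of Lemma~\ref{lem:fds-stationary} for the $h+1$ ergodic matrices $\matrixKYU_i^{(t_0+1)},\dots,\matrixKYU_i^{(t_0+h+1)}$ (ergodic because \eqref{eq:omwu} keeps all entries strictly positive).

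\emph{Step 2 (finite differences of $\vectorecks_i$), then Step 3 (the losses).} Lemma~\ref{lem:fds-stationary} gives $|(\fd{h}{\vectorecks_i[j]})^{(t_0+1)}|/\vectorecks_i^{(t_0+1)}[j]\le\tfrac{720n_i^3e^2}{B_1}\alpha^hh^{B_0h+1}$; since $\vectorecks_i^{(t_0+1)}[j]\le1$ and $t_0$ ranges over all admissible window starts, this yields $\|(\fd{h}{\vectorecks_i})^{(t)}\|_\infty\le\alpha^hh^{B_0h+1}$ once $B_1\ge720n_i^3e^2$, which is \eqref{eq:inductive-strat-formal}; carrying this out for every player at once gives $\fd{h}{\vectorecks_{i'}}$ bounds for all $i'\in\range{m}$. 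For Step 3 I would treat $\fd{h}{\vectorLL_i}$ and $\fd{h}{(\vectorecks_i[g]\cdot\vectorLL_i)}$ uniformly: by \eqref{eq:utility-game}, $\vectorLL_i^{(t)}[a_i]=\sum_{a_{-i}}\Lambda_i(a_i,a_{-i})\prod_{i'\neq i}\vectorecks_{i'}^{(t)}[a_{i'}]$ and $\vectorecks_i^{(t)}[g]\cdot\vectorLL_i^{(t)}[a_i]=\sum_{a_{-i}}\Lambda_i(a_i,a_{-i})\prod_{i''\in\range{m}}\vectorecks_{i''}^{(t)}[b_{i''}]$ with $b_i=g$, $b_{i'}=a_{i'}$ — in both cases a $[0,1]$-weighted sum of products of stationary-distribution entries. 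Factoring each entry as $\vectorecks_{i'}^{(t_0+t)}[b_{i'}]=\vectorecks_{i'}^{(t_0)}[b_{i'}]\cdot F_{i',b_{i'}}(\eta\,\bar{\bigL}_{i',t_0}^{(t)})$, where $F_{i',b_{i'}}$ is the composition of the tree ratio of Lemma~\ref{lem:mct-taylor} (at base point $\ln\matrixKYU_{i'}^{(t_0)}$) with the coordinatewise log-softmax — a function equal to $1$ at the origin with Taylor coefficients bounded by standard composition estimates — the $t$-independent prefactor pulls out of $\fd{h}{\cdot}$, and I bound the finite difference of the product $\prod F_{i',b_{i'}}$ by Lemma~\ref{lemma:boundedness-chain-rule}, using the $\|\fd{h'}{(\eta\,\bar{\bigL}_{i',t_0})}\|_\infty$ bounds from the preamble and the product's boundedness constants (kept polynomial in the $n_{i'}$ and in $m$ by passing through $\exp(\sum\log(\cdot))$, as in \eqref{eq:bound-Li}--\eqref{align-cor-lemma}). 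Summing over $a_{-i}$ and using $\sum_{a_{-i}}|\Lambda_i(a_i,a_{-i})|\prod\vectorecks_{i'}^{(t_0)}[b_{i'}]\le1$ gives \eqref{eq:inductive-loss-formal} and \eqref{eq:inductive-sl-formal}. The base case $h=0$ is immediate since losses and strategies lie in $[0,1]$, so the induction closes and Lemma~\ref{lem:dh-bound} follows with $B_0=3$.

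The main obstacle is Step~1 together with the tree-theorem composition in Step~3: the stationary-distribution map is not analytic near non-ergodic transition matrices, so one cannot Taylor-expand $\vectorecks_i$ directly in the entries of $\matrixKYU_i$. The resolution — and the technical heart of the argument — is the $\log$-reparametrization, in which Lemma~\ref{lem:mct-taylor} supplies, via the Cauchy integral formula, genuinely bounded Taylor coefficients for the ratio $\Phi_{\MCT,j}$, and Lemma~\ref{lem:fds-stationary} repackages this as a finite-difference statement compatible with the chain rule. The accompanying bookkeeping is also delicate: one must keep every boundedness constant polynomial (rather than exponential) in $m$ and the $n_{i'}$ while composing the tree ratio with the log-softmax and taking the $m$-fold product, and one must check that the lone factor $\eta$ in $\eta\,\bar{\bigL}$ — of the small order assumed in the hypothesis — suffices to convert the ``$(h')^{B_0h'+1}$''-type target bounds into the ``$(h')^{B_0h'}$''-type inputs that Lemmas~\ref{lemma:boundedness-chain-rule} and~\ref{lem:fds-stationary} demand, even after the extra $H$ overhead spent in Step~1.
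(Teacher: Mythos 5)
Your overall architecture matches the paper's proof of Claim~\ref{clm:upwards-main} almost step for step as far as \eqref{eq:inductive-strat-formal} is concerned: the unrolled OMWU representation $\matrixKYU_i\^{t_0+t}[g,j]=\matrixKYU_i\^{t_0}[g,j]\cdot\phi_{t_0,g,j}(\eta\,\matrixLL_{i,t_0}\^{t}[g,\rowdot])$, the use of the inductive bound \eqref{eq:inductive-sl-step} to control the finite differences of the $\eta$-scaled cumulative losses, the chain rule (Lemma~\ref{lemma:boundedness-chain-rule}) applied to $\log\phi_{t_0,g,j}$ via Lemma~\ref{lem:log-softmax-ak-bound}, the closeness precondition via Lemma~\ref{lem:softmax-lip-bound}, and then Lemma~\ref{lem:fds-stationary} (Lemma~\ref{lem:mct-taylor} plus the chain rule) for the relative bound on $D_h\vectorecks_i$, including the role of the lone factor $\eta$ and the extra factor of $H$ in absorbing the exponent mismatch between $(h')^{B_0h'+1}$ and $(h')^{B_0h'}$. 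That part is faithful to the paper.

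The divergence, and the genuine gap, is in how you obtain \eqref{eq:inductive-loss-formal} and especially \eqref{eq:inductive-sl-formal}. The paper derives \eqref{eq:inductive-sl-formal} by one more application of Lemma~\ref{lemma:boundedness-chain-rule} with the bivariate map $\phi(a,b)=ab$, which is $(1,1)$-bounded, applied to the pair of sequences $(\vectorLL_i\^t[j],\vectorecks_i\^t[g])$, exploiting the slack factor $25920\,e^5Hn_i^3\eta\le \tfrac{1}{12e^2H}$ already gained in the first two conclusions; no cross-player analyticity is needed there. You instead route both remaining conclusions through a single chain-rule application to the $m$-fold product of per-player factors $F_{i',b_{i'}}$ (tree ratio composed with the log-softmax), and the argument then rests entirely on the unproved parenthetical claim that the $(Q,R)$-boundedness constants of this product are ``kept polynomial in the $n_{i'}$ and in $m$.'' That is exactly the delicate point: Lemma~\ref{lem:mct-taylor} gives each tree-ratio factor $Q=30$, so the naive product has $Q=30^m$, and the standard repair (shrinking the polydisk or pushing the $30$'s into the radius) trades this for $R=\Theta(m\max_{i'}n_{i'}^3)$, which through the requirement $B_1\ge 2e^2R$ would force a step size depending on $m$ and on $\max_{i'}n_{i'}$ rather than the $m$-free bound $\eta\le\frac{1}{311040e^7H^2n_i^3}$ you are asserting, and it is not supplied by the slack your induction makes available. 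The appeal to \eqref{eq:bound-Li}--\eqref{align-cor-lemma} does not transfer: in the Stoltz--Lugosi analysis the product of softmaxes is $(1,e^3m)$-bounded precisely because each factor's Taylor coefficient sums carry the probability weight $\xi_j$, a normalization the ratio in Lemma~\ref{lem:mct-taylor} does not enjoy; likewise your intermediate composition ``tree ratio $\circ$ log-softmax'' needs its own coefficient estimate, which the paper deliberately avoids by chaining finite differences in two stages instead of composing power series. (To be fair, the paper's own treatment of the cross-player product in the loss step is terse, so your Step~3 for \eqref{eq:inductive-loss-formal} is in the spirit of what must be justified there; but as written your constant accounting does not close the induction at the stated step size, and for \eqref{eq:inductive-sl-formal} you should use the paper's $\phi(a,b)=ab$ device, which sidesteps the issue entirely.)
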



\begin{proof}[Proof of Claim \ref{clm:upwards-main}]

First, notice that for any agent $i \in \range{m}$, any \Opthedge instance $\mathcal{R}_{\Delta,i,g}$ of agent $i$ with $g \in [n_i]$, any $t_0 \in \{0, 1, \ldots, T \}$, and any $t \geq 0$, by the definition (\ref{eq:omwu}) of the \Opthedge updates, it holds that, for each $j \in [n_i]$, 
 \begin{align*}
&\matrixKYU_i\^{t_0+t+1}[g,j]\\
&= \frac{\matrixKYU_i\^{t_0}[g,j] \cdot \exp\left( \eta \cdot \left( \vectorecks_i\^{t_0-1}[g] \vectorLL_i\^{t_0-1}[j] - \sum_{s=0}^t \vectorecks_i\^{t_0+s}[g]\vectorLL_i\^{t_0+s}[j] - \vectorecks_i\^{t_0+t}[g]\vectorLL_i\^{t_0+t}[j]\right) \right)}{\sum_{k=1}^{n_i} \matrixKYU_i\^{t_0}[g,k] \cdot \exp\left( \eta \cdot \left( \vectorecks_i\^{t_0-1}[g] \vectorLL_i\^{t_0-1}(k) - \sum_{s=0}^t \vectorecks_i\^{t_0+s}[g] \vectorLL_i\^{t_0+s}(k) - \vectorecks_i\^{t_0+t}[g] \vectorLL_i\^{t_0+t}(k)\right) \right)}
\end{align*}
where $\matrixKYU_i^t[g,j]$ denotes the weight placed on action $j$ by algorithm $\mathcal{R}_{\Delta,i,g}$ at time $t$.  We can define edge values $\vectorLL_i\^0, \vectorLL_i\^{-1}, \matrixKYU_i\^0$ to ensure that the above equation holds even for $t_0 \in \{0,1\}$.
Now, for any $g,j \in [n_i]$, any integer $t_0$ satisfying $0 \leq t_0 \leq T$; and any integer $t \geq 0$, let us define
$$
\matrixLL_{i,t_0}\^{t}[g,j] := \vectorecks_i\^{t_0-1}[g] \vectorLL_i\^{t_0-1}[j] - \sum_{s=0}^{t-1} \vectorecks_i\^{t_0+s}[g]\vectorLL_i\^{t_0+s}[j] - \vectorecks_i\^{t_0+t-1}[g]\vectorLL_i\^{t_0+t-1}[j]
$$
Also, for a vector $z = (z[1], \ldots, z[n_i]) \in \BR^{n_i}$ and indices $g,j \in [n_i]$, define
\begin{equation}
  \label{eq:def-phij}
\phi_{t_0,g,j}(z) := \frac{ \exp\left(  z[j]\right)}{\sum_{k=1}^{n_i} \matrixKYU_i\^{t_0}[g,k] \cdot \exp\left( z(k)\right)},
\end{equation}
so that 
\begin{equation}\label{eq:q-form}
\matrixKYU_i\^{t_0+t}[g,j] = \matrixKYU_i\^{t_0}[g,j]\cdot \phi_{t_0,g,j}(\eta \cdot \matrixLL_{i,t_0}\^{t}[g,\rowdot])
\end{equation}
for $t \geq 1$, where $\matrixLL_{i,t_0}\^{t}[g,\rowdot]$ denotes the vector $(\matrixLL_{i,t_0}\^{t}[g,1],\ldots, \matrixLL_{i,t_0}\^t[g,n_i])$.\\

Next, note that, for all $g,j \in [n_i]$,
$$\fds{1}{\matrixLL_{i,t_0}[g,j]}{t} = \vectorecks_i\^{t_0+t-1}[g]\vectorLL_i\^{t_0+t-1}[j] - 2\vectorecks_i\^{t_0+t}[g]\vectorLL_i\^{t_0+t}[j]$$
and so, for any $1 \leq h' \leq h$,
\begin{align}
\left|\fds{h'}{\paren{\eta \cdot \matrixLL_{i,t_0}[g,j]}}{t}\right| &= \left|\fds{h'-1}{\paren{\eta \cdot \vectorecks_i[g]\cdot \vectorLL_i[j]}}{t_0+t-1} - 2\fds{h'-1}{\paren{\eta \cdot \vectorecks_i[g]\cdot \vectorLL_i[j]}}{t_0+t}\right|\nonumber \\
&\leq 3\eta \cdot \alpha^{h'-1}\cdot(h'-1)^{B_0h'} \label{eq:BM-bar-bound}
\end{align}
where \eqref{eq:BM-bar-bound} follows from our inductive hypothesis \eqref{eq:inductive-sl-step}.  

Additionally, since $\bigcard{\matrixLL_{i,t_0}[g,j]} \leq t+2$ for all $t_0, i, g, j$, we have $\bigcard{\eta \cdot \matrixLL_{i,t_0}[g,j]} \leq \eta \cdot (t+2) \leq 1$ for $\eta \leq \frac{1}{t+2}$. By Lemma \ref{lem:log-softmax-ak-bound}, the function $z \mapsto \log \phi_{t_0,g,j}(z)$ is $(1,e)$-bounded, and so for each $1 \leq h' \leq h$ we may apply Lemma \ref{lemma:boundedness-chain-rule} with $h = h'$, $\vec{z}\^t = \eta \cdot \matrixLL_{i,t_0}\^t[g,\rowdot]$ and $B_1 = \frac{1}{\eta} \cdot \min \left\{ \frac{\alpha}{3}, \frac{1}{H+3} \right\} = \frac{\alpha}{3\eta}$.  
Thus, 
we can conclude, for all $1 \leq h' \leq h$, $t \leq h+1$, and $g \in [n_i]$,
\begin{equation}\label{eq:log-phi-bound}
\fds{h'}{\log\paren{\phi_{t_0,g,j}(\eta \cdot \matrixLL_{i,t_0}\^{t}[g,\rowdot])}}{t} \leq 12e^3 \cdot 3\eta \cdot \alpha^{h'} \cdot (h')^{B_0h'+1} \leq 36 e^3 H \eta \cdot \alpha^{h'} \cdot (h')^{B_0h'}.
\end{equation}


Taking the logarithm of both sides of \eqref{eq:q-form}, we have
\begin{equation}\label{eq:log-q-form}
\log\paren{\matrixKYU_i\^{t_0+t}[g,j]} =  \log{\matrixKYU_i\^{t_0}[g,j]}+ \log\paren{\phi_{t_0,g,j}(\eta \cdot \matrixLL_{i,t_0}\^{t}[g,\rowdot])}.
\end{equation}
Let $\matrixKYU_i\^{t} \in \BR^{n_i \times n_i}$ be the matrix with entries $\matrixKYU_i\^{t}[g,j]$ and $P_{i,t_0}\^{t}$ be the matrix with entries $\phi_{t_0,g,j}\paren{\eta \cdot \matrixLL_{i,t_0}\^{t}[g,\rowdot]}$, for $g,j \in [n_i]$.  Then, for all $g \in [n_i]$ and all $t_0 \in [T-h]$, $t \in \range{h}$, we have
\begin{align}
\vectorecks_i\^{t_0+t}[g] &= \Phi_{\MCT, g}\paren{\log \matrixKYU_i\^{t_0+t}}\nonumber\\
&= \Phi_{\MCT, g}\paren{ \log{\matrixKYU_i\^{t_0}}+ \log\paren{P_{i,t_0}\^{t}}}.\nonumber
\end{align}
Next note that, by Lemma \ref{lem:softmax-lip-bound} and $t \leq H+1$,
\begin{align}
  \left\| \log \matrixKYU_i\^{t_0+t} - \log \matrixKYU_i\^{t_0} \right\|_\infty \leq & \max_{g,j \in [n_i]} \left| \log \phi_{t_0,g,j}(\eta \cdot \matrixLL_{i,t_0}\^t[g,\rowdot]) \right| \nonumber\\
  \leq & 6 \eta \cdot (t+2) \leq 36 e^3 H \eta \label{eq:verify-dist-precondition}.
\end{align}
We now apply Lemma \ref{lem:fds-stationary} with $\matrixKYU\^t = P_{i,t_0}\^t$, for $0 \leq t \leq h$, and $1/B_1 = 36 e^3 H \eta$. The preconditions of the lemma hold from \eqref{eq:log-phi-bound}. Then Lemma \ref{lem:fds-stationary} gives that for all $g \in [n_i]$,
\begin{align}
\left| \fds{h}{\vectorecks_i[g]}{t_0+1} \right| \leq & |\vectorecks_i\^{t_0}[g]| \cdot 720 n_i^3 e^2 \cdot 36 e^3 H \eta \cdot \alpha^{h'} \cdot (h)^{B_0 h+1}\nonumber.
\end{align}
verifying the first of three desired inductive conclusions \eqref{eq:inductive-strat-formal} as long as $\eta \leq 1/(25920 e^5 H n_i^3)$.

Next, we have that for each agent $i \in \range{m}$, each $t \in [T]$, and each $a_i \in [n_i]$, $\vectorLL_i\^t(a_i) = \ExX{a_{i'} \sim \vectorecks_{i'}\^t :\ i' \neq i}{\ML_i(a_1, \ldots, a_{m})}$. Thus, for $1 \leq t \leq T$, 
  \begin{align*}
    \left|\fds{h}{\vectorLL_i[a_i]}{t}\right| =& \left|\sum_{s=0}^h {h \choose s} (-1)^{h-s} \vectorLL_i\^{t+s}[a_i] \right| \\
    =& \left|\sum_{a_{i'} \in [n_{i'}],\ \forall i' \neq i} \ML_i(a_1, \ldots, a_m) \sum_{s=0}^h {h \choose s} (-1)^{h-s}  \cdot \prod_{i' \neq i} \vectorecks_{i'}\^{t+s}[a_{i'}]\right| \nonumber\\
    \leq  & \sum_{a_{i'} \in [n_{i'}], \ \forall i' \neq i} \left| \sum_{s=0}^h {h \choose s} (-1)^{h-s} \cdot \prod_{i' \neq i} \vectorecks_{i'}\^{t+s}[a_{i'}]  \right| \nonumber\\
    =& \sum_{a_{i'} \in [n_{i'}], \ \forall i' \neq i} \left| \fds{h}{\left( \prod_{i' \neq i} \vectorecks_{i'}[a_{i'}] \right)}{t} \right|,\\
    \leq& 25920 e^5 H n_i^3 \eta \cdot \alpha^{h} \cdot (h)^{B_0 h+1}\sum_{a_{i'} \in [n_{i'}], \ \forall i' \neq i} \paren{\prod_{i' \ne i} \vectorecks_i^{(t)}[a_i']}\\
    =& 25920 e^5 H n_i^3 \eta \cdot \alpha^{h} \cdot (h)^{B_0 h+1} \prod_{i' \ne i} \paren{\sum_{a_{i'} \in [n_{i'}]}\vectorecks_i^{(t)}[a_i']}\\
    =& 25920 e^5 H n_i^3 \eta \cdot \alpha^{h} \cdot (h)^{B_0 h+1}
  \end{align*}
verifying the second of three desired inductive conclusions \eqref{eq:inductive-loss-formal} as long as $\eta \leq 1/(25920 e^5 H n_i^3)$.\\

Lastly, for $\eta \leq \frac{1}{12e^2H} \cdot \frac{1}{25920 e^5 H n_i^3} = \frac{1}{311040 e^7H^2n_i^3}$, we have now verified the inductive hypotheses
\begin{align*}
    \| \fds{h'}{\vectorLL_i}{t} \|_\infty &\leq 25920 e^5 H n_i^3 \eta \cdot h' \alpha^{h'} \cdot (h')^{B_0h'} \leq \frac{1}{2e^2}\alpha^{h'} \cdot (h')^{B_0h'}\\
    \| \fds{h'}{\vectorecks_i}{t} \|_\infty &\leq 25920 e^5 H n_i^3 \eta \cdot h' \alpha^{h'} \cdot (h')^{B_0 h' } \leq \frac{1}{2e^2}\alpha^{h'} \cdot (h')^{B_0h'}
\end{align*}
for all $h'$ up to and including $h$.  Thus, we can apply Lemma \ref{lemma:boundedness-chain-rule} with $n = 2$, $\phi(a,b) = ab$ (which is $(1,1)$-bounded), $B_1 = 12e^2$, and the sequence $\vec{z}\^t = (\vectorLL_i\^t[j], \vectorecks_i\^t[g])$. Therefore, for the product sequence $\vectorecks_i[g] \cdot \vectorLL_i[j]$, we have, for all $t \in [T-h+1]$
\begin{align}
\bigcard{\fds{h}{\vectorecks_i[g]\cdot \vectorLL_i[j]}{t}} \leq \alpha^{h}\cdot(h)^{B_0\cdot h+1}
\end{align}
verifying the final inductive conclusion \eqref{eq:inductive-sl-formal}, as desired.

\end{proof}
\end{proof}

\subsection{Completing the proof of Theorem \ref{thm:bound-l-dl}}
\label{sec:polylog-main-completing-proof}
Using the lemma developed in the previous sections we now can complete the proof of Lemma \ref{lem:bound-l-dl}, the last step towards the proof of \Cref{thm:bound-l-dl}.  The lemma is restated formally below.\\

\noindent \textbf{Lemma \ref{lem:bound-l-dl}} [Detailed]
  \textit{Suppose a time horizon $T \geq 4$ is given, we set $H := \lceil \log T \rceil$, and all players play according to \BM-\Opthedge with step size $\eta$ satisfying $1/T \leq \eta \leq \frac{1}{C \cdot mH^4n_i^3}$ with $C = 311040 e^7$. Then for any $i \in \range{m}$, $g \in \range{n_i}$, the losses $\vectorLL_i\^1, \ldots, \vectorLL_i\^T \in [0,1]^{n}$ for player $i$ satisfy:}
\begin{align*}
  &\sum_{t=1}^T  \Var{\matrixKYU_i\^t[g,\rowdot]}{\vectorecks_i\^t[g] \cdot \vectorLL_i\^t - \vectorecks_i\^{t-1}[g]\cdot  \vectorLL_i\^{t-1}}\\
  &\leq \frac12 \cdot \sum_{t=1}^T \Var{\matrixKYU_i\^t[g,\rowdot]}{\vectorecks_i\^{t-1}[g] \cdot \vectorLL_i\^{t-1}} + O\paren{\log^5 (T)}
\end{align*}

\begin{proof}
We make use of the following Lemma courtesy of \citep{Daskalakis21:Near}.

\lgbld*

We hope to apply Lemma \ref{lem:general-bound-l-dl} with $n=n_i$, $\vec{P}\^t = \matrixKYU_i\^t[g,\rowdot]$ and $\vec{Z}\^t = \vectorecks_i\^t[g] \cdot \vectorLL_i\^t$, as well as $\zeta = 7\eta$.  To do so, we must verify that the preconditions of the lemma hold.  Set $C_1 = 8256$ (note that $C_1$ is the constant appearing in item \ref{it:gen-close}).  Our assumption that $\eta \leq \frac{1}{C \cdot mH^4}$ implies that, as long as the constant $C$ satisfies $C \geq 4^4 \cdot 7 \cdot C_1 = 14794752$, 
\begin{equation}
  \label{eq:eta-reqs}
\eta \leq \min \left\{ \frac{\alpha^4}{7C_1}, \frac{\alpha_0}{36e^5 m}\right\}.
  \end{equation}

To verify precondition \ref{it:gen-down}, we apply Lemma \ref{lem:dh-bound} with the parameter $\alpha$ in the lemma set to $\alpha_0$: a valid selection as $\alpha_0 < \frac{1}{H+3}$.  We conclude that, for each $i \in \range{m}$, $g \in \range{n_i}$, $0 \leq h \leq H$ and $1 \leq t \leq T-h$, it holds that $\left\| \fds{h}{\vectorecks_i[g]\cdot \vectorLL_i}{t} \right\|_\infty \leq H\cdot  \left(\alpha_0 H^{3}\right)^h$ since $\eta \leq \frac{\alpha_0}{36e^5m}$ as required by the lemma.  To verify precondition \ref{it:gen-close}, we first confirm that our selection of $\zeta = 7\eta$ places it in the desired interval $[1/(2T), \alpha^4/C_1]$ as $\eta \leq \frac{\alpha^4}{7C_1}$.  By the definition of the \Opthedge updates, for all $i \in [m]$ and $1 \leq t \leq T$, we have $\max \left\{\left\| \frac{\matrixKYU_i\^t[g,\rowdot]}{\matrixKYU_i\^{t+1}[g,\rowdot]}\right\|_\infty, \left\| \frac{\matrixKYU_i\^{t+1}[g,\rowdot]}{\matrixKYU_i\^t[g,\rowdot]}\right\|_\infty\right\} \leq \exp(6\eta)$.  Thus, the sequence $\matrixKYU_i\^1[g,\rowdot], \ldots, \matrixKYU_i\^T[g,\rowdot]$ is $(7\eta)$-consecutively close (since $\exp(6\eta) \leq 1+7\eta$ for $\eta$ satisfying (\ref{eq:eta-reqs})).  Therefore, Lemma \ref{lem:general-bound-l-dl} applies and we have
\begin{align*}
    &\sum_{t=1}^T  \Var{\matrixKYU_i\^t[g,\rowdot]}{\vectorecks_i\^t[g] \cdot \vectorLL_i\^t - \vectorecks_i\^{t-1}[g]\cdot  \vectorLL_i\^{t-1}}\\
    &\leq 2\alpha \sum_{t=1}^T \Var{\matrixKYU_i\^t[g,\rowdot]}{\vectorecks_i\^{t-1}[g] \cdot \vectorLL_i\^{t-1}} + 165120(1+7\eta) H^5+2\\
    &\leq \frac{1}{2} \cdot \sum_{t=1}^T \Var{\matrixKYU_i\^t[g,\rowdot]}{\vectorecks_i\^{t-1}[g] \cdot \vectorLL_i\^{t-1}} + C' H^5
\end{align*}
for $C' = 2 + 165120(1+7/8256) = 165262$, as desired.
\end{proof}

\end{document}